\newcommand\CustomComment[1]{\State $/\!/$ \parbox[t]{\columnwidth}{ #1}}
\DeclareMathSymbol{\shortminus}{\mathbin}{AMSa}{"39}
\newcommand{\R}[2]{{}^{#2}_{#1}\mathbf{R}}
\newcommand{\Rhat}[2]{{}^{#2}_{#1}\hat{\mathbf{R}}}
\newcommand{\ang}[2]{{}^{#2}_{#1}\boldsymbol{\theta}}
\newcommand{\angtilde}[2]{{}^{#2}_{#1}\tilde{\boldsymbol{\theta}}}
\newcommand{\p}[2]{{}^{#2}\mathbf{p}_{#1}}
\newcommand{\phat}[2]{{}^{#2}\hat{\mathbf{p}}_{#1}}
\newcommand{\ptilde}[2]{{}^{#2}\tilde{\mathbf{p}}_{#1}}
\newcommand{\vel}[2]{{}^{#2}\mathbf{v}_{#1}}
\newcommand{\skw}[1]{\lfloor {#1} \rfloor}
\newcommand{\x}[1]{\mathbf{x}_{#1}}
\newcommand{\xhat}[1]{\hat{\mathbf{x}}_{#1}}
\newcommand{\xtilde}[1]{\tilde{\mathbf{x}}_{#1}}
\newcommand{\Exp}[1]{\textrm{Exp}(#1)}
\newcommand{\Log}[1]{\textrm{Log}(#1)}
\newcommand{\Jr}[1]{\mathbf{J_r}(#1)}
\newcommand{\Jl}[1]{\mathbf{J_l}(#1)}
\newcommand{\Jrinv}[1]{\mathbf{J^{\shortminus 1}_r}(#1)}
\newcommand{\Jlinv}[1]{\mathbf{J^{\shortminus 1}_l}(#1)}
\newcommand{\cp}[2]{{}^{#2}\boldsymbol{\Pi}_{#1}}
\newcommand{\cphat}[2]{{}^{#2}\hat{\boldsymbol{\Pi}}_{#1}}
\renewcommand{\H}[1]{\mathbf{H}_{#1}}
\renewcommand{\O}[1]{\mathbf{0}_{#1}}
\newcommand{\I}[1]{\mathbf{I}_{#1}}
\renewcommand{\t}[2]{{}^{#2}{t}_{#1}}
\newcommand{\HS}[1]{\mathbf{H}_{#1}^{S}}
\newcommand{\HA}[1]{\mathbf{H}_{#1}^{A}}
\newcommand{\J}[2]{\frac{\partial #1}{\partial #2}}
\newcommand\undermat[2]{\makebox[0pt][l]{$\smash{\underbrace{\phantom{\begin{matrix}#2\end{matrix}}}_{\text{$#1$}}}$}#2}
\newcommand\scalemath[2]{\scalebox{#1}{\mbox{\ensuremath{\displaystyle #2}}}}
\newtheorem{thm}{Theorem}[section]
\newtheorem{lem}[thm]{Lemma}
\newcommand\BibTeX{{\rmfamily B\kern-.05em \textsc{i\kern-.025em b}\kern-.08emT\kern-.1667em\lower.7ex\hbox{E}\kern-.125emX}}
\begin{document}

\runninghead{Woosik Lee et al.}

\title{MINS: Efficient and Robust Multisensor-aided \\Inertial Navigation System}

\author{Woosik Lee, Patrick Geneva, Chuchu Chen, and Guoquan Huang}

\affiliation{Robot Perception and Navigation Group,
University of Delaware, Newark, DE 19716, USA.
}

\corrauth{Woosik Lee, Department of Mechanical Engineering, University of
Delaware, 126 Spencer Lab, Newark, DE 19716, USA.
}
\email{woosik@udel.edu}

\markboth{Journal of \LaTeX\ Class Files,~Vol.~14, No.~8, August~2023}{Shell \MakeLowercase{\textit{et al.}}: A Sample Article Using IEEEtran.cls for IEEE Journals}

\begin{abstract}
Robust multisensor fusion of multi-modal measurements such as IMUs, wheel encoders, cameras, LiDARs, and GPS holds great potential due to its innate ability to improve resilience to sensor failures and measurement outliers, thereby enabling robust autonomy.
To the best of our knowledge, this work is among the first to develop a consistent tightly-coupled Multisensor-aided Inertial Navigation System (MINS) that is capable of fusing the most common navigation sensors in an efficient filtering framework, 
by addressing the particular challenges of computational complexity, sensor asynchronicity, and intra-sensor calibration.
In particular, 
we propose a consistent high-order on-manifold interpolation scheme to enable efficient asynchronous sensor fusion and state management strategy (i.e. dynamic cloning).
The proposed dynamic cloning leverages motion-induced information to adaptively select interpolation orders to control computational complexity while minimizing trajectory representation errors.
We perform online intrinsic and extrinsic (spatiotemporal) calibration of all onboard sensors to compensate for poor prior calibration and/or degraded calibration varying over time.
Additionally, we develop an initialization method with only proprioceptive measurements of IMU and wheel encoders, instead of exteroceptive sensors, which is shown to be less affected by the environment and more robust in highly dynamic scenarios.
We extensively validate the proposed MINS in simulations and large-scale challenging real-world datasets, outperforming the existing state-of-the-art methods, in terms of localization accuracy, consistency, and computation efficiency.
We have also open-sourced our algorithm, simulator, and evaluation toolbox for the benefit of the community: \url{https://github.com/rpng/mins}.

\end{abstract}

\keywords{
Multisensor system, inertial navigation, sensor fusion, 
sensor calibration, Monte-Carlo analysis}

\maketitle \section{Introduction}

Robust, accurate, real-time localization is a fundamental capability for autonomous vehicles.
This is often addressed by leveraging multisensor fusion approaches, 
in part due to the fact that multi-modal sensors can offer complementary and/or redundant sensing under different environmental conditions \citep{lynen2013robust, hackett1990multi}.
Among all possible navigation sensors, inertial measurement units (IMUs), cameras, wheel encoders, global navigation satellite systems (GNSS), and LiDARs prevail in recent literature \citep{cao2022gvins, lv2023continuous, wang2021gr}.
While it appears to be straightforward in principle to fuse all these heterogeneous sensors to achieve optimal localization performance, there is only limited work that has used greater than three sensing modalities because of the inherent difficulties of robust and efficient multisensor fusion.

As multisensor systems -- in particular, those that forgo hardware-synchronization due to its significant cost -- inevitably produce asynchronous or out-of-sequence data, it remains challenging to optimally model and update such asynchronous and delayed multi-modal information.
The ability to seamlessly handle these is a key attribute of multisensor systems as it endows the systems with greater flexibility to use more sensors.
To this end, multisensor calibration is another essential but difficult component.
Many existing methods assume perfect offline calibration~\citep{sun2022multisensor, nguyen2021viral, meng2017robust}, which, however, inject unmodelled errors into the estimator and thus degrade localization performance if the prior calibration was poor or inevitably changes during long-term operation.
As incorporating more sensors increases not only the number of measurements to process but also the size of the state to be estimated,
highly-efficient fusion of all available multi-modal measurements is always challenging given stringent computational resources and latency requirements. 
Additionally, system initialization is necessary for robust estimation and failure recovery but practically is challenging due to constrained under-actuated motion and short time horizons.

To address the aforementioned challenges, 
in this work, we develop a novel Multisensor-aided Inertial Navigation System (MINS), 
which builds upon the inertial navigation system (INS) as the backbone and 
efficiently fuses the multi-modal measurements of wheel encoders, cameras, LiDARs, and GNSS while performing full online spatiotemporal calibration.
The proposed MINS employs a continuous-time state representation to reduce the computational complexity of asynchronous and delayed measurements and proposes robust initialization in challenging dynamic scenes, while achieving accurate real-time localization.

In particular, the main contributions of this work include:
\begin{itemize}

\item We, for the first time, design an efficient and robust filtering-based MINS estimator that fuses five most commonly-seen navigation sensors in a tightly-coupled fashion, including an IMU, a pair of wheel encoders, and an arbitrary number of cameras, LiDARs, and GNSS receivers. 
Thanks to the complementary and redundant sensing capabilities, the proposed approach is resilient to sensor failures and measurement depletion (e.g. in a dark room for cameras, open field for LiDARs, or indoors for GNSS) as well as measurement outliers, 
enabling it to output continuous and uninterrupted estimates for downstream applications.

\item The proposed MINS introduces a consistent high-order on-manifold state interpolation methodology to process asynchronous and delayed measurements, which enables flexibility in adding auxiliary sensors while ensuring computational efficiency. 
Furthermore, we investigate the long-standing loss of trajectory fidelity introduced by continuous-time representations and reformulate the measurement model to account for trajectory representation errors.
A dynamic cloning algorithm which adaptively balances the computational complexity, estimation accuracy, and trajectory representation errors is developed and validated.

\item The proposed MINS performs online intrinsic and extrinsic (spatiotemporal) calibration of all sensors onboard, which enables the system to be robust to poor prior calibration or time-varying calibration. 
Moreover, we develop a robust initialization technique using only the IMU and wheel-encoder proprioceptive measurements for the case of high speed and dynamic object filled environments, where conventional methods which leverage exteroceptive sensors fail to initialize.

\item We validate the proposed MINS extensively in both realistic simulations and large-scale real-world datasets,
showing that the proposed method can achieve high efficiency, robustness, consistency, and accuracy in many challenging scenarios.
We have also open-sourced our codebase, simulator, and evaluation toolbox for the benefit of the research community.
\end{itemize}

\begin{figure}[t]
\centering
\includegraphics[trim=0mm 0mm 0mm 0mm,clip,width=1\columnwidth]{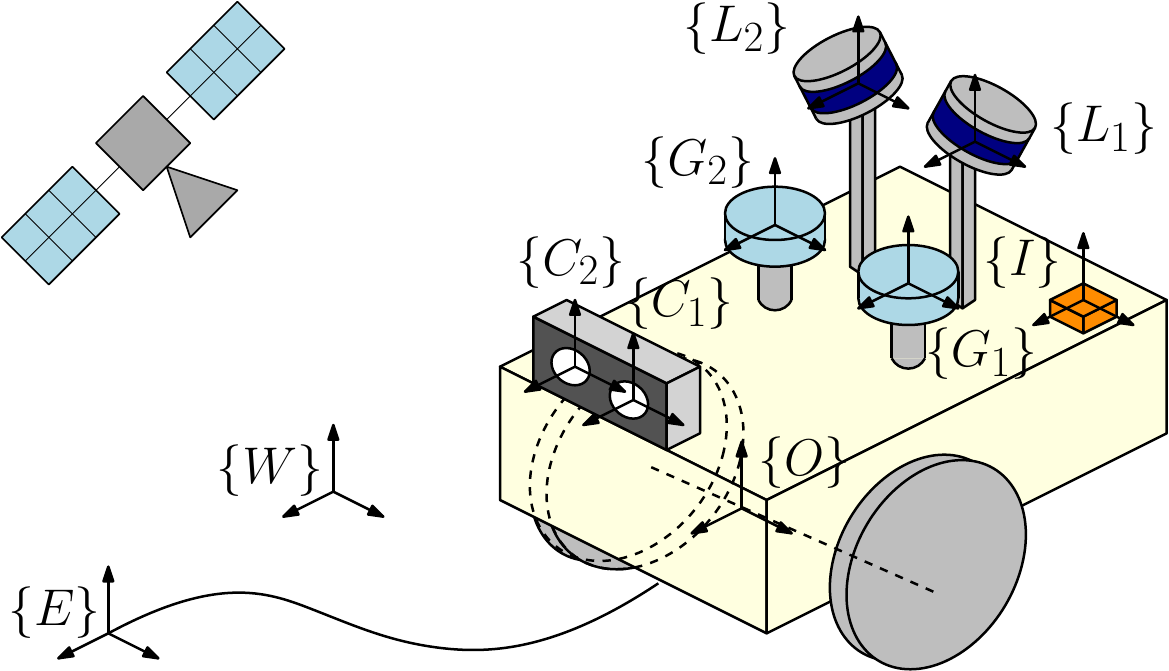}
\caption{Frames used in the proposed MINS. In this example, 2 cameras $\{C\}$, 2 GNSS sensors $\{G\}$, 2 LiDARs $\{L\}$, 1 IMU $\{I\}$, and 1 wheel odometry frame $\{O\}$ are shown along with local world frame $\{W\}$ and global ENU frame $\{E\}$.}
\label{fig:frames}
\end{figure} 

\section{Related Work}

Although there is a rich literature on multi-sensor fusion \citep{chen2022slam, xu2022review},
we here only review  the most relevant multi-sensor systems with three or more sensing modalities,
while focusing on sensor asynchronicity, delayed measurements, and online sensor calibration.

\subsection{Estimation with Three Sensing Modalities}
Visual-Inertial Navigation Systems (VINS) that combine the inertial reading of the IMU and the visual bearing information of the camera have become popular due to their complementary nature \citep{Huang2019ICRA}.
Many multi-sensor navigation systems encompass VINS as a key module of the whole system due to its compactness and rich information.

\subsubsection{LiDAR-VINS} ~~\\
LiDAR is one of the popular sensors to be fused with VINS due to its measurement being invariant to the light and its ability to provide depth information directly, which can compensate for the degeneracy of the camera in many scenarios, such as in a dark cave or facing featureless wall \citep{xu2022review}.
V-LOAM \citep{zhang2018laser} is one of the early works that loosely couples a LiDAR to the camera and IMU frame-to-frame leveraging VINS to get the initial matching of the LiDAR point cloud. 
\citet{wang2019robust} improved this by designing a graph-based estimator including loop closure, while similar approaches can be found in many other works \citep{camurri2020pronto, khattak2020complementary}. 
A focus of loosely coupled systems has been to address the degeneracy issues of the independent camera and LiDAR odometry algorithms.
For example LiDAR (+ IMU) odometry is combined with VINS in a graph formulation as separate sub-systems to enable robustness \citep{shan2021lvi, shao2019stereo, zhao2021super}.
Many studies adapt filtering frameworks to design more tightly coupled systems which come with the estimate consistency and accuracy gains.
MSCKF-based \citep{Mourikis2007ICRA} designs are popular and adapted by many works to great success \citep{Zuo2019IROS, Zuo2020IROS, Lee2021ICRA, Yang2019IROS}.
In those works, features, such as planes or lines, are extracted from the LiDAR measurements and fused with camera measurements in a tightly coupled way.
Similarly, the error state iterated Kalman filter (ESIKF) is adapted \citep{lin2021r, lin2022r, zheng2022fast} while fusing LiDAR without extracting the features (a.k.a. direct method) was also studied \citep{lin2022r, zheng2022fast}.

\subsubsection{GNSS-VINS} ~~\\
A GNSS sensor directly provides absolute position information, although the accuracy of it is highly dependent on the surrounding environment and the availability of external correction data.
For these reasons, GNSS fusion with VINS has been investigated to provide locally precise and globally accurate localization.
There are many works fusing GNSS with VINS in a graph formulation via both loosely \citep{qin2019general, merfels2016pose, merfels2017sensor} and tightly \citep{mascaro2018gomsf, cioffi2020tightly, rehder2012global, wang2021direct, han2022tightly}.
One of the most popular graph-based works is VINS-Fusion \citep{qin2019general} which takes both VINS estimation and GNSS measurements to build a pose graph for optimization.
However, GNSS measurements are often intermittent and asynchronous to other sensors which max graph construction a complex design.
\citet{cioffi2020tightly} addressed this issue by creating synthetic synchronized global measurements using both IMU and GNSS measurements, but introduced inconsistent measurement re-use of inertial information.
There exist many filter-based techniques \citep{lee2016camera, ramanandan2019systems, chambers2014robust, lynen2013robust} which fused GNSS with VINS, with \citet{Lee2020ICRA} being the first to address the asynchronicity issue by interpolating historical stochastically cloned IMU poses.
More recent works have focused on fusing lower-level GNSS measurements (GNSS satellite signals) with VINS \citep{dong2022tightly, liu2021optimization, Lee2022ICRA, cao2022gvins} showing robustness in limited FOV scenarios and improved localization performance.

\subsubsection{Wheel-VINS} ~~\\
Many of the simultaneous localization and mapping (SLAM) applications are on ground vehicles. 
Therefore, it appears to be straightforward in principle to fuse the wheel (encoder) with other sensors for ground robot navigation.
Especially for VINS on wheeled robots, it is known that their estimated state can have addititional unobservable directions beyond the standard 4 degrees of freedom (DOF) depending on the motion \citep{wu2017vins} (calibration parameters have additional challenges, \citep{Yang2019RAL}).
Thus the fusion of additional scale information from other sensors is necessary to build a consistent estimator.
To this end, wheel odometry fusion with VINS has been investigated in both graph form \citep{kang2019vins, dang2018tightly, zheng2019visual, liu2021bidirectional, Zuo2019ISRR} and the filter form \citep{wu2017vins, Lee2020IROS, serov2021visual, ma2019ack, ma2020consistent}. 

One of the problems of wheel fusion is that the readings typically provide 2D motion information while the estimated robot state is generally in 3D space.
Many studies address this issue through either tightly coupled integration of inertial and the wheel in 3D \citep{liu2019visual, su2021gr, quan2019tightly, liu2021bidirectional, jung2020monocular}, projecting the 3D state onto the local 2D plane \citep{wu2017vins, Geneva2020IROS, Lee2020IROS}, or applying planar motion constraint \citep{kang2019vins, zheng2019visual}.
Leveraging the ground shape information, such as plane model \citep{wu2017vins} or quadratic polynomial \citep{zhang2021pose, Zuo2019ISRR}, has shown to improve the robot pose estimation.
The most common kinematic model of the wheel adapted is the differential drive \citep{wu2017vins, Lee2020IROS} while other models, such as 2 DOF vehicle dynamics model \citep{kang2019vins}, Ackermann model \citep{ma2019ack, ma2020consistent}, or ICR model \citep{Zuo2019ISRR} are used for different types of platforms.

{
\newcommand{\AS}{\begin{tabular}{@{}c@{}} Asynchronous Sensors \end{tabular}}
\newcommand{\DM}{\begin{tabular}{@{}c@{}} Delayed Measurements \end{tabular}}
\newcommand{\OC}{\begin{tabular}{@{}c@{}} Online Calibration \end{tabular}}
\newcommand{\OA}{\begin{tabular}{@{}c@{}} Dynamic \\ Initialization \end{tabular}}
\newcommand{\tmt}[2]{\multirow{#1}{*}{\rotatebox[origin=c]{90}{#2}}}
\newcommand{\tmtt}[2]{\multirow{#1}{*}{\rotatebox[origin=c]{0}{#2}}}
\begin{table*}
\begin{threeparttable}
\centering
\caption{Summary of the state-of-the-art multisensor navigation systems} 
\setlength\tabcolsep{4.8pt}
\begin{tabular}{c|c|c|c|c|c|c||c|c|c} \toprule
    & \multirow{2}{*}{Algorithms} & \multirow{2}{*}{IMU} &  \multirow{2}{*}{Camera} &  \multirow{2}{*}{GNSS} &  \multirow{2}{*}{Wheel} &  \multirow{2}{*}{LiDAR} & Asynchronous & Delayed & Online \\
      &  &        &     &       &      &      &   Sensors         &   Measurements         &    Calibration    \\ \midrule
\tmt{13}{Filter}& \textbf{MINS} (proposed)                   & 1         & N         & N         & 1         & N         & $\bigcirc$    & $\bigcirc$    & $\bigcirc$  \\
                & \citet{kubelka2015robust}  & 1         & 1         & $\times$  & 1         & 1         & $\times$      & $\times$      & $\times$      \\
                & \citet{simanek2015improving}& 1        & 1         & $\times$  & 1         & 1         & $\times$      & $\times$      & $\times$      \\
                & \citet{meng2017robust}        & 1         & $\times$  & 1         & 1         & 1         & $\times$      & $\times$      & $\times$      \\ 
                & \citet{suhr2016sensor}        & 1         & 1         & 1         & 1         & $\times$  & $\times$      & $\times$      & $\times$      \\ 
                & MSF-EKF             & 1         & 1         & N         & $\times$  & $\times$  & $\bigcirc$    & $\bigcirc$    & $\triangle$   \\ 
                & \citet{hausman2016self}    & 1         & 1         & 1         & $\times$  & $\times$  & $\bigcirc$    & $\bigcirc$    & $\triangle$   \\ 
                & \citet{tessier2006real}    & 1         & $\times$  & 1         & $\times$  & 1         & $\bigcirc$    & $\bigcirc$    & $\times$      \\ 
                & MaRS               & 1         & 2         & 1         & $\times$  & $\times$  & $\bigcirc$    & $\bigcirc$    & $\triangle$   \\ 
                & Pronto           & 1         & 2         & $\times$  & 1         & 1         & $\bigcirc$    & $\bigcirc$    & $\times$      \\
                & \citet{shen2014multi}         & 1         & 2         & 1         & $\times$  & 1         & $\bigcirc$    & $\bigcirc$    & $\times$      \\
                & \citet{wu2022multi}             & 1         & 2         & 1         & 1         & 1         & $\bigcirc$    & $\bigcirc$    & $\times$      \\ 
                & \citet{Lee2021ICRA}            & 1         & 2         & 1         & 1         & 1         & $\bigcirc$    & $\bigcirc$    & $\bigcirc$    \\  \midrule
\tmt{9}{Graph}  & VINS-Fusion          & 1         & 2         & N         & $\times$  & $\times$ & $\times$      & $\bigcirc$    & $\triangle$   \\
                & SVIn2              & 1         & N         & $\times$  & $\times$  & $\times$  & $\times$      & $\bigcirc$    & $\times$      \\
                & Lvio-Fusion             & 1         & 2         & 1         & $\times$  & 1         & $\times$      & $\bigcirc$    & $\times$      \\
                & GR-Fusion               & 1         & 2         & 1         & 1         & 1         & $\times$      & $\bigcirc$    & $\triangle$   \\
                & \citet{chiu2014constrained}   & 1         & N         & 1         & $\times$  & 1         & $\bigcirc$    & $\bigcirc$    & $\times$      \\
                & VIRAL SLAM          & 1         & 2         & $\times$  & $\times$  & N         & $\triangle$   & $\bigcirc$    & $\times$      \\
                & VIRAL-Fusion  & 1         & N         & $\times$  & $\times$  & N         & $\bigcirc$   & $\bigcirc$    & $\times$      \\
                & VILENS              & 1         & 2         & $\times$  & 1         & 1         & $\bigcirc$    & $\bigcirc$    & $\times$      \\
                & VIL-Fusion       & 1         & 2         & 1         & $\times$  & 1         & $\bigcirc$    & $\bigcirc$    & $\times$      \\
\bottomrule
\end{tabular}
\label{tab:related}
\begin{tablenotes}\footnotesize
\raggedleft \item[*] N: arbitrary number of sensors supported, $\bigcirc$: fully supported, $\triangle$: partially supported, $\times$: not supported.
\end{tablenotes}
\end{threeparttable}
\end{table*}
}

\subsection{Estimation with Four or More Sensing Modalities} \label{ch:4_systems}

Despite many advantages, such as robustness, accuracy, and applicability, a relatively small number of systems equipped with 4 or more sensor modalities have been introduced in part due to the difficulty of handling large computation and sensor measurements that are either asynchronous, delayed, or both.
However, the ability to properly handle asynchronous and delayed sensor measurements is important for multi-sensor systems because it enables the system to flexibly add an arbitrary number of sensors that are either homogeneous or heterogeneous, model measurements more precisely, and include all the information without naively dropping it.
Table \ref{tab:related} summarizes the related literature.

\subsubsection{Graph-based Systems}~~\\
The graph formulations can easily handle the delayed measurements owing to their nature of carrying all/part of the history of measurements which allows for delayed states to be inserted into the optimization problem \citep{chiu2014constrained}.
The more recent VINS-Fusion \citep{qin2019general} employs a pose graph which loosely couples VINS (tightly coupled IMU and camera) with other global sensors, such as GNSS, magnetometer, or barometer but assumes sensors are perfectly synchronized.
There exist other graph formulations, such as SVIn2 \citep{rahman2022svin2}, which fused IMU, camera, water pressure sensor, and sonar to perform underwater SLAM in pose graph formulations, Lvio-fusion \citep{jia2021lvio} tightly fused IMU, camera, and LiDAR for trajectory estimation and in turn, loosely fused GNSS in a pose graph, or GR-Fusion \citep{wang2021gr}, which tightly fused IMU, camera, LiDAR, wheel, and GNSS in the graph along with ground estimation and mapping.
However, all of these works assumed the measurements are perfectly synchronized which prevents their plug-and-play use.
The direct extension of these graph-based methods to asynchronous systems requires the estimation of the state at every measurement time, which requires techniques to bound computation, such as performing updates with only a subset of sensors \citep{chiu2014constrained}, to maintain real-time performance.
Another challenge is the inclusion of temporal calibration parameter of sensors, which typically has been ignored or handled by hardware-synchronization (i.e. PTP network time protocol \citep{eidson2002ieee}).

On the other hand, VIRAL SLAM \citep{nguyen2021viral} which fused IMU and stereo camera with multiple UWBs and LiDARs handled asynchronicity by synchronizing all the sensors to the base LiDAR measurement time.
To do so, IMU and UWBs are interpolated, non-base LiDAR point clouds are unwarped to the base LiDAR measurement time, and camera measurements that are closest to the time are admitted while others are dropped.
The system is improved to a more general multi-sensor system, VIRAL-Fusion \citep{nguyen2021viralfusion}, synchronizing the sensors by creating the state independent of the sensor measurement timings, interpolating the states to handle UWB measurements, and generating synthetic measurements at the state timestamp with measurement interpolation.
Similarly, VILENS \citep{wisth2022vilens} which fused IMU, leg odometry, camera, and LiDAR resolved the asynchronicity issue by synchronizing all the sensors to the camera time, interpolating IMU and leg measurements, and unwarping LiDAR point cloud to the camera timestamp using the integrated IMU factor.
VIL-Fusion \citep{sun2022multisensor} which fused IMU, camera, LiDAR, and GNSS also synchronized IMU, camera, and LiDAR in the same fashion while linear interpolation was used to handle GNSS.
While these works look to address the asynchronicity problem without increasing the computational complexity, they introduce an additional unmodeled source of error from the approximation.

\subsubsection{Filter-based Systems}~~\\
Compared to graph-based systems, filter-based methods remain popular due to their efficiency, which is crucial to computationally complex multi-sensor systems.
However, handling asynchronous and delayed sensor measurements presents challenges as filters only keep the latest information and maybe a short period of historical stochastic poses.
In many early filtering approaches, such as \citet{kubelka2015robust} and \citet{simanek2015improving} which fused IMU, omnidirectional camera, wheel, and LiDAR in EKF, \citet{meng2017robust} which fused IMU, GNSS, wheel, and LiDAR in unscented Kalman filter (UKF), or \citet{suhr2016sensor} which fused IMU, camera, GNSS, wheel, symbolic road map in particle filter (PF), the sensor measurements are assumed to be synchronized and time-ordered which injects unmodeled error to the estimator in case the assumption does not hold.

MSF-EKF \citep{lynen2013robust} is among the first work to fuse generic relative and absolute measurements from IMU, camera, GNSS, and pressure sensor in EKF and also handled asynchronous and delayed sensor measurements.
In their work, the current state is re-predicted forward in time from the queried state when a delayed measurement comes in, which requires all subsequent measurement updates to be reapplied, and asynchronicity is handle by propagating to the temporally closest queried state to the measurement time
Similarly spirited works that handle asynchronous and delayed sensor measurements are by \citet{hausman2016self} which fused IMU, camera, GNSS, and UWB in iterated EKF, \citet{tessier2006real} which fused magnetometer, GNSS, radar, and gyroscope in PF, MaRS \citep{brommer2020mars} which fused IMU, stereo camera, GNSS, barometer in EKF, and Pronto \citep{camurri2020pronto} which fused IMU, leg odometry, camera, and LiDAR in EKF keeping 10 seconds of worth measurements, prior \& posterior states, and the corresponding covariances.
However, keeping the previous states and the measurements, and re-processing those measurements every time a delayed measurement is received, incurs a higher than needed computational cost which may result in losing real-time estimation performance.

One of the popular methods to improve efficiency by avoiding this re-computation is using a buffer to hold the measurements and processing them in a time-ordered manner.
\citet{shen2014multi} fused IMU, camera, GNSS, 2D LiDAR, pressure altimeter, and magnetometer in UKF buffering 0.1 seconds of the measurements to handle asynchronous and delayed sensor measurements.
A similar approach can be found in the work of \citet{wu2022multi} which fused IMU, stereo camera, wheel, LiDAR, and AHRS via a system which is composed of 3 subsystems (wheel+AHRS, VINS, LiDAR odometry) where the sensor measurements are abstracted into pose or velocity information and fused in EKF formulation.
The buffer logic is useful to sort the unordered measurements, but the buffering window cannot be too large because it delays every measurement update, thereby introducing a large latency to the real-time robotic system and is unacceptable in safety critical systems.
Additionally, fusing sensors with large delays may not be feasible with a buffer approach (e.g. GNSS measurement delay can be on the order of 0.85 seconds \citep{Lee2020ICRA}).

\subsubsection{Continuous-time Estimation} ~~\\
An alternative paradigm that elegantly allows the incorporation of asynchronous and delayed sensor measurements is the continuous-time estimation which represents the robot state at an arbitrary time using temporal basis functions. 
A popular temporal basis functions is the uniform B-spline \citep{furgale2013unified, zheng2015decoupled, droeschel2018efficient, cioffi2022continuous, lv2022observability} while there are other approaches, such as Gaussian process \citep{barfoot2017state, anderson2015batch}, wavelets \citep{anderson2014hierarchical}, or Taylor-series \citep{li2014vision}.
However, uniform B-spline, in general, requires a fixed frequency between control nodes and requires sufficiently high frequency to ensure accurate trajectory fidelity in highly dynamic motions.
Too high of control node frequency may inefficiently increase computational complexity due to the estimation of a larger state vector, while a too low frequency may hurt the estimation performance due to inaccurate trajectory representation (see \citep{cioffi2022continuous} for comparison to discrete-time formulations).
In part due to these reasons, many methods that utilize B-spline for the continuous-time estimation are offline \citep{rehder2016extending, furgale2012continuous, lang2022ctrl} or are applied to small systems \citep{mueggler2018continuous, quenzel2021real, rodrigues2021online}.

Recently, MIMC-VINS \citep{Eckenhoff2021TRO} investigated a trajectory formulation using high-order polynomials over stochastic poses. 
The major benefits of this method are that it can handle a non-uniform temporal pose distribution and has a low computational cost to reconstruct different orders of polynomials as the exact robot poses are the control nodes.
The 1-order linear interpolation variant of this has been leveraged by many works to great success \citep{paul2018alternating, Geneva2018ICRA, Lee2020ICRA, sun2022multisensor}.
Delayed measurements can be naturally incorporated since the pose at \textit{any} time instance within the historical stochastic clone window can be found.
However, in the case of highly dynamic motions, the polynomial trajectory representation still has a loss of fidelity, as do temporal basis functions, as it is still limited by the clone rate and can introduce unmodeled errors.
We will directly investigate the magnitude and impact of these unmodeled errors and properly model their characteristics in the proposed MINS to enable consistent estimation.

\subsection{Online Sensor Calibration}

\begin{figure*}[t]
\centering
\includegraphics[trim=11mm 90mm 41mm 35mm,clip,width=1\textwidth]{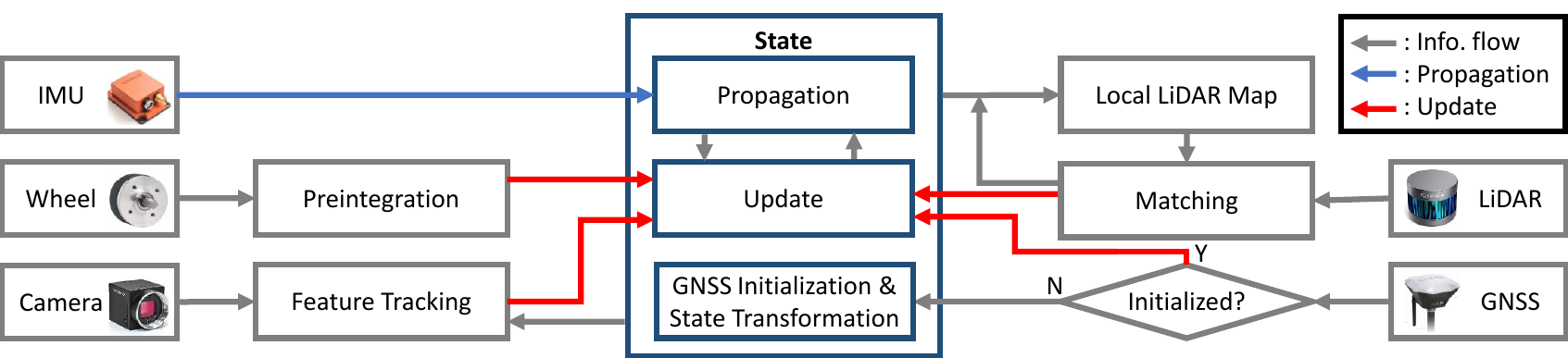}
\vspace{-5mm}
\caption{Overview of the proposed MINS showing each sensor component and their roles in state estimation.}
\label{fig:system}
\end{figure*}

Calibration methods can be classified into two methodologies: offline and online.
Offline methods estimate the calibration parameters in batch optimization fashion with the aim to provide the most accurate results at the cost of high computation and time.
There are few works that jointly calibrate 4 or more sensing modalities offline, such as a general framework \citep{rehder2016general} for the calibration of the spatiotemporal parameters, iCalib \citep{Yang2021ICRAws} for IMU, camera, LiDAR, and wheel calibration, or ATOM \citep{oliveira2022atom} for the camera, LiDAR, and RaDAR calibration.
One can find more offline calibration methods in pair-wise systems, such as Kalibr \citep{rehder2016extending}, COC \citep{Heng2013IROS}, MSG-cal \citep{owens2015msg}, or MVIS \citep{yang2023multivisualinertial}.
However, the offline calibration results may still have some errors that may result in estimator inconsistency if not properly modeled and are unable to cope with only platform reconfigurations.
To this end, instead of blindly trusting them as true, many multi-sensor systems append the calibration parameters in the state to improve robustness by modeling their uncertainties and jointly estimating them while performing navigation which is called online calibration.

Among those 4 or more sensor fusion systems introduced (Sec.~\ref{ch:4_systems}), MSF-EKF \citep{lynen2013robust}, MaRS \citep{brommer2020mars}, and the work of \citet{hausman2016self} supports the spatial calibration of onboard sensors while VINS-Fusion \citep{qin2019general, xu2022review} only supports the spatiotemporal extrinsic calibration between the camera and IMU.
GR-Fusion \citep{wang2021gr} further calibrated the spatiotemporal extrinsic of the LiDAR, intrinsic and spatial extrinsic of the wheel, and temporal extrinsic of the camera.
However, the camera intrinsic and spatial extrinsic calibration was done offline and the temporal offset of the wheel was not calibrated.
Our previous work \citep{Lee2021ICRA} was the only work that showed full spatiotemporal calibration of all onboard sensors (camera, GNSS, wheel, and LiDAR), but it was limited to a single sensor per modality besides cameras.
There exist systems that fully/partially calibrate the calibration parameters of smaller sensing modalities online, such as LiDAR-camera-IMU \citep{Zuo2019IROS, Zuo2020IROS, Lee2021ICRA, ye2021keypoint}, GNSS-camera-IMU \citep{girrbach2019towards, Lee2020ICRA, han2022tightly}, wheel-camera-IMU \citep{liu2019visual, jung2020monocular, Lee2020IROS}.

\subsection{Extension of Our Previous Publications}
While this article evolved from our prior works \citep{Lee2020ICRA,Lee2020IROS,Lee2021ICRA, Geneva2020ICRA,Eckenhoff2021TRO}, there are significant contributions differentiating this work.
To be more specific, MINS leverages the high-order polynomial ideology of MIMC-VINS to enable high-frequency fusion of asynchronous and delayed arbitrary numbers of sensors within a unified continuous-time architecture.
On top of this, we provide a thorough investigation of the error induced by the interpolation, propose the incorporation of the error model that improves system consistency, and further extend this idea to dynamic cloning to enable significant computational saving while maintaining localization accuracy.
MINS enables spatiotemporal calibration of all onboard sensors,
which differentiates MINS from OpenVINS \citep{Geneva2020ICRA} which only focused on VIO intrinsics and spatiotemporal and \citet{Lee2020ICRA} which performed GNSS fusion with linear interpolation alongside VIO.
Compared to VIWO \citep{Lee2020IROS} which presented wheel encoder angular velocity fusion with VINS, MINS is additionally capable of fusing linear velocities of the wheel encoder and angular \& linear velocities of wheel odometry frame along with planar motion constraint.
Furthermore, this paper provides a novel IMU-wheel-based system initialization technique that is robust to very challenging scenarios.
The LiDAR fusion technique of MINS has been completely redesigned as compared to \citet{Lee2021ICRA} to enable multi-LiDARs fusion along with spatiotemporal calibration.
MINS adapts the direct method \citep{xu2022fast} of alignment instead of extracting LOAM features, improving the efficiency and consistency of the update process, and adding a mapping functionality.
We additionally stress that the comprehensive accuracy, consistency, computation, and ablation studies of each part of the system in realistic simulations and in real-world experiments are non-trivial for multi-sensor fusion frameworks.

 \section{MINS State Estimation}

Building upon the backbone INS, the proposed MINS propagates the state with the IMU measurements
and updates in an efficient filtering framework with multi-modal measurements of a number of exteroceptive sensors such as cameras, LiDARs, GNSS receivers, and wheel encoders.
As shown in Fig.~\ref{fig:frames}, we use 
$\{I\}$ to denote the IMU frame,
$\{C\}$ for the camera frame, 
$\{O\}$ for the wheel odometry frame,  
$\{L\}$ for the LiDAR frame, 
$\{G\}$ for the GNSS frame, 
while $\{E\}$ for the East-North-Up (ENU) frame corresponding to the GNSS.
Fig.~\ref{fig:system} depicts the the proposed MINS architecture.

Extended from the MSCKF-based VINS~\citep{Mourikis2007ICRA,Geneva2020ICRA}, 
the state vector of the proposed MINS  includes the IMU navigation state $\mathbf{x}_{I_k}$  and a set of historical IMU poses $\x{H_k}$ (a.k.a. clones):
\begin{align}
    \mathbf{x}_k &= (\x{I_k}, ~\x{H_k}) \label{eq:state}\\
\x{I_{k}} &= (\R{E}{I_{k}}, ~\p{I_{k}}{E}, ~\vel{I_{k}}{E}, ~\mathbf{b}_{g}, ~\mathbf{b}_{a}) \label{eq:state_imu}\\
\x{H_k} &= (\R{E}{I_{k \shortminus 1}}, ~\p{I_{k \shortminus 1}}{E}, ~\cdots, ~ \R{E}{I_{k \shortminus h}}, ~\p{I_{k \shortminus h}}{E}) \label{eq:state_clones}
\end{align}
where $\R{A}{B}$ is a rotation matrix from $\{A\}$ to $\{B\}$,
$\p{A}{B}$ and $\vel{A}{B}$ are the position and linear velocity of $\{A\}$ in $\{B\}$,
$\mathbf{b}_{g}$ and $\mathbf{b}_{a}$ are the biases of the gyroscope and accelerometer.
We define $\mathbf{x} = \hat{\mathbf{x}} \boxplus \tilde{\mathbf{x}}$, 
where $\mathbf{x}$ is the true state, $\hat{\mathbf{x}}$ is its estimate,  $\tilde{\mathbf{x}}$ is the error state, and the operation $\boxplus$ which maps the error state  to its corresponding manifold \citep{Hertzberg2011}.
Note that we represent the state variables in $\{E\}$ as the ``global'' GNSS coordinate after the system is initialized using the available GNSS measurements (see Sec.~\ref{sc:gnss_initialization}), 
otherwise, the state is expressed in the local world frame $\{W\}$ 
(which is often the ``global'' frame in  local navigation systems \citep{qin2018vins, xu2022fast}).

In what follows, we present in detail each of the five sensing modalities and the proposed fusion methods to propagate and update the state with their measurements.
 \subsection{IMU} \label{ch:IMU}

High-rate IMU measurements typically include the angular velocity $\bm \omega_m$ and local linear acceleration $\mathbf{a}_m$:
\begin{align}
\bm\omega_{m_k} &=  \bm \omega_k + \mathbf{b}_g +\mathbf{n}_g \\
\mathbf{a}_{m_k} &= \mathbf{a}_k + \R{E}{I_k} {}^E\mathbf{g} + \mathbf{b}_a + \mathbf{n}_a
\end{align}
where $\bm \omega_k$ and $\mathbf{a}_k$ are the true angular velocity and linear acceleration at time $t_k$; $\mathbf{b}_g$ and $\mathbf{b}_a$ are the biases of the gyroscope and the accelerometer; ${}^E\mathbf{g} \simeq [0~0~9.81]^\top$ is the global gravity; $\mathbf{n}_g$ and $\mathbf{n}_a$ are zero mean Gaussian noises. 
These measurements are used to propagate the IMU state $\x{I}$ from $t_k$ to $t_{k+1}$ based on the following generic nonlinear kinematic model \citep{Trawny2005_Q_TR}:
\begin{align} \label{eq:propagation}
\xhat{I_{k+1}} = f(\xhat{I_{k}}, \mathbf{a}_{m_k}, \bm\omega_{m_k})
\end{align}
In contrast, the historical pose state $\x{H_k}$ is static and does not evolve over time.
Thus, we only need to linearize Eq.~\eqref{eq:propagation} 
to propagate the corresponding covariance matrix, $\mathbf{P}_{I_{k}}$, of the IMU state, as follows:
\begin{align}
\hspace{-2mm}
    \mathbf{P}_{I_{k+1}} = \bm\Phi_I(t_{k+1},t_k) \mathbf{P}_{I_{k}} \bm\Phi_I(t_{k+1},t_k)^\top + \mathbf{G}_k \mathbf{Q}_d \mathbf{G}_k^\top
\hspace{-2mm}
\end{align}
where 
$[\mathbf{n}_g^\top ~ \mathbf{n}_a^\top ~ \mathbf{n}_{\omega g}^\top ~ \mathbf{n}_{\omega a}^\top]^\top \sim \mathcal{N}(\mathbf{0}, \mathbf{Q}_d)$; $\mathbf{n}_{\omega g}$ and $\mathbf{n}_{\omega a}$ are the white Gaussian noises of gyroscope and accelerometer bias random walk model; 
$\bm{\Phi}_I(t_{k+1},t_k)$ and 
$\mathbf{G}_{k}$ are the Jacobians of $f(\cdot)$ respect to the state and noise vector, respectively,
which can be found in \citep{hesch2012observability}.
 \subsection {Camera} \label{ch:Camera}

Consider a 3D feature is detected from an arbitrary camera image at time $t_k$, whose $uv$ measurement (i.e. the corresponding pixel coordinates) on the image plane is given by (see \citet{Geneva2020ICRA}):
\begin{align}
    \mathbf{z}_{C}
&= \mathbf h_C(\mathbf x_k) + \mathbf{n}_{C} \label{eq:cam_meas_general} \\
&= \mathbf h_d(\mathbf h_\rho(\mathbf h_t(\p{F}{E}, ~\R{E}{C_k}, ~\p{C_k}{E})), ~\x{CI}) + \mathbf{n}_{C} \label{eq:cam_meas_nested}
\end{align}
where $\mathbf{n}_{C}$ is the zero mean white Gaussian noise;
$\mathbf z_{n}$ is the normalized undistorted $uv$ measurement;
$\x{CI}$ is the camera intrinsic parameters such as focal length and distortion parameters;
$\p{F}{E}$ is the feature position in global;
$(\R{E}{C_k}, ~\p{C_k}{E})$ denotes the current camera pose in the global.
In the above expression, we decompose the measurement function into multiple concatenated functions corresponding to different operations, which map the states into the raw $uv$ measurement on the image plane.
Each function (i.e. $\mathbf{h}_d,~ \mathbf{h}_\rho,~ \mathbf{h}_t$) is explained in the following:

\subsubsection[Distortion Function]{Distortion Function $\mathbf{h}_d$} \label{ch:cam_distortion}
To get the normalized coordinates of the 3D feature on the image plane $\mathbf{z}_{n} = [ x_n ~ y_n ]^\top$, we apply a distortion model which depends on the camera lens type.
To be more specific, MINS supports radial-tangential and fisheye camera models \citep{kannala2006generic} which map the normalized coordinates into the raw pixel coordinates.
As an example, we employ distortion function $\mathbf{h}_d$ with the radial-tangential model:
\begin{align}
    \begin{bmatrix} u \\ v \end{bmatrix} := ~
\mathbf{z}_{C} = \mathbf h_d(\mathbf{z}_{n}, ~\x{CI}) 
= \begin{bmatrix}  f_x x + c_x \\
    f_y y + c_y \end{bmatrix} \label{eq:cam_distortion_function}
\end{align}
where
\begin{align}
&\hspace{-2mm}x = x_n (1 + k_1 r^2 + k_2 r^4) + 2 p_1 x_n y_n + p_2(r^2 + 2 x_n^2) \hspace{-2mm}\\
&\hspace{-2mm}y = y_n (1 + k_1 r^2 + k_2 r^4) + p_1 (r^2 + 2 y_n^2) + 2 p_2 x_n y_n \hspace{-2mm}\\
&\hspace{-2mm}r^2 = x_n^2 + y_n^2    \hspace{-2mm}\\
   &\hspace{-2mm}\x{CI} = (f_x, ~ f_y, ~ c_x, ~ c_y, ~ k_1, ~ k_2, ~ p_1, ~ p_2)\hspace{-2mm}
\end{align}
See \citet{kannala2006generic} for the camera intrinsic parameter $\x{CI}$ definition.

\subsubsection [Perspective Projection Function]{Perspective Projection Function $\mathbf{h}_\rho$}
The standard pinhole camera model is used to project a 3D point $\p{F}{C_k} = \begin{bmatrix} {}^{C_k}x & {}^{C_k}y & {}^{C_k}z \end{bmatrix}^\top$ into the normalized image plane (with unit depth):
\begin{align}
    \mathbf{z}_{n} &= \mathbf h_\rho  (\p{F}{C_k}) =
\begin{bmatrix}
    {}^{C_k}x/{}^{C_k}z \\
    {}^{C_k}y/{}^{C_k}z
    \end{bmatrix}
\end{align}

\subsubsection[Euclidean Transformation]{Euclidean Transformation $\mathbf{h}_t$}
We employ the 6DOF rigid-body Euclidean transformation to transform the 3D feature position in $\{E\}$ to the current camera frame $\{C_k\}$ based on the current camera pose:
\begin{align}
    \p{F}{C_k} 
    &= \mathbf h_t (\p{F}{E},~ \R{E}{C_k},~ \p{C_k}{E})  \label{eq:cam_euclidean_trans}\\
    &= \R{E}{C_k}(\p{F}{E} - \p{C_k}{E})    
\end{align}
where we in turn represent the camera pose $(\R{E}{C_k}, \p{C_k}{E})$ using camera extrinsic $\x{CE} = (\R{I}{C}, ~ \p{I}{C})$ and IMU pose $(\R{E}{I_k}, \p{I_k}{E})$:
\begin{align}
    \R{E}{C_k} &= \R{I}{C}\R{E}{I_k}\\
    \p{C_k}{E} &= \p{I_k}{E} + \R{I_k}{E} \p{C}{I}
\end{align}

\subsubsection{MSCKF Update}
To perform the MSCKF update \citep{Mourikis2007ICRA}, 
we linearize the measurement function (see Eq.~\eqref{eq:cam_meas_general}) at current state estimate $\xhat{k}$ and $\phat{F}{E}$ to get the following residual:
\begin{align}
\tilde{\mathbf{z}}_{C} :=&~ \mathbf{z}_{C} - \mathbf{h}_C(\xhat{k}, \phat{F}{E}) \label{eq:cam_residual_ori}\\
\approx &~ \H{C} \xtilde{k} + \H{F}\ptilde{F}{E} + \mathbf{n}_{C} \label{eq:cam_residual}
\end{align}
where $\H{C_k}$ and $\H{F}$ are the Jacobian matrix of $\mathbf{h}_C(\cdot)$ in respect to $\xhat{k}$ and $\phat{F}{E}$. The detailed derivation of these Jacobians can be found in Appendix~\ref{ch:apdx_cam}.
After stacking the Jacobians and residuals for all camera measurements, 
we find the left nullspace of $\H{F}$ and project Eq.~\eqref{eq:cam_residual} onto the nullspace to obtain a new measurement residual that is independent of the feature position:
\begin{align}
\tilde{\mathbf{z}}_{C}' = \H{C}' \xtilde{k} + \mathbf{n}_{C}' \label{eq:cam_res_nullspace_prj}
\end{align}
Then we perform measurement compression (\citet{golub2013matrix}, Algorithm 5.2.4) which leads to substantial computational savings before the EKF update.
  \subsection{Wheel Encoder} \label{ch:WheelEncoder}

2D wheel encoder measurements  are commonplace on a ground vehicle that is often driven by two differential wheels (left and right) mounted on a common axis (baselink), each equipped with an encoder providing local angular rate readings \citep{siegwart2011introduction}:
\begin{align}
    \omega_{ml_k} = \omega_{l_k} + n_{\omega_l}, ~~~~ \omega_{mr_k} = \omega_{r_k} + n_{\omega_r} \label{eq:wheel_meas_raw}
\end{align}
where $\omega_{l_k}$ and $\omega_{r_k}$ are the true angular velocities of each wheel at time $t_k$, and $n_{\omega_l}$ and $n_{\omega_r}$ are the corresponding zero-mean white Gaussian noises.
These encoder readings can be combined to provide 2D  linear and angular velocities about odometer frame $\{O\}$ at the center of the baselink:
\begin{align}
    {}^{O_k}{\omega} = (\omega_{r_k}r_r - \omega_{l_k}r_l)/b ~, ~~~~ {}^{O_k}{v} = (\omega_{r_k}r_r + \omega_{l_k}r_l)/2
    \label{eq:wheel_meas_bodyframe}
\end{align}
where $\x{OI} := [ r_l  ~ r_r ~ b]^\top$ are the left and right wheel radii and the baselink length, respectively.
Note that instead of \eqref{eq:wheel_meas_raw}, different forms of wheel encoder's measurements might be used, for example, 
linear velocities of the left and right wheels (i.e. wheel radii have been taken into account), 
or linear and angular velocities directly of the baselink.

\subsubsection{Wheel Odometry Integration} \label{ch:wheel_preint}
As the wheel encoders typically provide measurements of a high rate (e.g. 100-500 Hz), it would be too expensive to perform EKF updates at their rate.
On the other hand, as the state (see Eq.~\eqref{eq:state}) has the historical poses, we naturally integrate the wheel odometry measurements (see Eq.~\eqref{eq:wheel_meas_bodyframe}) between the two latest poses and then use this integrated 2D motion measurement for the update.

Consider integrating wheel odometry measurements between two IMU times $t_{k \shortminus 1}$ and $t_{k}$.
The continuous-time 2D kinematic model for $t_\tau \in [t_{k \shortminus 1},t_{k}]$ is given by:
\begin{align}
    \hspace{-3mm}
    \begin{bmatrix}
    {}^{O_{\tau}}_{O_{k \shortminus 1}}\dot{\theta}\\
    {}^{O_{k \shortminus 1}}\dot{x}_{O_{\tau}}\\
    {}^{O_{k \shortminus 1}}\dot{y}_{O_{\tau}}
    \end{bmatrix}
    \hspace{-1mm}
    :=
    \hspace{-1mm}
    \begin{bmatrix}
    -{}^{O_\tau}{\omega}\\
    {}^{O_\tau}v\text{cos}({}^{O_{k \shortminus 1}}_{O_{\tau}}\theta)\\
    {}^{O_\tau}v\text{sin}({}^{O_{k \shortminus 1}}_{O_{\tau}}\theta)
    \end{bmatrix}
    \hspace{-1mm}
    =
    \hspace{-1mm}
    \begin{bmatrix}
    -{}^{O_\tau}{\omega}\\
    {}^{O_\tau}v\text{cos}({}^{O_{\tau}}_{O_{k \shortminus 1}}\theta)\\
    -{}^{O_\tau}v\text{sin}({}^{O_{\tau}}_{O_{k \shortminus 1}}\theta)
    \end{bmatrix} \label{eq:wheel_kinematic_model}
    \hspace{-2mm}
\end{align}
where ${}^{O_{\tau}}_{O_{k \shortminus 1}}\theta$ is the local yaw angle, ${}^{O_{k \shortminus 1}}x_{O_{\tau}}$ and ${}^{O_{k \shortminus 1}}y_{O_{\tau}}$ are the 2D position of $\{O_\tau\}$ in the starting integration frame $\{O_{k \shortminus 1}\}$.
Note that we use $-{}^{O_\tau}{\omega}$ and $-{}^{O_\tau}v\text{sin}({}^{O_{\tau}}_{O_{k \shortminus 1}}\theta)$ because we follow global-to-local orientation representation.
We then integrate the model from $t_{k \shortminus 1}$ to $t_{k}$ and obtain the 2D relative pose measurement as follows:
\begin{align} 
\mathbf{z}_{O} 
&=: \begin{bmatrix} {}^{O_{k}}_{O_{k \shortminus 1}}\theta \\ {}^{O_{k \shortminus 1}}\mathbf{d}_{O_{k}} \end{bmatrix} 
=\begin{bmatrix}
-\int^{t_{k}}_{t_{k \shortminus 1}}{}^{O_t}{\omega}dt \\
\int^{t_{k}}_{t_{k \shortminus 1}}{}^{O_t}v~\text{cos}({}^{O_{t}}_{O_{k \shortminus 1}}\theta)dt \\
-\int^{t_{k}}_{t_{k \shortminus 1}}{}^{O_t}v~\text{sin}({}^{O_{t}}_{O_{k \shortminus 1}}\theta)dt
\end{bmatrix} \label{eq:wheel_preintegration}\\
&=: \mathbf g_O (\{\omega_l, \omega_r\}_{k \shortminus 1:k}, \x{OI})
\label{eq:wheel_preint_function}
\end{align}
where $\{\omega_l, \omega_r\}_{k \shortminus 1:k}$ denote all the wheel angular velocities integrated between $t_{k \shortminus 1}$ and $t_{k}$.

\subsubsection{Measurement Update} \label{ch:wheel_update}
Note that the integrated wheel measurement, Eq.~\eqref{eq:wheel_preint_function}, provides {\em only} the 2D relative motion on the odometer's plane, while the state (see Eq.~\eqref{eq:state}) contains the 3D poses. 
The measurement can be expressed as a function of the relative pose of the odometer frame by projection:
\begin{align}
    \mathbf{z}_{O} 
    = 
    \mathbf{h}_{O} (\x{k}) 
    :=
    \begin{bmatrix}
        \mathbf{e}_3^\top \Log{ \R{E}{O_{k}} \R{O_{k \shortminus 1}}{E}}\\
        \Lambda \R{E}{O_{k \shortminus 1}} (\p{O_{k}}{E} - \p{O_{k \shortminus 1}}{E})
    \end{bmatrix} \label{eq:wheel_update_model}
\end{align}
where $\Lambda = [ \mathbf{e}_1 ~ \mathbf{e}_2 ]^\top$, $\mathbf{e}_i$ is the $i$-th standard unit basis vector, and $\Log{\cdot}$ is the $SO(3)$ matrix logarithm function \citep{chirikjian2011stochastic}.
The odometry pose $(\R{E}{O_{k}}, \p{O_{k}}{E})$ can be derived with IMU pose $(\R{E}{I_{k}}, \p{I_{k}}{E})$ and extrinsic $(\R{I}{O}, \p{I}{O})$:
\begin{align}
    \R{E}{O_{k}} &= \R{I}{O} \R{E}{I_{k}}\\
    \p{O_{k}}{E} &= \p{I_{k}}{E} + \R{I_{k}}{E} \p{O}{I}
\end{align}
At this point, we have obtained the integrated wheel odometry measurements along with their corresponding Jacobians which are readily used for the EKF update after linearization:
\begin{align}
    \tilde{\mathbf{z}}_{O}
    &:= \mathbf g_O (\{\omega_{ml}, \omega_{mr}\}_{k \shortminus 1:k}, \xhat{OI}) - \mathbf{h}_O(\xhat{k}) \\
    &\approx  \H{O}\xtilde{k} + \mathbf{n}_{O}
\end{align}
where $\H{O}$ is the Jacobian matrix of ($\mathbf{h}_O(\cdot) - \mathbf{g}_O(\cdot)$) in respect to the state $\xhat{k}$ and $\mathbf{n}_{O}$ is the zero-mean white Gaussian noise.
Detailed derivations of $\H{O}$ and $\mathbf{n}_{O}$ can be found in Appendix~\ref{ch:apdx_wheel}.

 \subsection{LiDAR} \label{ch:LiDAR}

LiDAR provides 3D point clouds of the surroundings. 
Given a new measurement point cloud in LiDAR frame $\{L\}$ at time $t_k$, for each point $\p{F}{L_k}$ we transform it to a local map frame $\{M\}$ and find a number of neighboring points,  $\p{n_i}{M} = [x_{n_i} ~ y_{n_i} ~ z_{n_i}]^\top, i \in \{1, \cdots, m\}$.
We compute the plane $\cp{}{M} = [a ~ b ~ c]^\top$ where the neighboring points are residing as follows:
\begin{align}
    \underbrace{
    \begin{bmatrix}
        x_{n_1} & y_{n_1} & z_{n_1}\\
        \vdots & \vdots & \vdots\\
        x_{n_m} & y_{n_m} & z_{n_m}\\ 
    \end{bmatrix}
    }_{\mathbf{A}_{m\times 3}}
\begin{bmatrix}
        a \\ b \\ c
    \end{bmatrix}
    =
    \underbrace{
    \begin{bmatrix}
        1 \\
        \vdots\\
        1
    \end{bmatrix}
    }_{\mathbf{b}_{m\times 1}}
\end{align}
Its linear least-square solution is given by $\cp{}{M} = (\mathbf{A}^\top \mathbf{A})^{-1} \mathbf{A}^\top \mathbf{b}$.
After finding the plane $\cp{}{M}$, we formulate the following point-on-plane measurement for all the planar points including $\p{F}{L_k}$ and $\p{n_i}{M}$:
\begin{align}
    \mathbf{z}_L := 
\begin{bmatrix}
        0\\
        \vdots\\
        0\\
        0
    \end{bmatrix}
= \mathbf{h}_L(\x{k}) = 
    \begin{bmatrix}
        \cp{}{M}^\top \p{n_1}{M} - 1\\
        \vdots\\
        \cp{}{M}^\top \p{n_m}{M} - 1\\
        \cp{}{M}^\top \p{F}{M} - 1\\
    \end{bmatrix} \label{eq:lidar_measurement_model}
\end{align}
where $\p{F}{M}$ can be computed by transforming $\p{F}{L_k}$ to the map frame using the IMU-LiDAR extrinsic calibration $\x{L} = (\R{I}{L}, ~ \p{I}{L})$, the IMU pose $(\R{E}{I_k}, \p{I_k}{E})$, and the map pose $(\R{E}{M}, \p{M}{E})$:
\begin{align}
    \hspace{-2mm}\p{F}{M} = \R{E}{M} (\p{I_k}{E} + \R{I_k}{E} (\p{L}{I} + \R{L}{I} \p{F}{L_k})  - \p{M}{E}) \label{eq:lidar_tr}
\end{align}

To perform EKF update, we linearize Eq.~\eqref{eq:lidar_measurement_model} and have the following residual:
\begin{align}
    \tilde{\mathbf{z}}_L &= 
\begin{bmatrix}
        - \cphat{}{M}^\top \phat{n_1}{M} + 1\\
        \vdots\\
        - \cphat{}{M}^\top \phat{n_m}{M} + 1\\
        - \cphat{}{M}^\top \phat{F}{M} + 1
    \end{bmatrix} \label{eq:lidar_linsys_mat}\\
&\approx
    \H{L}\xtilde{k} + \H{\boldsymbol{\Pi}}{}^M\tilde{\boldsymbol{\Pi}} + \mathbf{n}_{L} \label{eq:lidar_linsys}
\end{align}
where $\H{L}$ and $\H{\boldsymbol{\Pi}}$ are the Jacobian matrix of $\mathbf{h}_L(\cdot)$ in respect to the state $\x{k}$ and the plane $\cp{}{M}$; 
$\mathbf{n}_{L}$ is zero-mean Gaussian noise.
The detailed derivations of the Jacobians and the noise can be found in Appendix~\ref{ch:apdx_lidar}.

In analogy to the visual feature marginalization in the MSCKF, 
we now perform null space projection of Eq.~\eqref{eq:lidar_linsys} onto the left nullspace of $\H{\boldsymbol{\Pi}}$ to reduce the computation and bound the size of the state.
Note that it is possible that not all the points $\p{n_i}{M}$ are actually on the plane. 
We therefore only pass the plane to the next step after checking the condition number of the matrix $\mathbf{A}$ and the point-to-plane distance of each points $\p{n_i}{M}$ on  $\cp{}{M}$.
Also note that this process is actually equivalent to initialize $\cp{}{M}$ in the state with infinite covariance, update with Eq.~\eqref{eq:lidar_linsys}, and marginalize \citep{Yang2017IROS}, thus allowing us to properly handle the uncertainty of the plane without explicitly computing it.
Then we have the following measurement model dependent only on the state ready for the EKF update:
\begin{align}
    \tilde{\mathbf{z}}_L' = \H{L}'\xtilde{k} + \mathbf{n}'_{L} \label{eq:lidar_res_nullspace_prj}
\end{align}
Note that we also perform measurement compression in the same way as in MSCKF-based VINS for further efficiency.

\subsubsection{Local Mapping} \label{ch:lidar_mapping}

As having a global map has major drawbacks as the map grows: 
(i) the computational complexity of tracking the uncertainty of the map points, 
(ii) the computational complexity of finding the matches between the map and a new point cloud, 
and (iii) the size of the memory to store all the map points,
many existing approaches keep the map points sparse to slow down the map growth to mitigate these issues.
However, the sparse map may break the measurement model assumptions.
For example, the planes extracted from a sparse map of a forest would not accurately represent the plane where the measurement point is on.
We hence chose to have a dense map to ensure accurate matching while keeping only a local map to address the aforementioned issues.

Specifically, we employ the ikd-tree \citep{cai2021ikd} to manage the map points and keep them for a fixed temporal window.
We assume the map frame $\{M\}$ as the LiDAR pose anchored at one of our historical IMU poses $\{I_i\}$ (see Eq.~\eqref{eq:state_clones}).
and its pose in the global frame, $(\R{E}{M}, \p{M}{E})$, can be represented using the extrinsic calibration $\x{L}$:
\begin{align}
    \R{E}{M} =& \R{I}{L} \R{E}{I_i}\\
    \p{M}{E} =& \p{I_i}{E} + \R{I_i}{E} \p{L}{I}
\end{align}
As the map frame can be represented with our historical IMU pose and the LiDAR extrinsics, there is no need to add the map pose in the state, which saves computation.
Note that the map frame should change over time as the MSCKF  marginalizes the old IMU poses.
In this case, we change  the map frame anchored to the latest IMU frame (e.g. $\{I_j\}$) using the following relative pose:
\begin{align}    
\hspace{-2mm}
    \R{M_{i}}{M_{j}} &= \R{I}{L}\R{E}{I_{j}}\R{I_{i}}{E}\R{L}{I}\\
\hspace{-2mm}
    \p{M_{i}}{M_{j}} &= \R{I}{L}\R{E}{I_j} (\p{I_j}{E} - \p{I_i}{E} + (\R{I_j}{E}- \R{I_i}{E})\p{L}{I})    
\hspace{-2mm}
\end{align}
After using a new point cloud to update the state,  
we register the points $\p{F}{L_k}$ in the map by transforming them to the map frame (see Eq.~\eqref{eq:lidar_tr}).
By doing so, the MINS estimator can maintain dense local maps, enabling consistent point-cloud matching and reducing outliers.

 \subsection{GNSS} \label{ch:GNSS}
A GNSS receiver such as GPS provides latitude, longitude, and altitude readings in a geodetic coordinate frame, 
which are typically converted to a local ENU or NED frame $\{E\}$ for outdoor navigation, 
e.g. by simply setting the first measurement location as the datum or using a base station.
A GNSS measurement $\mathbf{z}_{G}$ at time $t_k$ is the receiver's global position $\p{G_k}{E}$ 
-- we here do not consider deeply-coupled fusion where pseudo-range and/or carrier phase measurements are used --
which can be modeled with the IMU pose $(\R{E}{I_k}, \p{I_k}{E})$ and the extrinsic calibration $\p{G}{I}$:
\begin{align}
    \mathbf{z}_{G} = \mathbf{h}_{G}(\x{k}) =&~ \p{G_k}{E} + \mathbf{n}_{G} \\
    =&~ \p{I_k}{E} + \R{I_k}{E}\p{G}{I} + \mathbf{n}_{G} \label{eq:gps_measurement}
\end{align}
where $\mathbf{n}_{G}$ is a white Gaussian noise.
Now we linearize this measurement to perform the EKF update:
\begin{align}
    \tilde{\mathbf{z}}_{G} := \mathbf{z}_{G} - \mathbf{h}_{G}(\xhat{k}) \approx \H{G} \xtilde{k} + \mathbf{n}_{G} \label{eq:gps_residual}
\end{align}
where $\H{G}$ is the Jacobian matrix of $\mathbf{h}_{G}(\cdot)$ in respect to $\mathbf x_k$ (see Appendix~\ref{ch:apdx_gps}).

Note that the proposed MINS also supports general global pose measurements,
which may be provided by sensors such as other estimators running independently or loop closure constraints.
Assuming that the measurement carries information of a sensor X pose $(\R{E}{X}, \p{X}{E})$, 
we model the measurement using the IMU pose $(\R{E}{I_k}, \p{I_k}{E})$ and the corresponding extrinsic calibration $(\R{I}{X}, \p{I}{X})$:
\begin{align}
    \hspace{-2mm}
    \mathbf{z}_{X}
    \hspace{-0.5mm}
    := 
    \hspace{-0.5mm}
    \begin{bmatrix}
        \ang{E}{X_{k}} \\
        \p{X_{k}}{E}
    \end{bmatrix}
    \hspace{-0.5mm}
    =
    \hspace{-0.5mm}
    \mathbf{h}_X(\x{k})
    \hspace{-0.5mm}
    =
    \hspace{-0.5mm}
    \begin{bmatrix}
        \Log{\R{I}{X}\R{E}{I_{k}}}\\
        \p{I_{k}}{E} + \R{I_{k}}{E}\p{X}{I}
    \end{bmatrix} 
    +
    \mathbf{n}_X 
    \hspace{-2mm} \label{eq:X_pose_measurement}
\end{align}
where $\mathbf{n}_X$ is the zero-mean Gaussian noise.
We again linearize this global-pose measurement for the EKF update:
\begin{align}
    \tilde{\mathbf{z}}_{X} := \mathbf{z}_{X} - \mathbf{h}_X(\xhat{k}) \approx \H{X}\xtilde{k} + \mathbf{n}_X 
\end{align}
where $\H{X}$ is the Jacobian matrix (see Appendix~\ref{ch:apdx_sync_jacobian}). \section{Adaptive On-Manifold Interpolation} \label{ch:interpolation}

\begin{figure}[t]  
    \centering
    \includegraphics[trim={0mm 0mm 0mm 0mm},clip, width=1\columnwidth]{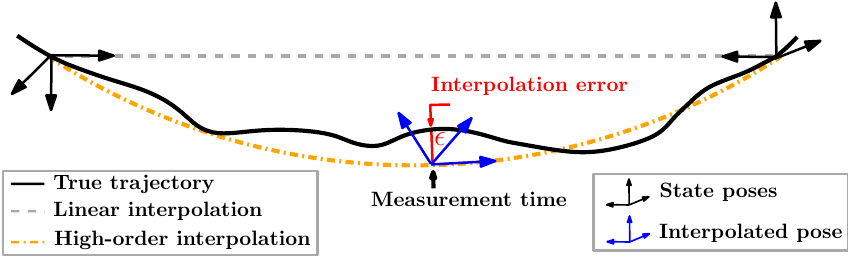}
\caption{Different interpolation methods.}
    \label{fig:interpolation}
\end{figure}

Different sensors sample at different rates, such as a camera at 30 Hz and a LiDAR at 10 Hz, 
and their measurements are in general asynchronous if without (hardware) synchronization.
This would incur difficulty to establish their measurement models as the corresponding states (of the IMU poses) are not available at the exact measurement times (see Eq.~\eqref{eq:state}).
A naive solution  is to add IMU poses into the state at every sampling time of each sensor,
which would increase the size of the state and incur significant computational cost.
\begin{figure*}[t] 
    \includegraphics[trim={0mm 0mm 0mm 0mm},clip,width=0.49\textwidth]{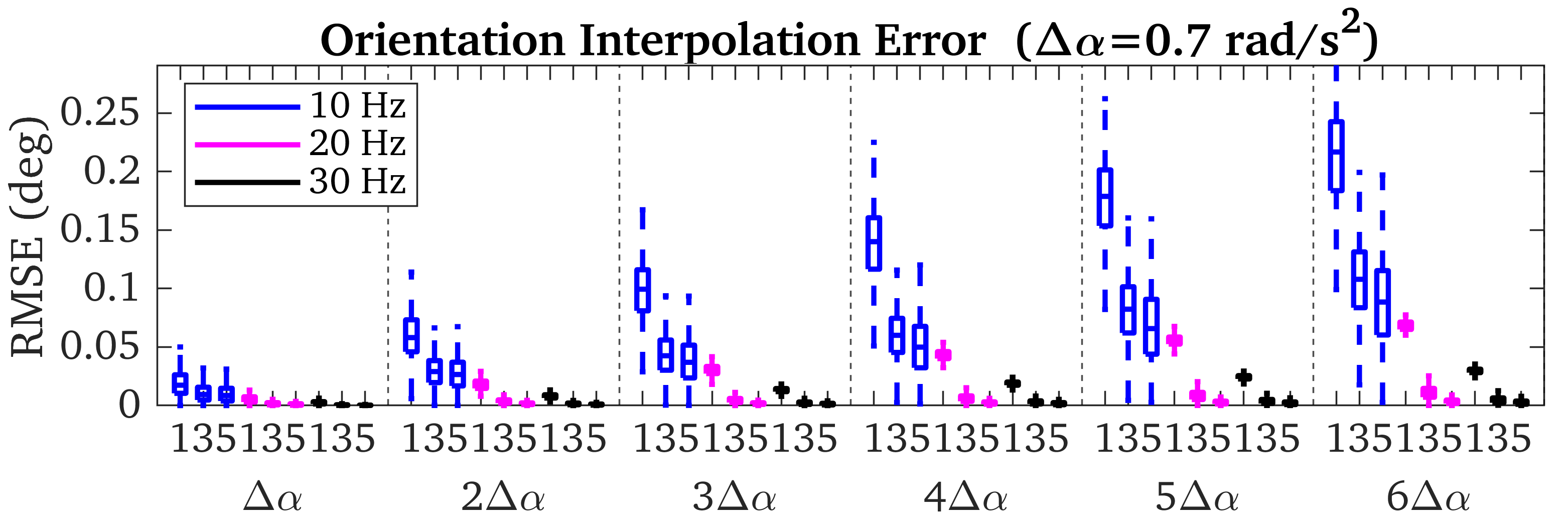}
    \includegraphics[trim={0mm 0mm 0mm 0mm},clip,width=0.49\textwidth]{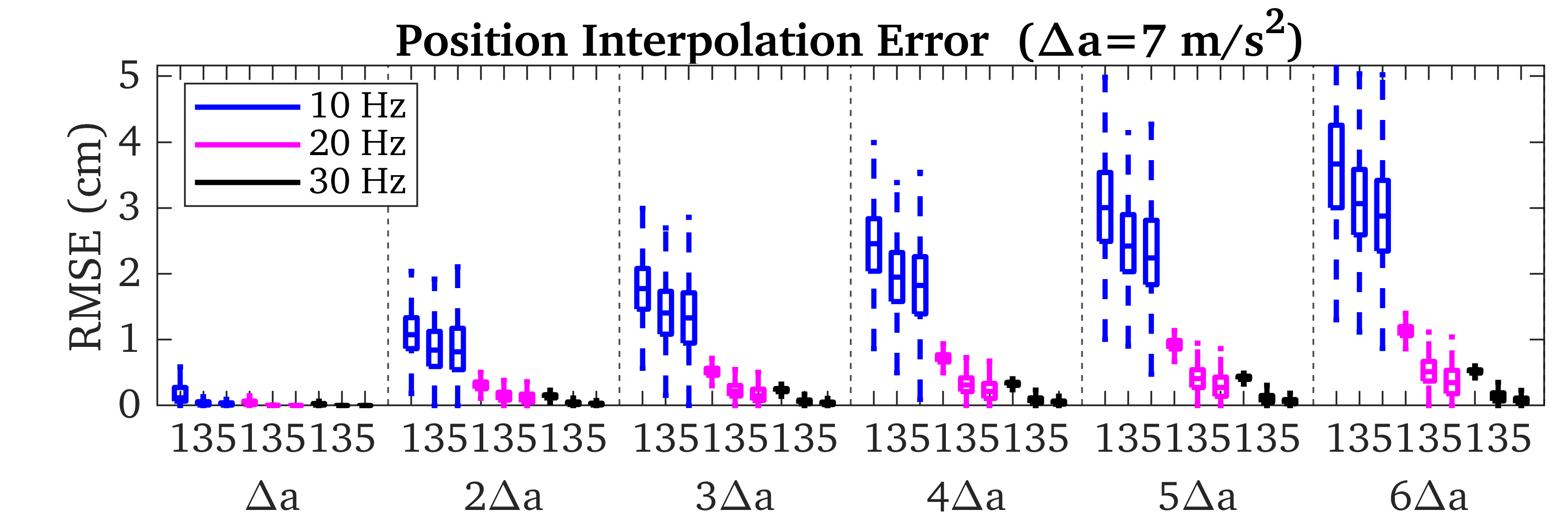}
    \caption{Interpolated pose errors depend on angular/linear accelerations (binned with $0.7~ rad/s^2$ and $7~ m/s^2$ resolution, respectively), interpolation order (1, 3, and 5), and temporal distance (10, 20, and 30 cloning Hz).} \label{fig:intr_error_pose}
\end{figure*}
\begin{figure*}[t] 
    \includegraphics[trim={5mm 5mm 0mm 0mm},clip,width=0.33\textwidth]{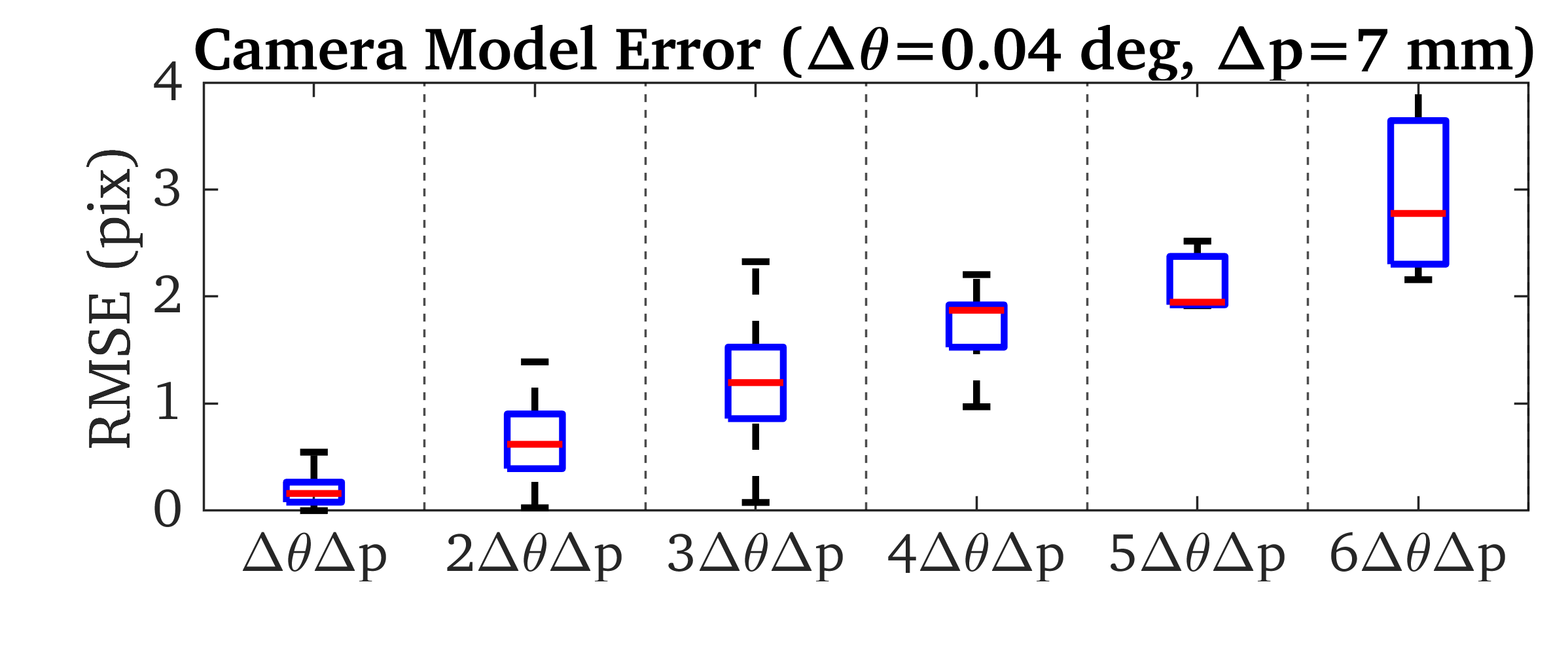}
    \includegraphics[trim={5mm 5mm 0mm 0mm},clip,width=0.33\textwidth]{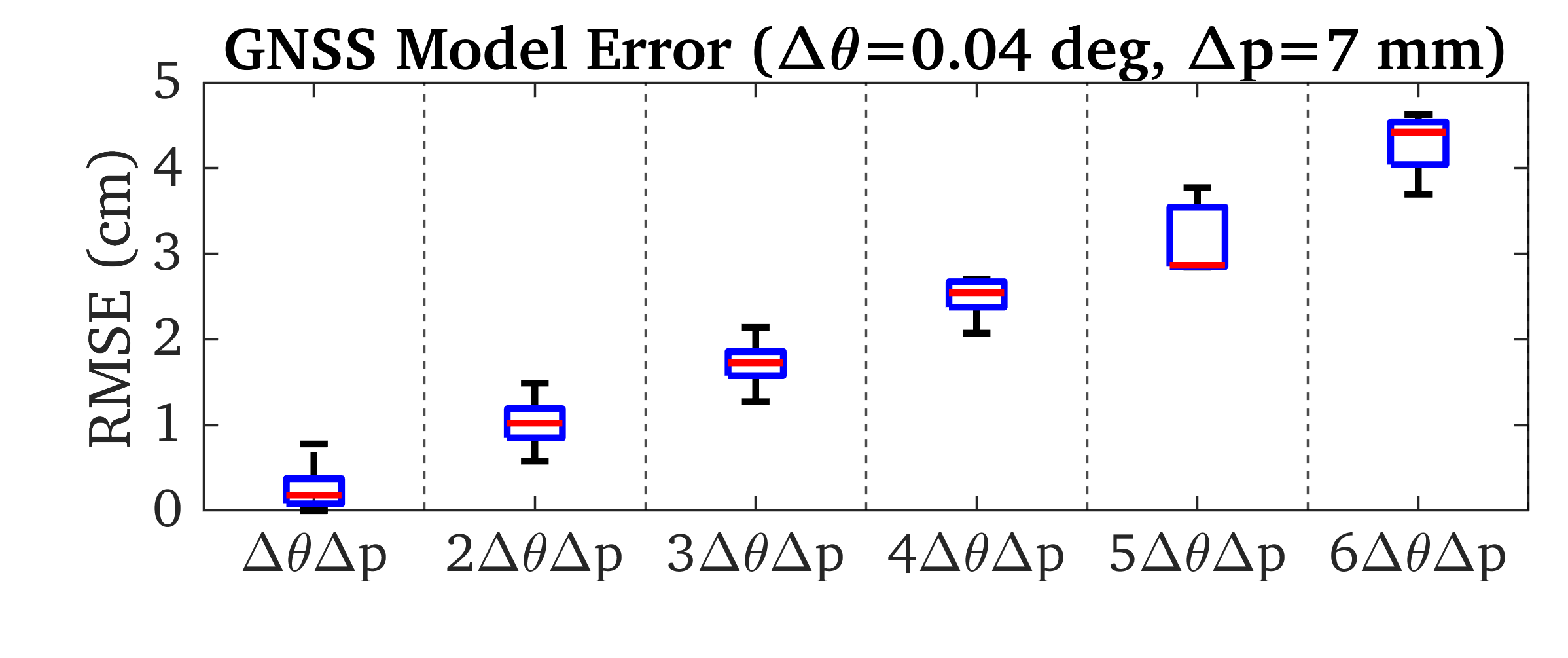}
    \includegraphics[trim={5mm 5mm 0mm 0mm},clip,width=0.33\textwidth,height=2.35cm]{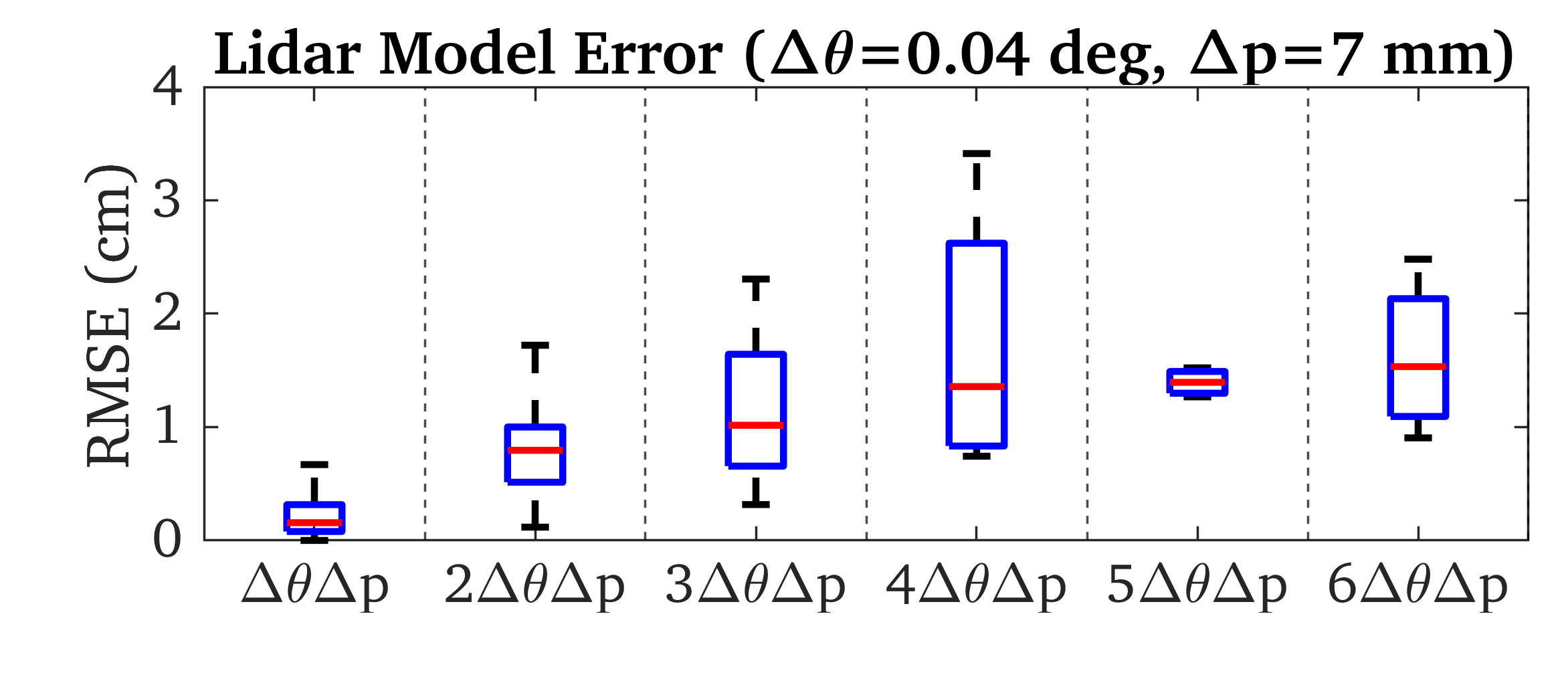}
    \caption{Sensor measurement model errors depend on interpolated pose errors (binned with $0.04~deg$ and $7~mm$ resolutions).} \label{fig:intr_error_sensors}
\end{figure*}

In contrast, we propose to represent the 6DOF poses with on-manifold interpolation and do not require any additional state augmentation.  
In our prior work \citep{Lee2020IROS, Lee2021ICRA}, we employed the 1st-order linear interpolation to handle asynchronous and delayed measurements:
\begin{align}
    \R{E}{I_k} &= \textrm{Exp}(\lambda \textrm{Log}(\R{E}{I_{b}}\R{I_{a}}{E}))\R{E}{I_{a}}\\
\p{I_k}{E} &= (1 - \lambda)\p{I_{a}}{E} + \lambda\p{I_{b}}{E}\\
\lambda &= (t_k + \t{I}{X} - t_{a})/(t_{b} - t_{a}) \label{eq:interpolation}
\end{align}
where the pose $(\R{E}{I_k}, \p{I_k}{E})$ at measurement time $t_k$ was represented by two bounding IMU poses ($ t_a \leq (t_k+\t{I}{X}) \leq t_b$), $\t{I}{X}$ is the sensor time offset, and $\Exp{\cdot}$ is the $SO(3)$ matrix exponential function \citep{chirikjian2011stochastic}.
The Jacobians of the interpolated IMU pose with respect to each bounding pose can be found in Appendix~\ref{ch:apdx_interpolation}.

However, 
this linear interpolation relies on the assumption of 
constant linear and angular velocity,
which may not hold in the case of highly-dynamic motion or for some sensitive sensors, 
causing inconsistency to the estimator due to interpolation errors, as illustrated in Fig.~\ref{fig:interpolation}.
As such, the proposed MINS adopts a high-order interpolation method to capture more complex motion profiles in practice.

Specifically, we represent the IMU pose $(\R{E}{I_k}, \p{I_k}{E})$ at time $t_k$ with a polynomial of degree $n$ \citep{Eckenhoff2021TRO}:
\begin{align}
    \R{E}{I_k} &= \Exp{\sum_{i = 1}^{n} \mathbf{a}_i \Delta t_k^i} \R{E}{I_0} \label{eq:high_intr_ori}\\
    \p{I_k}{E} &= \p{I_0}{E} + \sum_{i = 1}^{n} \mathbf{b}_i\Delta t_k^i \label{eq:high_intr_pos}
\end{align}
where $\Delta t_k = t_k + \t{I}{X} - t_0$,
and $\mathbf{a}_i$ and $\mathbf{b}_i$ are the polynomial coefficients.
To compute the coefficients, we fit the polynomial at a set of $n+1$ historical IMU poses ($\{I_{j}\}, j \in \{0, 1, \cdots, n\}$) that are closest to the measurement time $t_k$. Specifically, we compute the coefficients as follows:
\begin{align}
    \R{E}{I_j} &= \Exp{\sum_{i = 1}^{n} \mathbf{a}_i \delta t_j^i} \R{E}{I_0}, ~~ \forall j \in \{0, 1, \cdots, n\} \hspace{-1mm}\\
    \hspace{-1mm}\Rightarrow \begin{bmatrix}
        \mathbf{a}_1 \\
        \vdots \\
        \mathbf{a}_n
    \end{bmatrix}
    &=
    \begin{bmatrix}
        \delta t_1 & \hdots & \delta t_1^n\\
        \vdots  & \ddots & \vdots\\
        \delta t_n^n & \hdots & \delta t_n^n
    \end{bmatrix}^{\shortminus 1}
\begin{bmatrix}
        \Log{\R{E}{I_1}\R{I_0}{E}}\\
        \vdots\\
        \Log{\R{E}{I_n}\R{I_0}{E}}
    \end{bmatrix} \\
    \p{I_j}{E} &= \p{I_0}{E} + \sum_{i = 1}^{n} \mathbf{b}_i\delta t_j^i, ~~ \forall j \in \{0, 1, \cdots, n\} \hspace{-1mm} \\
\hspace{-1mm}\Rightarrow \begin{bmatrix}
        \mathbf{b}_1 \\
        \vdots \\
        \mathbf{b}_n
    \end{bmatrix}
    &=
    \begin{bmatrix}
        \delta t_1 & \hdots & \delta t_1^n\\
        \vdots  & \ddots & \vdots\\
        \delta t_n^n & \hdots & \delta t_n^n
    \end{bmatrix}^{\shortminus 1}
\begin{bmatrix}
        \p{I_1}{E} - \p{I_0}{E}\\
        \vdots\\
        \p{I_n}{E} - \p{I_0}{E}
    \end{bmatrix}
\end{align}
where $\delta t_j = t_j - t_0$,
The above equations show the relation between the interpolated pose and the $n + 1$ poses in the state. 
Note that the interpolated pose is additionally a function of the unknown time offset, and thus we can further calibrate them in case needed. 
The Jacobians of the interpolated pose in respect to the $n + 1$ poses and the time offset can be found in Appendix \ref{ch:apdx_interpolation_high}.
\subsection{Numerical Analysis}
The order of the interpolation polynomial is directly related to the accuracy and complexity of the interpolated pose \citep{cioffi2022continuous}.
It is hard to analytically determine an optimal order of interpolation, as there are many factors affecting the interpolated pose, such as the interpolation distance or motion of the sensor platform (robot).
To understand these factors, we numerically examine the error characteristics of the interpolated pose using realistic simulated motions
and identify four key factors of interpolation errors including
angular acceleration, linear acceleration, interpolation order, and temporal distance of interpolation.
Fig.~\ref{fig:intr_error_pose} depicts the box plots of the interpolation errors in different simulation setups.
In particular, we found that 
the temporal distance of interpolation  impacts the most on interpolation errors. 
The higher interpolation order has smaller interpolation errors, especially large accuracy gain between the first and  third orders while  having diminishing returns with higher orders.
Interestingly, the interpolation error grows {linearly} with the increase of acceleration.

To understand the significance of the interpolation errors on different sensor measurement models, 
we  use the interpolated poses, instead of the IMU poses, in the camera~\eqref{eq:cam_meas_nested}, GNSS~\eqref{eq:gps_measurement}, and LiDAR~\eqref{eq:lidar_measurement_model} measurement models in the numerical studies where the same sensor parameters as in the ensuing simulations are used as shown in Table \ref{tab:sim_setup}.
Note that the wheel measurement model is not considered as  interpolation is not needed for the wheel-integrated measurements (see Section~\ref{ch:wheel_preint}).
Fig.~\ref{fig:intr_error_sensors} shows the errors of the sensor measurement models 
(see Eq.~\eqref{eq:cam_residual_ori}, Eq.~\eqref{eq:lidar_linsys_mat}, and Eq.~\eqref{eq:gps_residual})
with different levels of interpolation errors, which are binned with $0.04~ deg$ and $7~ mm$ resolution, respectively.
It is clear from these results that  the camera is sensitive to the interpolation error, as a small error can easily cause more than 1 pixel error which is generally the noise level of camera measurement (e.g. second column of the left figure of Fig.~\ref{fig:intr_error_sensors}).
Similarly, the LiDAR error easily exceeds 1 cm which is generally the noise level of LiDAR measurement, showing its sensitivity to the interpolation error.
The interpolation errors do not impact much on the GNSS measurement model as single-point GPS measurement noise  is  around 1 m,
which has orders of magnitude larger  while it may become sensitive when using RTK-GPS with lower measurement noise.
Sensor calibration can also affect these errors,
although reasonable values are used in Table \ref{tab:sim_setup}.

As evident, small interpolation errors can cause large errors for the measurement models and  in turn may result in estimation failure.
Based on the above extensive numerical studies,
the proposed MINS by default chooses to employ the 3rd-order polynomials and 20 Hz cloning to suppress the interpolation error and balance the efficiency.

\subsection{Incorporating Interpolation Error}
While a higher-order setting (e.g. 3rd-order polynomials and 20 Hz cloning) may approximate the IMU pose more accurately, there always exists the interpolation error (see $\boldsymbol{\epsilon}$ in Fig.~\ref{fig:interpolation}) that can hurt estimation performance.
To address this issue,
we explicitly incorporate the interpolation error into the interpolated pose in the measurement model:
\begin{align}
    \mathbf{z}_k 
    &= \mathbf{h}(\R{G}{I_k}, \p{I_k}{G}) + \mathbf{n}_k \\
    &= \mathbf{h}(\mathbf{g} (\mathbf{x}) + \boldsymbol{\epsilon}) + \mathbf{n}_k
\end{align}
where $\mathbf{z}_k$ is the sensor measurement at $t_k$, $\mathbf{h}(\cdot)$ is the corresponding measurement model with respect to the IMU pose at $t_k$, $\mathbf{g}(\cdot)$ is the interpolation function, $\boldsymbol{\epsilon} \sim \mathcal{N}(\mathbf{0, \mathbf{R}_{\boldsymbol{\epsilon}}})$ is the interpolation error modeled as zero-mean Gaussian, and $\mathbf{n}_k$ is the measurement noise.
The linearized model in turn can be shown as:
\begin{align}
    \mathbf{r}_k := \mathbf{z}_k - \mathbf{h}(\mathbf{g} (\xhat{})) = \J{\mathbf{h}}{\mathbf{g}}\J{\mathbf{g}}{\mathbf{x}}\xtilde{} + \J{\mathbf{h}}{\mathbf{g}}\boldsymbol{\epsilon} + \mathbf{n}_k \label{eq:interpolation_erro_model}
\end{align}
where the additional error term $\J{\mathbf{h}}{\mathbf{g}}\boldsymbol{\epsilon}$  compensates
for the interpolation error.
To model the variance of the interpolation error, we leverage the preceding numerical findings.
The orientation/position interpolation error is shown to have a linear relationship with angular/linear accelerations, 
while their slope coefficients depend on the cloning frequency and the interpolation order (see Fig.~\ref{fig:intr_error_pose}).
We thus model the variance of the error as follows:
\begin{align}
    \mathbf{R}_{\boldsymbol{\epsilon}}
    =
    diag((\alpha \times s_o (\bar c, \bar o))^2 \I{3}, ~(a \times s_p (\bar c, \bar o))^2 \I{3}) \label{eq:interpolation_error_var}
\end{align}
where $\alpha$ and $a$ are the size of angular and linear accelerations, $s_o(\cdot)$ and $s_p(\cdot)$ are the slope coefficients of orientation and position given cloning frequency $\bar c$ and interpolation order $\bar o$.
In practice, the proposed MINS has lookup tables ranging in cloning frequency between 4-30 Hz and interpolation order between 1-9 for both orientation and position to achieve the slopes directly.

\subsection{Dynamic Cloning}
As the interpolation error is shown to be proportional to the accelerations, 
we propose dynamic cloning to adaptively change the cloning frequency based on the robot's motion 
in order to achieve computational efficiency and maintain a comparable level of accuracy to the high fixed-rate cloning strategy.
Specifically, we decrease the cloning frequency while the robot is in slow motion which will also reduce the total size of the state, and thus reduce the computation of the EKF update.
On the other hand, when the robot undergoes dynamic motions, the low cloning frequency would cause large interpolation errors, thus we increase the cloning frequency to reduce them.
To this end, we set thresholds for the variance model of the interpolation errors (see Eq.~\eqref{eq:interpolation_error_var}) and find the lowest cloning frequency that passes the threshold:
\begin{gather}
    \bar c = \min \bar C \\
    s.t. ~~ \alpha \times s_o (\bar c, \bar o) < \gamma_o, ~~a \times s_p (\bar c, \bar o) < \gamma_p \nonumber
\end{gather}
where $\bar C$ is the set of cloning frequencies, and $\gamma_o$ and $\gamma_p$ are the thresholds for orientation and position part, respectively.
Note that the proposed MINS does not create a clone if there is no measurement between the last clone and the current desired clone time, i.e.  clone up to 20 Hz if the maximum sensing rate is 20 Hz to avoid having redundant clones.

\subsection{Online Calibration}  \label{ch:online_calibration}
One of the prerequisites to successfully run a multisensor fusion system is to have accurate sensor calibration as its failure will directly lead to low estimation performance or even divergence.
Even if one precisely calibrates the sensor parameters offline, their values may change over time for environmental or mechanical reasons.
Therefore, online calibration is necessary to achieve robustness to the poor calibration and  improve localization performance.

To that end,  we include the spatial extrinsics (see Eq.~\eqref{eq:cam_euclidean_trans}, \eqref{eq:wheel_update_model}, \eqref{eq:lidar_tr}, \eqref{eq:gps_measurement}) and intrinsics (see Eq.~\eqref{eq:cam_distortion_function}, \eqref{eq:wheel_preintegration}) of each sensor when modeling its measurements. 
Also, the temporal extrinsics (time offset, see Eq.~\eqref{eq:interpolation}) is naturally modeled while handling the asynchronicity of sensor measurement.
To perform online calibration, we augment our state (see Eq.~\eqref{eq:state}) with these sensor calibration parameters and jointly estimate them.
To be more specific, for the system that carries $c$-number of cameras, $g$-number of GNSS sensors, $l$-number of LiDAR sensors, and a pair of wheel encoders, we add sensor parameter state $\x{S}$\footnote{We reused some of the notations introduced before to describe an arbitrary number set of the sensor calibration parameters.}:
\begin{align}
    \mathbf{x}_k &= (\x{I_k}, ~\x{H_k}, ~\x{S}) \label{eq:state_all}\\
\x{S} &= (\x{C},~ \x{G},~ \x{O},~ \x{L})\\
\x{C} &= (\x{C_1},~ \cdots \x{C_c})\\
\x{C_\tau} &= (\x{CE_\tau},~ \x{CI_\tau}) ~ \forall \tau \in \{1, \cdots, c\}\\
\x{CE_\tau} &= (\R{I}{C_\tau},~ \p{I}{C_\tau},~ \t{I}{C_\tau}) \\
\x{CI_\tau} &= ({}^\tau f_x, ~{}^\tau f_y, ~{}^\tau c_x, ~{}^\tau c_y, ~{}^\tau k_1, ~{}^\tau k_2, ~{}^\tau p_1, ~ {}^\tau p_2)\\
\x{G} &= (\x{G_1},~ \cdots \x{G_g})\\
\x{G_\tau} &= (\p{G_\tau}{I},~ \t{I}{G_\tau}) ~ \forall \tau \in \{1, \cdots, g\}\\
\x{L} &= (\x{L_1},~ \cdots,~ \x{L_l})\\
\x{L_\tau} &= (\R{I}{L_\tau},~ \p{I}{L_\tau},~ \t{I}{L_\tau}) ~ \forall \tau \in \{1, \cdots, l\}\\
\x{O} &= (\x{OE},~ \x{OI})\\
\x{OE} &= (\R{I}{O},~ \p{I}{O},~ \t{I}{O})\\
\x{OI} &= (r_l,~ r_r,~ b)
\end{align}
where $\x{CE}$ and $\x{CI}$ are the camera extrinsics and intrinsics, $\x{G}$ is the GNSS extrinsics, $\x{L}$ is the LiDAR extrinsics, and $\x{OE}$ and $\x{OI}$ are the wheel extrinsics and intrinsics.
Note that not all of these variables have to be included in the state, only those inaccurate ones are included and others can be treated as true in order to save computations if possible.
Detailed derivations about the measurement Jacobians for the calibration parameters can be found in Appendix~\ref{ch:apdx_cam_intrinsic}, Appendix~\ref{ch:apdx_wheel_intrinsic}, Appendix~\ref{ch:apdx_sync_jacobian}, Appendix~\ref{ch:apdx_interpolation}, and  Appendix~\ref{ch:apdx_interpolation_high}. \section{System Initialization} \label{ch:initialization}

Initialization aims to compute state variables directly from available  measurements to start estimation.
An aided-INS estimator is typically required to initialize at least 11 DOF navigation states including roll, pitch, linear velocity, and the biases of the gyroscope and accelerometer.
If the sensor platform (robot) starts from a stationary position, 
the initialization becomes relatively easy and can be solved by averaging the IMU measurements
\citep{lin2022r, shan2021lvi}.
However, in many scenarios, the estimator is required to perform  dynamic initialization when the robot is in motion.
This problem in VINS is often addressed by aligning visual structure-from-motion trajectory to the inertial preintegration measurements \citep{qin2017robust,qin2018vins},
while \citet{dong2011closed} formulated and solved 
a quadratic programming (QP) problem with quadratic constraints.
In order to successfully initialize the estimator, these methods assume fully excited 3D motions 
which may not be possible for motion-constrained systems.
For example, the VINS estimator may lose scale information when a ground vehicle moves straight with constant acceleration \citep{wu2017vins}. 
While LiDAR-based methods are more robust to these problems as they can directly recover 3D pose information through 3D point-cloud matching (e.g. iterated closest points, or ICP \citep{yuan2022sr, yuan2023liw}),
their performance heavily relies on the matching quality, which could be challenging when the robot undergoes dynamic motions or there are not many overlapping points between scans.
On the other hand, for ground vehicles, wheel encoders can be leveraged to initialize the state,
which appears to be straightforward but has not been sufficiently investigated in the literature.
As the wheel-encoder can provide standalone odometry, most of the existing methods integrate wheel-encoder readings to achieve pose information and use it to initialize gyroscope bias \citep{liu2019visual, gang2020robust}, scale \citep{jiang2021panoramic, feng2021initialization}, or linear velocity \citep{gang2020robust, hou2022robust}. As these methods solve the initialization problem by setting relative-pose constraints which require double integration of the IMU and single integration of wheel-encoder readings, they may fail due to high nonlinearity.

To address these issues and enable robust estimation,
the proposed MINS supports not only the IMU-only static initialization~\citep{Geneva2020ICRA} 
but the IMU-wheel dynamic initialization which requires single IMU integration and no wheel integration.
After the initialization, we may perform navigation in the local frame $\{W\}$  
until GNSS measurements are available to initialize the system in the global frame $\{E\}$.
Fig.~\ref{fig:init} shows the overall procedure of the proposed MINS initialization.

\begin{figure}[t]  
    \centering
    \includegraphics[trim={0mm 0mm 0mm 0mm},clip, width=1\columnwidth]{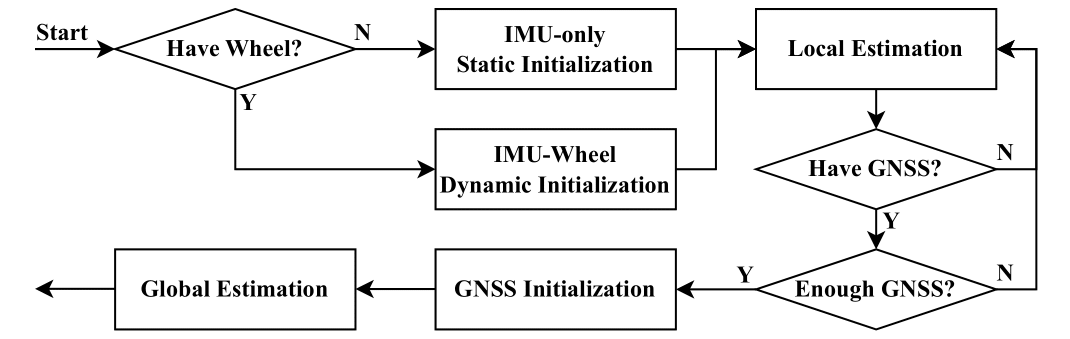}
    \caption{The proposed MINS initialization procedure.}
    \label{fig:init}
\end{figure}

\subsection{IMU-Wheel Dynamic Initialization} \label{sc:state_initialization}
To initialize, both IMU and wheel measurements are collected for a short period of time (e.g. less than 0.2 secs) and the angular and linear velocities of the wheel frame are pre-computed (see Eq.~\eqref{eq:wheel_meas_bodyframe}).
Then, $\mathbf{b}_g$ can be recovered as:
\begin{align}
    \mathbf{b}_g = {}^{I}\boldsymbol{\omega}_{avg} - \R{O}{I} {}^{O}\boldsymbol{\omega}_{avg}
\end{align}
where ${}^{O}\boldsymbol{\omega}_{avg}$ and ${}^{I}\boldsymbol{\omega}_{avg}$ are the average angular velocity of the wheel and IMU frames. 
Next, we can compute the linear velocity of the IMU frame $\vel{I_k}{I_0}$ in the following two ways:
\begin{align}
    \vel{I_k}{I_0} 
    &= \vel{}{I_0} + \sum_{i = 0}^{k - 1}( \R{I_i}{I_0} {}^{I_i}\mathbf{a} - \R{I_i}{I_0} \mathbf{b}_a  -  {}^{I_0}\mathbf{g}) \Delta t_i \label{eq:init_imu_int}\\
&= \R{I_k}{I_0}\R{O}{I}(\vel{}{O_k} + \skw{{}^{O_k} \boldsymbol{\omega}}\p{I}{O}) \label{eq:init_wheel_vel}
\end{align}
where $\{I_0\}$ is the first IMU frame among collected measurements and $\Delta t_i := t_{i} - t_{i - 1}$ is the integration period.
Eq.~\eqref{eq:init_imu_int} computes $\vel{I_k}{I_0}$ by first-order IMU integration and Eq.~\eqref{eq:init_wheel_vel} computes the same property directly from the wheel velocities. 
Based on the above relations, the linear system $\mathbf{A}\mathbf{x} = \mathbf{b}$ is created to recover the gravity in the first IMU frame and the accelerometer bias:
\begin{align}
    &\underbrace{\begin{bmatrix}
        -\sum_{i = 0}^{k - 1}\R{I_i}{I_0}\Delta t_i & -\sum_{i = 0}^{k - 1}\mathbf{I}_{3}\Delta t_i 
    \end{bmatrix}}_{\mathbf{A}}
    \underbrace{\begin{bmatrix}
        \mathbf{b}_a \\
        {}^{I_0}\mathbf{g}
    \end{bmatrix}}_{\mathbf{x}} \\
    &=
    \underbrace{\begin{bmatrix}
        \R{O_k}{I_0}(\vel{}{O_k} + \skw{{}^{O_k} \boldsymbol{\omega}}\p{I}{O}) - \vel{}{I_0} - \sum_{i = 0}^{k - 1}\R{I_i}{I_0} {}^{I_i}\mathbf{a} \Delta t_i  \notag
    \end{bmatrix}}_{\mathbf{b}}
\end{align}
In analogy to \citep{dong2011closed}, we can solve the above QP problem assuming the magnitude of the gravity is known.
Finally, the initial rotation $\R{W}{I_0}$ is computed with the Gram-Schmidt process using ${}^{I_0}\mathbf{g}$.
At this point, we have initialized the navigation state, including the orientation ($\R{W}{I_0}$), the biases ($\mathbf{b}_g$, $\mathbf{b}_a$), the linear velocity $\vel{}{I_0}$, and the zero position in the local frame $\{W\}$.

\subsection{Local-Global Frame Transformation}
\label{sc:gnss_initialization}

The GNSS measurement model~\eqref{eq:gps_measurement} assumes the IMU pose in the global frame $\{E\}$.
However, before the GNSS initialization, the system performs navigation in the gravity-aligned local frame $\{W\}$.
Thus, in order to process the GNSS measurements, the 4 DOF frame transformation  $(\R{W}{E}, \p{W}{E})$ must be known
(note that $\{E\}$ is also aligned with the gravity).
To this end, we collect two sets of estimates of the GNSS receiver positions in $\{E\}$ and $\{W\}$ and formulate a non-linear optimization problem to align them.
Note that if inaccurate GNSS measurements (e.g. single-point GPS) with large noise,
this alignment based on a short trajectory  may result in a poor transformation.
However, the current sliding window of the MSCKF typically contains a  most recent short trajectory and causes trouble in finding accurate local-global frame transformation.
Therefore, we augment our state by selectively keeping the cloned poses (or keyframes) at the GNSS measurement times and performing initialization once we reach the desired trajectory length.

Specifically, given a set of the GNSS position measurements in the ENU frame $\{\p{G_1}{E}, \cdots, \p{G_n}{E}\}$ within the keyframe window and the corresponding interpolated positions in the local frame $\{\p{G_1}{W}, \cdots, \p{G_n}{W}\}$, we use the following geometric constraints to determine the frame transformation:
\begin{align}
\p{G_i}{E} &= \p{W}{E} + \R{W}{E} \p{G_i}{W} , \forall i = 1\hspace{-0.5mm}\cdots\hspace{-0.5mm} n \label{eq:gps_init_stack}\\
\hspace{-2mm}
\Rightarrow \p{G_j}{E} 
\hspace{-0.5mm}
- 
\hspace{-0.5mm}
{}^E\mathbf{p}_{G_1} &= \R{W}{E} (\p{G_j}{W} 
\hspace{-0.5mm}
-
\hspace{-0.5mm}
\p{G_1}{W}) , \forall j=2\hspace{-0.5mm}\cdots\hspace{-0.5mm} n \label{eq:pG_in_E_and_V_sub}
\hspace{-2mm}
\end{align}
Note that if we have more than one GNSS receivers, we also stack their measurements in Eq.~\eqref{eq:gps_init_stack} with the corresponding extrinsics to allow more robust initialization.
Note also that due to the 4 DOF, instead of 6 DOF, transformation, we can  use the rotation about the global yaw  $\theta$:
\begin{align} \label{eq:z_rot}
    \R{W}{E} = \begin{bmatrix}
    \cos\theta && -\sin\theta && 0\\
    \sin\theta && \cos\theta && 0\\
    0 && 0 && 1
    \end{bmatrix}
\end{align}
With this, Eq.~\eqref{eq:pG_in_E_and_V_sub} becomes:
\begin{align} 
    \mathbf{A}_j\begin{bmatrix}
    \cos\theta\\
    \sin\theta
    \end{bmatrix}
    :=
    \mathbf{A}_j \mathbf{w}
    = \mathbf{b}_j , \forall j=2\cdots n
\end{align}
Stacking all these constraints yields the following linear least-squares with quadratic constraints:
\begin{align} \label{eq:qclsp}
    \min_{\mathbf{w}} ~ ||\mathbf{Aw}-\mathbf{b}||^2, ~~ {\rm s.t.} ~||\mathbf{w}||^2 = 1
\end{align}
We substitute the solution of Eq.~\eqref{eq:qclsp} into Eq.~\eqref{eq:gps_init_stack} and solve for $\phat{W}{E}$:
\begin{align}
    \phat{W}{E} = \frac{1}{n}\sum^n_{i=1}
    \left[ \p{G_i}{E} - \Rhat{W}{E} \p{G_i}{W} \right]
\end{align}
This initial guess $(\Rhat{W}{E}, \phat{W}{E})$  is further refined with  delayed initialization~\citep{li2014visual, dutoit2017consistent}.
Specifically, by augmenting the state vector with the transformation along with an {\em infinite} covariance prior to these new variables, we perform the EKF update with all the GNSS measurements:
\begin{align}
    \mathbf{z}_{G} = \p{W}{E} + \R{W}{E} (\p{I_k}{W} + \R{I_k}{W} \p{G}{I}) + \mathbf{n}_{G} \label{eq:gnss_meas_in_world}
\end{align}
After initialization, we marginalize all the keyframes to reduce the state to the original state size.

\begin{figure*}[t]
\centering
\begin{subfigure}{.30\textwidth}
\includegraphics[trim=12mm 5mm 15mm 10mm,clip,width=\linewidth]{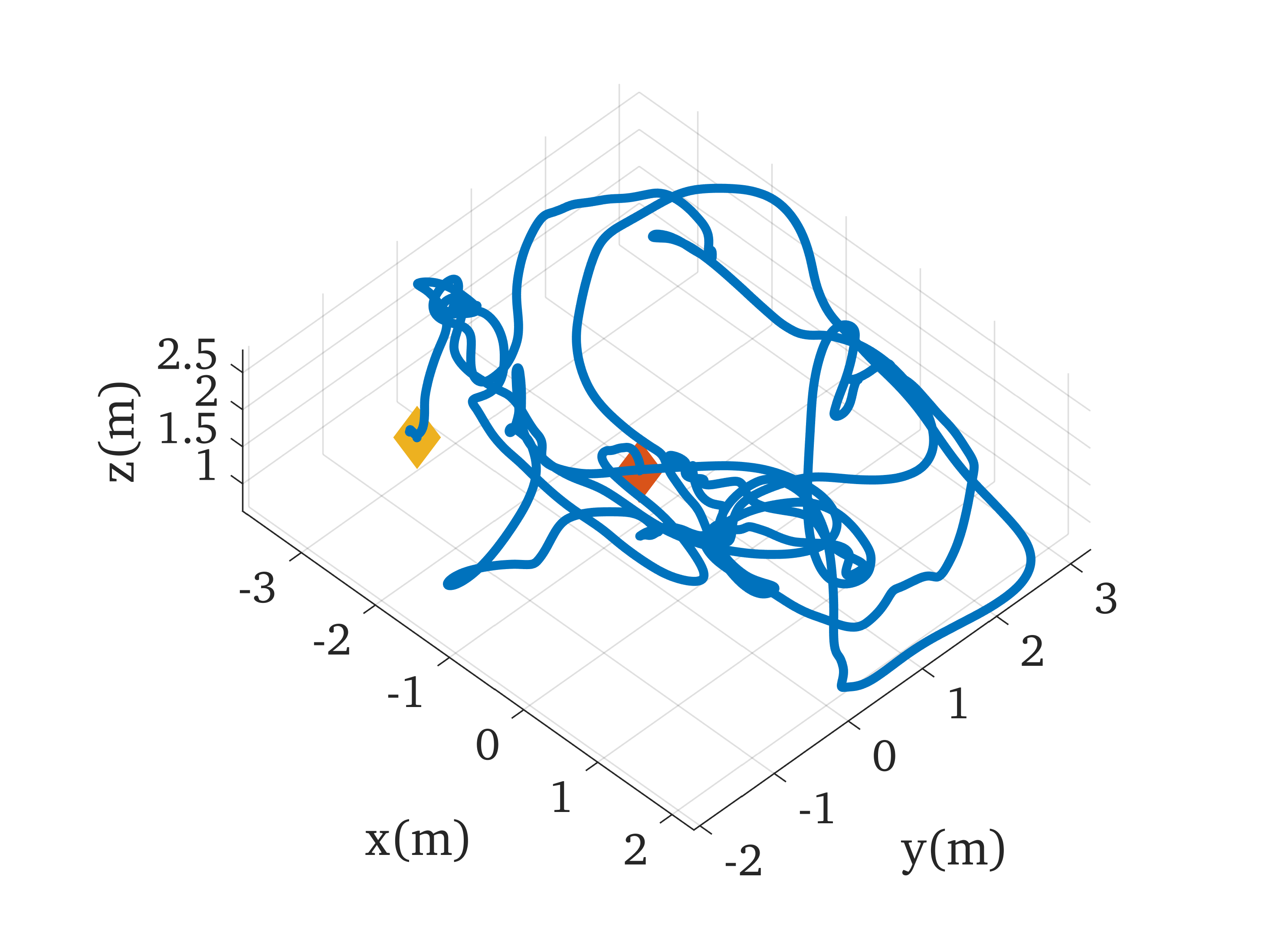}
\caption{\textit{EuRoc Vicon Room2 02}}
\label{fig:sim_traj:euroc_vicon2}
\end{subfigure}
\begin{subfigure}{.30\textwidth}
\includegraphics[trim=9mm 5mm 15mm 10mm,clip,width=\linewidth]{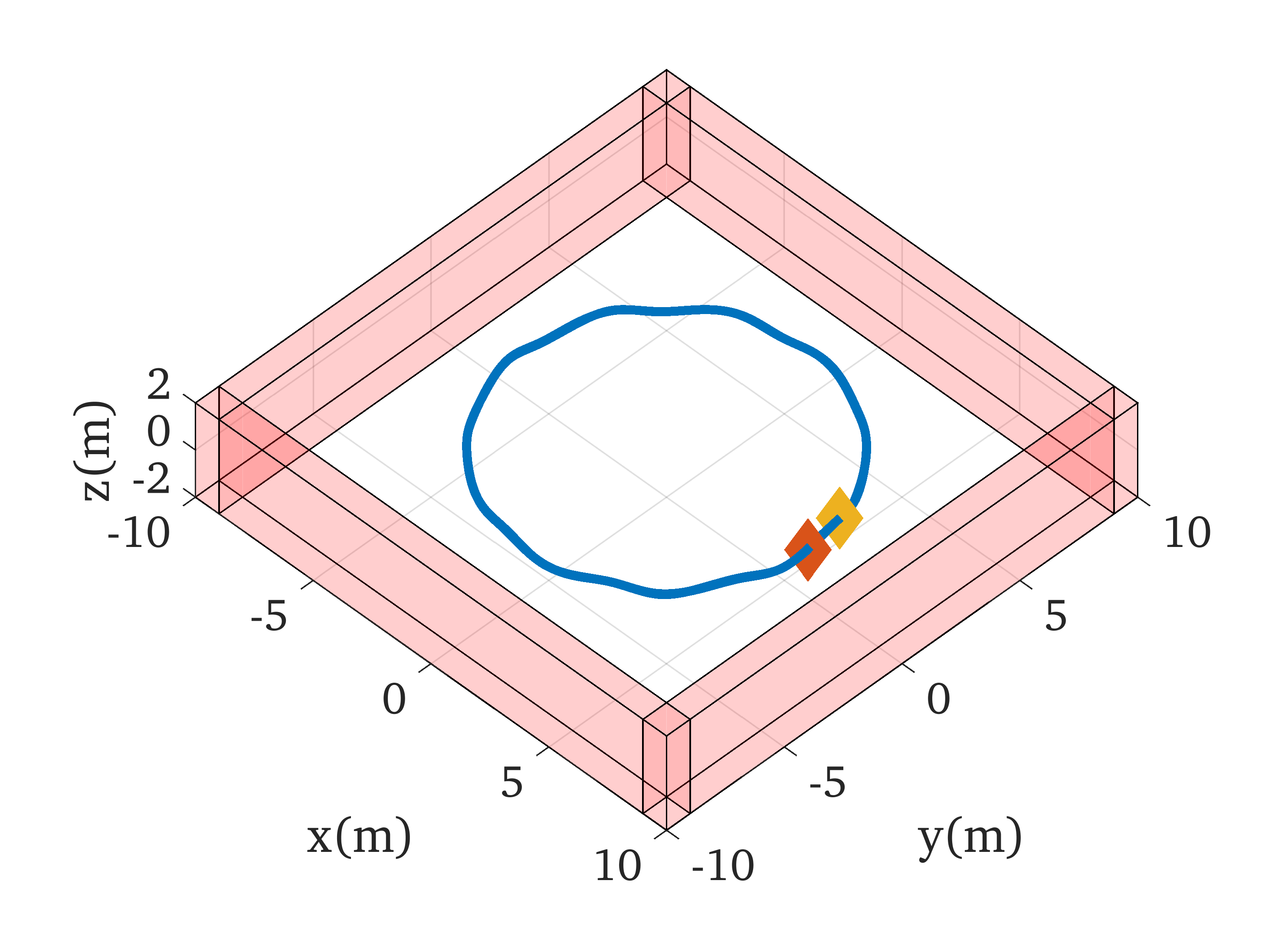}
\caption{\textit{UD Small}}
\label{fig:sim_traj:ud_small}
\end{subfigure}
\begin{subfigure}{.30\textwidth}
\includegraphics[trim=7mm 5mm 15mm 10mm,clip,width=\linewidth]{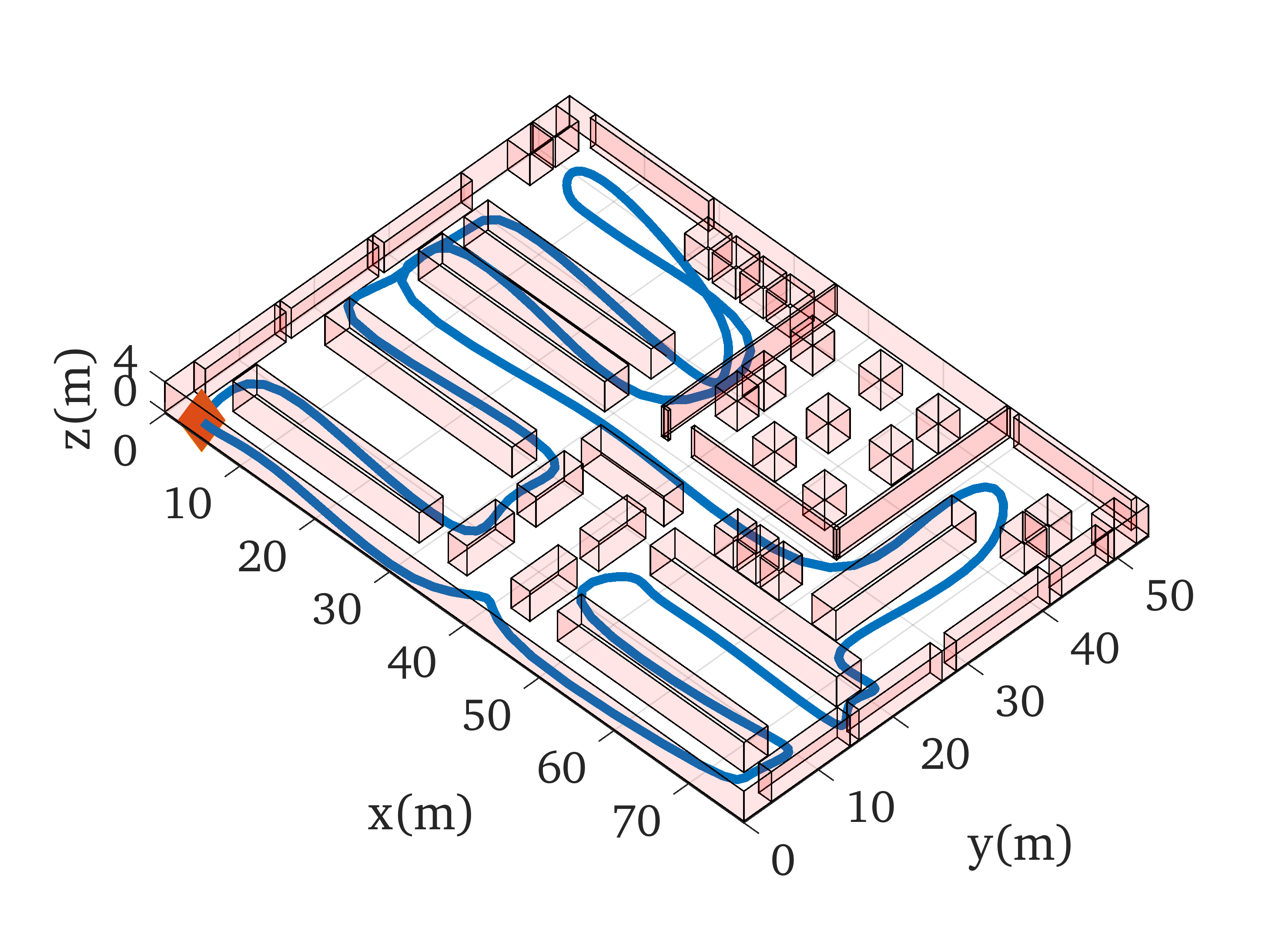}
\caption{\textit{UD Warehouse}}
\label{fig:sim_traj:ud_warehouse}
\end{subfigure}
\caption{
Simulated trajectories for Monte-Carlo simulations.
(a): \textit{EuRoc Vicon Room2 02} (115s, \citet{Burri25012016}) with fully excited 3D motion;
(b) \& (c): \textit{UD Small} (60s) \& \textit{UD Warehouse} (52s) with nonholonomic constrained 3D motion along with simulated walls, floor, and ceiling (floor and ceiling are not shown for better visibility).
The red and yellow diamonds denote the beginning and end of these trajectories, respectively. 
}
\label{fig:sim_traj}
\end{figure*}

\subsection{Global Navigation}
\label{ch:gps_tr_global}

As shown in our prior work \citep{Lee2020ICRA}, the GNSS-aided INS in a local frame $\{W\}$ using the global-to-local transformation  $(\R{W}{E}, \p{W}{E})$ to fuse GNSS measurements is not fully observable, although it gains global information.
\begin{lem}
If estimating states in the local frame, even with global GNSS measurements, the system remains unobservable and has four unobservable directions.
\end{lem}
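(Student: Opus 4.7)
The plan is to carry out a standard observability analysis of the linearized aided-INS whose state has been augmented with the local-to-global transformation $(\R{W}{E}, \p{W}{E})$ and whose GNSS measurements are expressed in local-frame form as in \eqref{eq:gnss_meas_in_world}. Following the recipe of \citet{hesch2012observability}, I would assemble the observability matrix $\mathbf{M}$ by stacking the measurement Jacobians (camera \eqref{eq:cam_residual}, wheel \eqref{eq:wheel_update_model}, LiDAR \eqref{eq:lidar_res_nullspace_prj}, and GNSS from differentiating \eqref{eq:gnss_meas_in_world}) composed with the IMU state transition matrix $\bm\Phi_I$, and then exhibit an explicit four-dimensional basis $\mathbf{N}$ such that $\mathbf{M}\mathbf{N}=\mathbf{0}$, which I would argue is exactly the null space.

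The four unobservable directions come from the gauge freedom in the choice of $\{W\}$. Since both $\{W\}$ and $\{E\}$ are gravity-aligned, any translation $\delta \p{W}{E}\in\mathbb{R}^3$ or any yaw rotation $\delta\theta$ about gravity applied to $\{W\}$ can be absorbed by a compensating change of $(\R{W}{E}, \p{W}{E})$, leaving every measurement invariant. To first order this yields an explicit $\mathbf{N}=[\,\mathbf{N}_{p}~~\mathbf{N}_{\theta}\,]$: in $\mathbf{N}_{p}$, every position-error block of the IMU state, cloned poses, and mapped features carries $-\I{3}$ while the $\p{W}{E}$ block carries $+\I{3}$; in $\mathbf{N}_{\theta}$, every orientation-error block carries the gravity axis $\mathbf{e}_3$, every position block carries $\skw{\mathbf{e}_3}\p{\cdot}{W}$, the yaw block of $\R{W}{E}$ carries $-1$, and the $\p{W}{E}$ block carries $\skw{\mathbf{e}_3}\p{W}{E}$.

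Verifying $\mathbf{M}\mathbf{N}=\mathbf{0}$ splits naturally. For the camera, wheel, and LiDAR Jacobians, which have zero blocks with respect to $(\R{W}{E}, \p{W}{E})$, annihilation is exactly the classical 4-DOF VINS gauge argument, since those measurement models are intrinsic to the local frame. For the GNSS Jacobian, differentiating \eqref{eq:gnss_meas_in_world} yields coupling terms in both the IMU-pose block and the transformation block, and the construction of $\mathbf{N}$ is designed precisely so that these two contributions cancel column-wise. I would also check that $\bm\Phi_I$ preserves this subspace, which is a direct consequence of the standard VINS propagation analysis since IMU propagation does not touch the transformation block.

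To rule out additional unobservable directions, the plan is to observe that if $(\R{W}{E}, \p{W}{E})$ were instead treated as known, the system reduces to GNSS-aided VINS expressed in a single global frame, which is fully observable under generic motion. Hence augmenting the state by the $4$-DOF transformation can add at most four directions to the null space, matching the lower bound from the explicit construction and pinning the nullity at exactly four. The main obstacle is the careful bookkeeping across the many state blocks -- IMU navigation state, clones, features, and calibration parameters -- to ensure the column-wise cancellation against the GNSS Jacobian is exact; the propagation invariance and the camera/LiDAR/wheel annihilations are essentially inherited from known VINS results and require little new work.
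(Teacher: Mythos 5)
Your approach matches the paper's: both carry out the linearized observability analysis of the transformation-augmented state and exhibit an explicit four-column nullspace corresponding to translation of $\{W\}$ and yaw about gravity (Appendix~\ref{ch:apdx_gps_obs_local} does this on a minimal state containing a single IMU pose, velocity, and $(\R{W}{E},\p{W}{E})$, using only the GNSS Jacobian and noting the extension to the general case). One concrete fix is needed in your explicit basis: since the positions in the state live in $\{W\}$ while the GNSS measurement is $\p{W}{E}+\R{W}{E}\p{I_k}{W}$, the translation columns must pair a shift $\delta$ of every local position block with a shift $-\R{W}{E}\delta$ of the $\p{W}{E}$ block; your choice of $-\I{3}$ on the position blocks against $+\I{3}$ on $\p{W}{E}$ leaves a residual $(\I{3}-\R{W}{E})\delta$ and does not annihilate the GNSS row unless $\R{W}{E}=\I{3}$ (compare the paper's $\mathbf{N}^W$, which carries $\I{3}$ and $-\R{W}{E}$ in those blocks). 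Aside from that bookkeeping slip, your plan is somewhat more complete than the paper's proof in two respects: you stack the camera, wheel, and LiDAR Jacobians rather than arguing only with GNSS on a minimal state, and you add a dimension-counting argument (full observability when the transformation is known, hence the augmentation can raise the nullity by at most four) to pin the nullspace at exactly four dimensions, a step the paper asserts by exhibiting $\mathbf{N}^W$ but does not otherwise justify.
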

\begin{proof}
See Appendix~\ref{ch:apdx_gps_obs_local}.
\end{proof}

While this result is somewhat counter-intuitive,
the root cause of this unobservability is the gauge freedom of the 4DOF frame transformation between $\{W\}$ to $\{E\}$, which is inherited from INS (i.e. 4 unobservable directions \citep{hesch2012observability}). 
Even though we use the global measurements, the system has a non-trivial nullspace.
As unobservable linearized systems may have erroneous nullspace due to improper linearization points,
and the corresponding linearized estimator thus may gain information in spurious directions, causing inconsistency~\citep{hesch2012observability,Huang2008ISER,Chen2022ICRA}.
To address this issue, we perform state estimation directly in the global frame  once initialized, 
which is shown to make the system fully observable.

\begin{lem}

If estimating states in the global frame, the system is fully observable.
\end{lem}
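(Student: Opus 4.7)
The plan is to mirror the observability-matrix construction from the proof of the preceding lemma (Appendix~\ref{ch:apdx_gps_obs_local}), but now applied to the reduced global-frame state $\x{I} = (\R{E}{I}, \p{I}{E}, \vel{I}{E}, \mathbf{b}_g, \mathbf{b}_a)$ which no longer contains the auxiliary transformation $(\R{W}{E}, \p{W}{E})$. I would begin by linearizing the IMU kinematics around the true trajectory to obtain the standard block state transition matrix $\bm{\Phi}(t_k, t_0)$ (cf.~\citet{hesch2012observability}), then linearize the GNSS measurement in Eq.~\eqref{eq:gps_measurement}, whose Jacobian $\H{G}(t_k)$ now depends only on the global IMU state. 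Stacking over $K{+}1$ GNSS measurement times gives the observability matrix
\begin{align*}
\mathbf{M} = \begin{bmatrix}
\H{G}(t_0) \\
\H{G}(t_1)\,\bm{\Phi}(t_1, t_0) \\
\vdots \\
\H{G}(t_K)\,\bm{\Phi}(t_K, t_0)
\end{bmatrix}.
\end{align*}

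The core step is to show that $\mathcal{N}(\mathbf{M}) = \{\mathbf{0}\}$. Since the $\p{I}{E}$ column block of every $\H{G}(t_k)$ is the identity, any null vector is immediately forced to have zero position component; propagating this condition forward through $\bm{\Phi}(t_k, t_0)$ and exploiting the gravity-coupled off-diagonal blocks that tie rotation, velocity, and accelerometer bias back to position, the same row-reduction pattern used in the classical GPS-aided INS analysis \citep{hesch2012observability} forces the velocity, accelerometer-bias, and gyroscope-bias components of the null vector to vanish under generic excitation. Roll and pitch are anchored because the known global gravity ${}^E\mathbf{g}$ enters the propagation equations, so any tilt perturbation of $\R{E}{I}$ produces a non-vanishing residual at a later timestamp through the integrated acceleration.

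The main obstacle is showing that the yaw component of $\tilde{\bm\theta}_E^I$, which was the sole rotational direction that survived in the local-frame analysis, is genuinely observable in the global frame. In the previous lemma this direction persisted because an infinitesimal yaw of the body could be absorbed by a compensating yaw of $\R{W}{E}$ without altering the GNSS measurement; here, with $(\R{W}{E}, \p{W}{E})$ removed from the state, no such compensation is available, but this must be verified rather than merely asserted. I would resolve it by explicitly writing out the rotation column block of $\H{G}(t_k)\,\bm{\Phi}(t_k, t_0)$ as a double integral of $\R{I_\tau}{E}\skw{\mathbf{a}_\tau}$ against the trajectory's linear accelerations, and then demonstrating that whenever the platform experiences at least two linearly independent horizontal accelerations along the trajectory, the span of these columns covers the yaw direction, completing the triviality of $\mathcal{N}(\mathbf{M})$. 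This persistence-of-excitation requirement is the only non-routine ingredient; the rest of the proof follows by pattern-matching to the preceding lemma with the $(\R{W}{E}, \p{W}{E})$ block simply deleted.
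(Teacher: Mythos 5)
Your overall framing (stack the GNSS Jacobians against the state-transition matrix and inspect the nullspace) belongs to the same family as the paper's argument, but the route you take is genuinely different and, as sketched, has a gap. The paper does not attempt a from-scratch proof that the observability matrix has trivial nullspace. It keeps the same minimal, bias-free state as in the preceding lemma --- $(\R{E}{I_k},\p{I_k}{E},\vel{I_k}{E},\R{W}{E},\p{W}{E})$ --- writes the global-frame measurement Jacobian $\H{G}^E=[\O{3}~~\I{3}~~\mathbf{0}_{3\times 9}]$, and simply verifies that the four-dimensional nullspace $\mathbf{N}^W$ exhibited in the local-frame lemma is no longer annihilated by $\H{G}^E\bm\Phi^{E}(t_k,t_0)$; full observability of the navigation state then follows because those four directions are exactly the known unobservable subspace of the underlying INS \citep{kelly2011visual,hesch2012observability}. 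Your proposal is more ambitious: you enlarge the state with $\mathbf{b}_g,\mathbf{b}_a$ and try to row-reduce the entire observability matrix to zero. That buys a self-contained argument, but it forces you to import persistence-of-excitation hypotheses ("generic excitation," "two linearly independent horizontal accelerations") that the lemma as stated, and as proved in the paper, does not require.

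The concrete gap is in your yaw step. With a direct global position measurement, yaw is not recovered through a double integral of accelerations: multiplying the $k$-th block row $[\boldsymbol{\Gamma}_1~~\I{3}~~\Delta t\,\I{3}~~\O{}~~\O{}]$ of $\mathcal{O}^E$ by the INS yaw nullspace direction $[-(\R{E}{I_0}\mathbf{g})^\top~~(\skw{\p{I_0}{E}}\mathbf{g})^\top~~(\skw{\vel{I_0}{E}}\mathbf{g})^\top~\cdots]^\top$ telescopes, after the cancellation $\skw{\mathbf{g}}\mathbf{g}=\mathbf{0}$, to $\skw{\p{I_k}{E}}\mathbf{g}$. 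Yaw is therefore eliminated as soon as the receiver position at some measurement epoch has a nonzero horizontal component in $\{E\}$ --- a displacement condition, not an acceleration condition. Accelerations only become necessary because you chose to keep the biases in the state, which is precisely what the paper avoids by restricting to the minimal state for this analysis. Two further mismatches with the paper: the paper retains $(\R{W}{E},\p{W}{E})$ in the analyzed state (these columns of $\mathcal{O}^E$ are identically zero, so the lemma's "fully observable" must be read as referring to the navigation portion, with the transformation marginalized immediately afterwards), whereas you delete that block outright; and your claim that the prior lemma's yaw degeneracy arose from "a compensating yaw of $\R{W}{E}$" is the right intuition but is exactly the statement the paper discharges by the single matrix product $\mathcal{O}^E_{k+1}\mathbf{N}^W\neq\mathbf{0}$ rather than by a persistence-of-excitation argument.
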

\begin{proof}
See Appendix~\ref{ch:apdx_gps_obs_global}.
\end{proof}
After initialization, we hence transform the state from $\{W\}$ to  $\{E\}$ as well as propagate the corresponding covariance.
Specifically, we transform the state in local ${}^W\x{k}$ to global ${}^E\x{k}$ as:
\begin{align}
    {}^E\x{k} = \mathbf{g}({}^W\x{k},\R{W}{E},\p{W}{E}) \label{eq:trans_state_func}
\end{align}
where
\begin{align}
    \R{E}{I_k} &= \R{W}{I_k} \R{E}{W} \label{eq:trans_state_start}\\
    \p{I_k}{E} &= \p{W}{E} + \R{W}{E} \p{I_k}{W}  \label{eq:trans_state_end}\\
    \vel{I_k}{E} &= \R{W}{E} \vel{I_k}{W}
\end{align}
Note that the above equations are used to warp all historical poses $\x{H_k}$. 
Now we linearize Eq.~\eqref{eq:trans_state_func} at the current estimate to get the Jacobian  $\bm\Psi$ and propagate the covariance as follows:
\begin{align}
    {}^E\xtilde{k} &= \bm\Psi{}^W\xtilde{k} ~,~~
    {}^E\mathbf{P}_k = \bm\Psi {}^W\mathbf{P}_k \bm\Psi^\top
\end{align}
where the detailed derivations of  $\bm\Psi$ can be found in Appendix~\ref{ch:apdx_gps_state_transform}.
After transformation, we marginalize ($\R{W}{E}$, $\p{W}{E}$) from the state as they are no longer needed.

\section{Simulation Results} \label{ch:simulation}

\begin{table}[t]
\centering
\caption{Simulation setup parameters.}
\begin{threeparttable}
\footnotesize
\setlength{\tabcolsep}{1pt}
\begin{tabular}{@{}cc|ccc@{}}
\toprule
\textbf{Parameter} & \textbf{Value} & \textbf{Parameter} & \textbf{Value}\\ \midrule
Clone Freq. (Hz)        & 20    & Wheel Mode                & \textit{Wheel3DAng} \\
Window Size. (s)        & 1     & Wheel Ang. Noise (rad/s)  & 1e-2 \\
Intr. Order             & 3     & LiDAR Freq. (Hz)          & 10 \\
Dyn. Ori. Thresh (rad)  & 7e-3  & \# of LiDAR               & 2 \\
Dyn. Pos. Thresh (m)    & 3e-3  & LiDAR Noise (m)           & 1e-2 \\
IMU Freq. (Hz)          & 200   & $\ang{I}{C_1}$ (rad)      & 1.57, 0.00, 0.00 \\
IMU Acc. Noise          & 2e-2  & $\p{I}{C_1}$ (m)          & -0.01, 0.01, 0.01 \\
IMU Acc. Bias           & 3e-2  & $\ang{I}{C_2}$ (rad)      & 1.57, 0.00, 0.00 \\
IMU Gyr. Noise          & 2e-3  & $\p{I}{C_2}$ (m)          & 0.01, 0.01, 0.01 \\
IMU Gyr. Bias           & 2e-4  & $\p{G_1}{I}$ (m)          & 1.00, 1.00, 1.00 \\
Cam Freq. (Hz)          & 30    & $\p{G_2}{I}$ (m)          & -1.00, -1.00, -1.00 \\
\# of Cam               & 2     & $\ang{I}{O}$ (rad)        & 0.00, 0.00, 0.00 \\
Cam Noise (pix)         & 1     & $\p{I}{O}$ (m)            & 0.00, 0.00, 0.00 \\
GNSS Freq. (Hz)         & 1     & $\ang{I}{L_1}$ (rad)      & 0.26, 0.00, 0.00 \\
\# of GNSS              & 2     & $\p{I}{L_1}$ (m)          & 0.30, 0.30, 0.50 \\
GNSS Noise (m)          & 0.1     & $\ang{I}{L_2}$ (rad)      & 0.00, 1.57, 0.00 \\
Wheel Freq. (Hz)        & 100   & $\p{I}{L_2}$ (m)          & -0.01, -0.01, -0.01 \\ \bottomrule
\end{tabular}\begin{tablenotes}\footnotesize
\raggedleft \item[*] All sensor time offsets are set to 0.00 s.
\end{tablenotes}
\end{threeparttable}
\label{tab:sim_setup}
\end{table}

We perform extensive Monte-Carlo simulation tests to validate the  proposed MINS estimator.
Built on top of the OpenVINS simulator \citep{Geneva2019ICRA,Geneva2018IROS}, our MINS simulator generates synthetic  measurements of IMU, camera, GNSS, wheel encoder and LiDAR based on the realistic trajectories  as shown in Fig. \ref{fig:sim_traj}.
Table~\ref{tab:sim_setup} summarizes the setup parameters used in the ensuing simulations.

\subsection{Effects of Interpolation Errors} \label{ch:sim_intr_error_model}

\begin{table}[t]
\newcommand{\tmt}[2]{\multirow{#1}{*}{\rotatebox[origin=c]{90}{#2}}}
\caption{Orientation/position RMSE and NEES results of MINS with/without incorporating interpolation error model. The inconsistent results (NEES $>$ 4) are highlighted in red.}
 \begin{adjustbox}{width=1.0\columnwidth,center}
  \begin{tabular}{c|c|cc}
   \toprule
    & \textbf{Hz} & \textbf{RMSE (deg / m)} & \textbf{NEES} \\ \midrule
    \tmt{5}{w Model} 
    & 04 & 0.550 $\pm$ 0.084 / 0.061 $\pm$ 0.022 & 3.0 $\pm$ 1.1 / 2.1 $\pm$ 1.5 \\ 
    & 06 & 0.459 $\pm$ 0.262 / 0.064 $\pm$ 0.014 & 3.9 $\pm$ 1.4 / 2.2 $\pm$ 1.0 \\ 
    & 10 & 0.224 $\pm$ 0.032 / 0.036 $\pm$ 0.011 & 3.8 $\pm$ 0.8 / 1.4 $\pm$ 0.8 \\ 
    & 20 & 0.172 $\pm$ 0.060 / 0.023 $\pm$ 0.012 & 3.9 $\pm$ 1.2 / 1.0 $\pm$ 0.9 \\ 
    & 30 & 0.178 $\pm$ 0.110 / 0.019 $\pm$ 0.011 & 3.2 $\pm$ 1.3 / 0.8 $\pm$ 1.0 \\   \midrule 
    \tmt{5}{w/o Model} 
    & 04 & 1.242 $\pm$ 1.084 / 1.479 $\pm$ 1.113 & \textcolor{red}{2e1} $\pm$ 2e1 / \textcolor{red}{3e3} $\pm$ 2e3 \\ 
    & 06 & 1.982 $\pm$ 0.815 / 0.631 $\pm$ 0.632 & \textcolor{red}{3e1} $\pm$ 2e1 / \textcolor{red}{2e3} $\pm$ 2e3 \\ 
    & 10 & 0.301 $\pm$ 0.106 / 0.033 $\pm$ 0.028 & \textcolor{red}{5.1} $\pm$ 4.0 / 2.0 $\pm$ 2.0 \\ 
    & 20 & 0.200 $\pm$ 0.127 / 0.024 $\pm$ 0.018 & \textcolor{red}{4.5} $\pm$ 1.2 / 1.1 $\pm$ 1.0 \\ 
    & 30 & 0.178 $\pm$ 0.110 / 0.019 $\pm$ 0.011 & 3.2 $\pm$ 1.3 / 0.8 $\pm$ 0.6 \\  \bottomrule
  \end{tabular}
 \end{adjustbox}
 \label{tab:intr_erorr_model_results}
\end{table}

To verify our proposed interpolation error model, we tested the proposed MINS with and without incorporating the error model at different cloning frequencies (4, 6, 10, 20, and 30 Hz) on the simulated \textit{EuRoC Vicon Room2 02} (Fig. \ref{fig:sim_traj:euroc_vicon2}).
We simulated a stereo camera and IMU for this test as the camera processes multiple measurements tracked over time which requires the interpolation of all over the MSCKF window.
Table~\ref{tab:intr_erorr_model_results} shows the orientation and position root mean squared error (RMSE) and normalized estimation error-squared (NEES) results of each cloning frequency averaged over 10 Monte-Carlo runs. 
Note that the 30 Hz cloning frequency results of both with and without the error model are the same because no interpolation was performed in both cases as the clone frequency and camera measurement rate are the same.

Clearly, the MINS estimators without the error model are shown to be inconsistent and overconfident for all the cases except 30 Hz as their NEESs are above 4. 
Note that ideal NEES of orientation and position should be around 3 if the estimator is consistent as their dimensions are 3.
The estimators tend to show higher NEES and larger RMSE with lower cloning frequencies which indicates the unmodelled interpolation error has a larger impact on the lower setup.
On the other hand, all the estimators with the error model were shown to be consistent with NEES below 4. 
The RMSE results are also lower than those without the error model showing that improved estimator consistency can benefit the localization performance.
\subsection{Dynamic Cloning}

Using the same setup, we now investigate how the proposed dynamic cloning balances localization performance and computational burden.
We control the thresholds of dynamic cloning by multiplying different coefficients (0.01 - 100) to the default values (see Table~\ref{tab:sim_setup}).
Fig. \ref{fig:sim_pose_error_dyn} shows the pose RMSE and total computation time results of dynamic cloning (red circles) which are averaged over 10 runs.
The results of the estimator without dynamic cloning are also plotted with blue dots (fixed-rate cloning).

It is clear from these results  that if we set the threshold very low (e.g. 0.01 in the figure) the performance of dynamic cloning is almost the same as the highest fixed-rate cloning frequency (30 Hz).
This is because the dynamic cloning will enforce the MINS to maintain the interpolation error small which will end up setting the cloning frequency the highest most of the time.
On the other hand, in the case we set the threshold very high (e.g. 100 in the figure), the performance of dynamic cloning is close to the lowest fixed-rate cloning frequency (4 Hz).
The impact of dynamic cloning on how it balances accuracy and efficiency is more clear within the range of 0.1 - 10.
In the case of 0.1 and 1, the accuracies of dynamic cloning were comparable to the 30 Hz fixed-rate cloning, while the computation times were only 75.9\% and 61.9\%, respectively.
Also, in the case of 10, dynamic cloning was able to show computation time comparable to 4 Hz fixed-rate cloning, while the accuracy was better than 6 Hz fixed-rate cloning.
Table~\ref{tab:dyn_table} summarizes the pose RMSE and computation times of dynamic cloning with each threshold coefficient compared to 30 Hz fixed-rate cloning.
The right column of Fig. \ref{fig:sim_pose_error_dyn} shows an exemplary case of how the MINS changed its cloning frequency with given angular/linear accelerations and coefficient 1.

\begin{figure}[t]  
    \centering
    \begin{subfigure}{.49\columnwidth}
    \includegraphics[trim=0mm 4mm 0mm 7mm,clip,width=1\columnwidth]{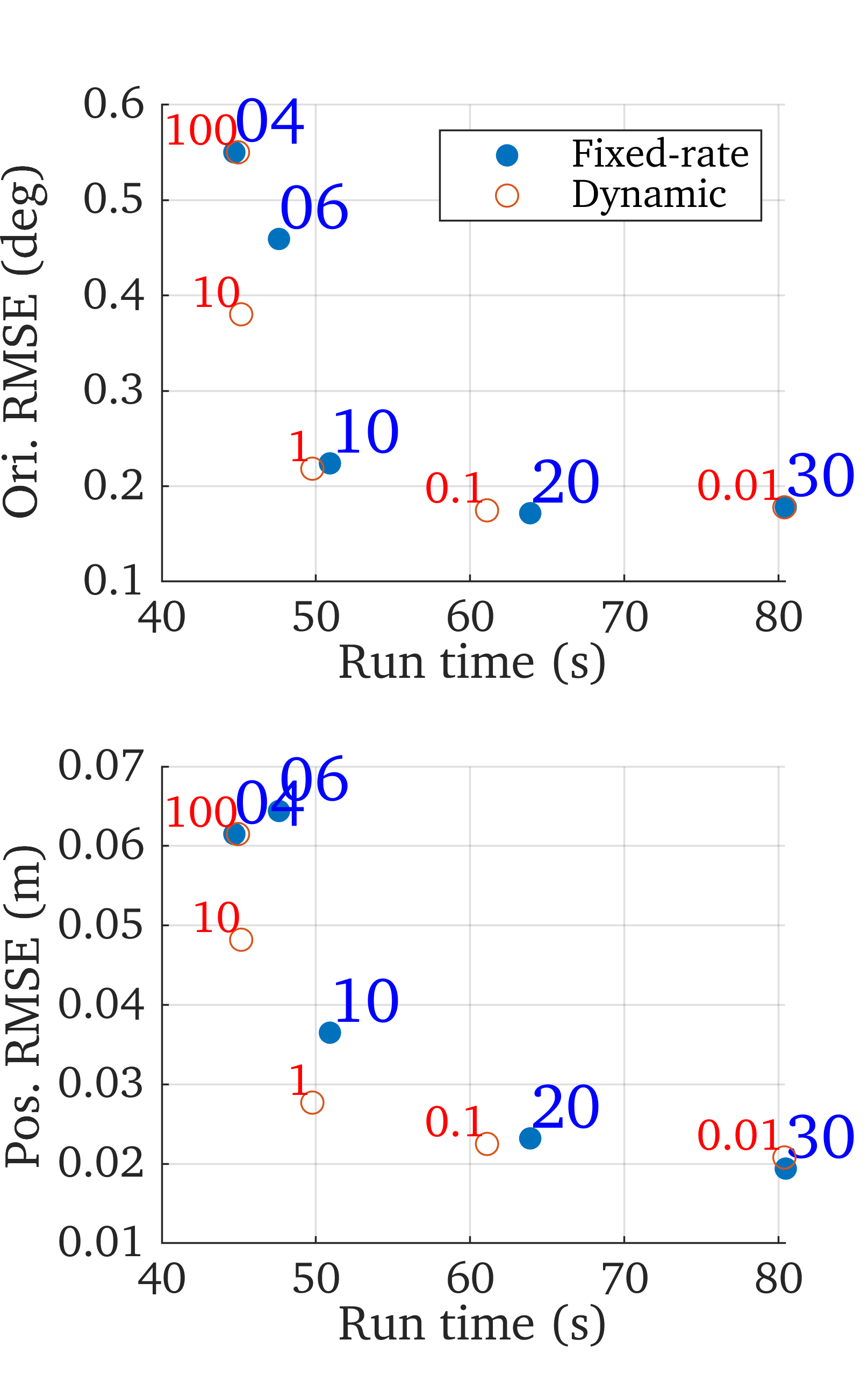}
    \end{subfigure}
    \begin{subfigure}{.49\columnwidth}
    \includegraphics[trim=0mm 4mm 0mm 7mm,clip,width=1\columnwidth]{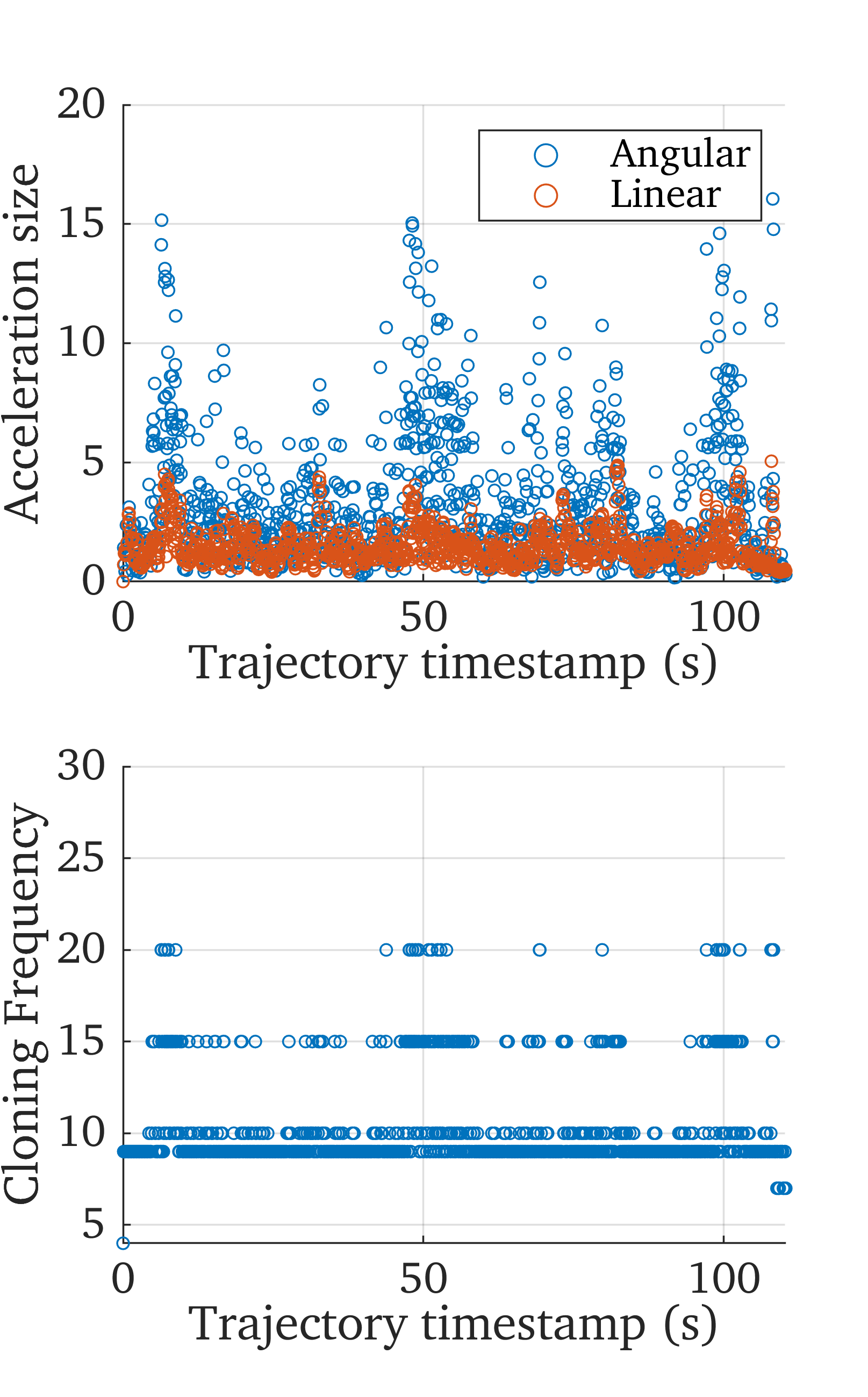}
    \end{subfigure}
    \caption{
    \textit{Left}: Pose RMSE-total run time comparison between fixed-rate cloning and dynamic cloning. The fixed-rate clonings are plotted with blue dots along with frequencies, and dynamic clonings are shown with red circles with threshold coefficients.
    \textit{Right}: Accelerations and cloning frequencies chosen by dynamic cloning with coefficient 1.
    }
    \label{fig:sim_pose_error_dyn}
\vspace{-0.3cm}
\end{figure}

\begin{table}[t]
\caption{Pose RMSE (deg/m) and total computation time (s) of fixed-rate and dynamic cloning with different coefficients.}
 \begin{adjustbox}{width=1.0\columnwidth,center}
  \begin{tabular}{c|c|ccccc}
   \toprule
    & \textbf{Fixed} & \multicolumn{5}{c}{\textbf{Dynamic cloning threshold coefficients}}\\
    & \textbf{30 Hz} & \textbf{0.01} & \textbf{0.1} & \textbf{1} & \textbf{10} & \textbf{100} \\ \midrule
    \textbf{Ori.} & 0.178 & 0.178 & 0.175 & 0.218 & 0.380 & 0.550 \\ 
    \textbf{Pos.} & 0.019 & 0.021 & 0.023 & 0.028 & 0.048 & 0.062\\ \midrule
    \multirow{2}{*}{\textbf{Time}} & 80.5 & 80.4 & 61.1 & 49.8 & 45.1 & 44.9\\   
    & (100\%) & (99.9\%) & (75.9\%) & (61.9\%) & (56.1\%) & (55.9\%)\\\bottomrule
  \end{tabular}
 \end{adjustbox}
 \label{tab:dyn_table}
\end{table}

\subsection{Fusing Different Sensors}

Based on the interpolation strategy, we validate the proposed MINS with different combinations of available sensors.
We test it  on the \textit{UD Small} trajectory with non-holonomic constraints  (see Fig. \ref{fig:sim_traj:ud_small}), as two differential wheeled vehicles cannot perform the holonomic motion.
Note that the GNSS measurement noise level is set to 0.1 m (see Table \ref{tab:sim_setup}), differential GPS quality to get comparable results with other sensor suits.
Table \ref{tab:sensor_combi} shows the orientation and position RMSE and NEES, and the run time results of each sensor combination averaging 10 runs. 
All results are collected with an Intel i7 CPU single-threaded.
Overall, the proposed MINS with all sensor combinations show consistent performance with NEES under 4.
The IMU-camera and IMU-LiDAR pairs show comparable accuracies to each other and both ran 2.6 times faster than real time, while IMU-GNSS and IMU-wheel have lower accuracies with extremely small computation.
The accuracies are shown to be improved with more sensors used in this case and the best accuracy was achieved when fusing all the sensors, confirming the benefits of multi-sensor fusion.

\begin{figure}[t]  
    \centering
    \includegraphics[trim=5mm 5mm 10mm 5mm,clip,width=1\columnwidth]{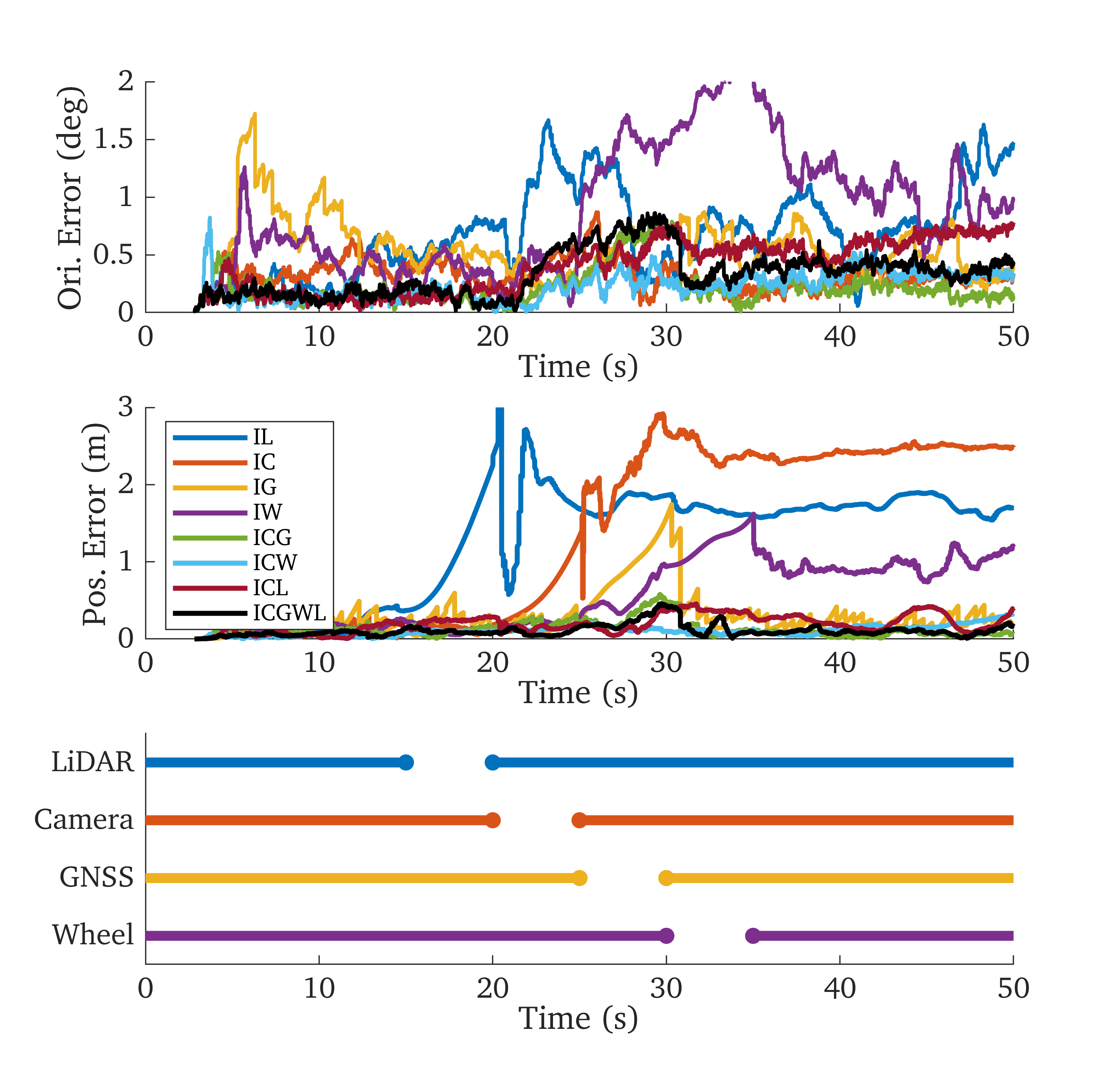}
    \caption{
    The performance of each algorithm under sensor failure scenarios. Pose estimation errors over time are displayed in the top two figures, and the times each sensor measurement was available are shown in the bottom figure (I: IMU, C: camera, G: GNSS, W: Wheel, L: LiDAR).
    }
    \label{fig:sim_sensor_drop}
\vspace{-0.3cm}
\end{figure}

\begin{table}[t]
\caption{Orientation/position RMSE and NEES, and run time results of MINS with different combinations of sensors. The best RMSE results are highlighted in bold text.}
\def\ck{ \tikz\draw[black,fill=black] (0,0) circle (0.7ex);} 
 \begin{adjustbox}{width=0.5\textwidth,center}
\begin{threeparttable}
\setlength{\tabcolsep}{4pt}
  \begin{tabular}{c|c|c|c|c|ccc}
   \toprule
    \textbf{I} & \textbf{C} & \textbf{G} & \textbf{W} & \textbf{L} & \textbf{RMSE (deg / m)} & \textbf{NEES} & \textbf{Time (s)}\\ \midrule
    \ck & \ck & & & & 0.505 $\pm$ 0.226 / 0.139 $\pm$ 0.054 & 3.2 $\pm$ 1.4 / 2.4 $\pm$ 1.4 & 23.6 $\pm$ 0.3 \\ \midrule 
    \ck & & \ck & & & 1.244 $\pm$ 0.250 / 0.191 $\pm$ 0.015 & 3.2 $\pm$ 1.3 / 3.6 $\pm$ 0.7 & 0.5 $\pm$ 0.0 \\ \midrule 
    \ck & & & \ck & & 3.053 $\pm$ 1.603 / 0.636 $\pm$ 0.159 & 2.4 $\pm$ 1.0 / 0.9 $\pm$ 0.3 & 1.0 $\pm$ 0.0 \\ \midrule 
    \ck & & & & \ck & 0.474 $\pm$ 0.106 / 0.098 $\pm$ 0.019 & 1.9 $\pm$ 0.6 / 1.1 $\pm$ 0.3 & 22.3 $\pm$ 0.6 \\ \midrule 
    \ck & \ck & \ck & & & 0.318 $\pm$ 0.042 / 0.057 $\pm$ 0.008 & 3.8 $\pm$ 1.4 / 3.6 $\pm$ 1.2 & 23.6 $\pm$ 0.2 \\ \midrule 
    \ck & \ck & & \ck & & 0.505 $\pm$ 0.231 / 0.102 $\pm$ 0.029 & 3.2 $\pm$ 1.4 / 2.2 $\pm$ 0.8 & 24.0 $\pm$ 0.3 \\ \midrule 
    \ck & \ck & & & \ck & 0.414 $\pm$ 0.174 / 0.084 $\pm$ 0.038 & 2.6 $\pm$ 1.2 / 2.3 $\pm$ 1.4 & 49.6 $\pm$ 0.4 \\ \midrule 
    \ck & \ck & \ck & \ck & \ck & \textbf{0.261} $\pm$ 0.054 / \textbf{0.050} $\pm$ 0.005 & 3.0 $\pm$ 1.3 / 3.9 $\pm$ 1.1 & 49.9 $\pm$ 0.2 \\ \bottomrule
  \end{tabular}
\begin{tablenotes} \footnotesize
\raggedleft \item[*] I: IMU, C: Camera, G: GNSS, W: Wheel, L: LiDAR
\end{tablenotes}
\end{threeparttable}
 \end{adjustbox}
 \label{tab:sensor_combi}
\end{table}
\subsection{Robust to Sensor Failure}
To show the robustness of the proposed MINS to sensor failures, 
we simulate the scenarios that measurements of each sensor are dropped for 5 seconds at different times.  
We use \textit{UD Warehouse} (Fig. \ref{fig:sim_traj:ud_warehouse})  for this test, to be more realistic and ensure the LiDAR is not necessarily getting the map matching after the sensor failure (no loop-closure to the map built before sensor drop). 
The pose errors of different combinations of sensors and the period of time when the sensor measurements are available are shown in Fig. \ref{fig:sim_sensor_drop}.
Clearly, two sensor-paired systems are shown to have large error increases during sensor failure and maintained the error even after the sensors are recovered (except IG which has GNSS).
On the other hand, those systems combining more than 2 sensors were able to bound the error growth by leveraging the auxiliary sensors, showing the robustness to the sensor failure and providing more accurate estimation.

\begin{figure*}[t]
    \centering
    \includegraphics[trim={65mm 0mm 50mm 0mm},clip,width=1\textwidth]{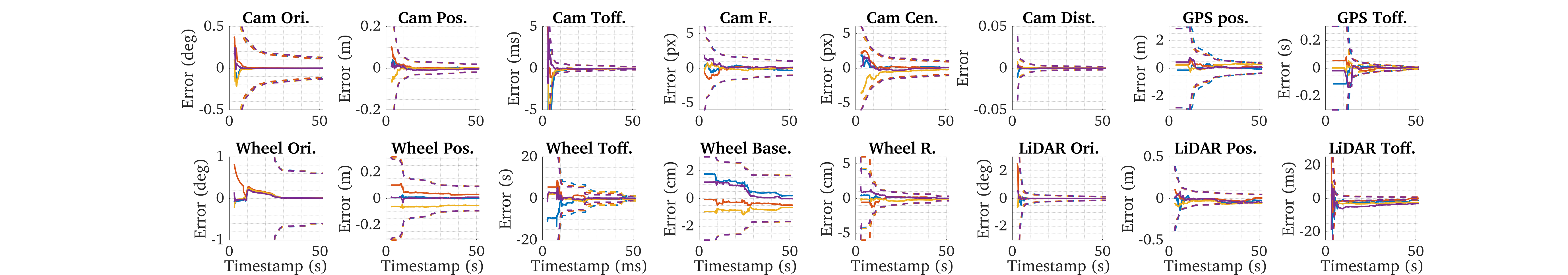}
    \caption{Calibration error of 4 runs with different initial guesses (different colors). The errors and their 3$\sigma$ envelopes are shown with solid lines and dotted lines, respectively.}
    \label{fig:sim_calib_convergences}
\end{figure*}
\begin{figure*}[t]
\centering
\includegraphics[trim=0mm 0 0mm 0,clip,width=\textwidth]{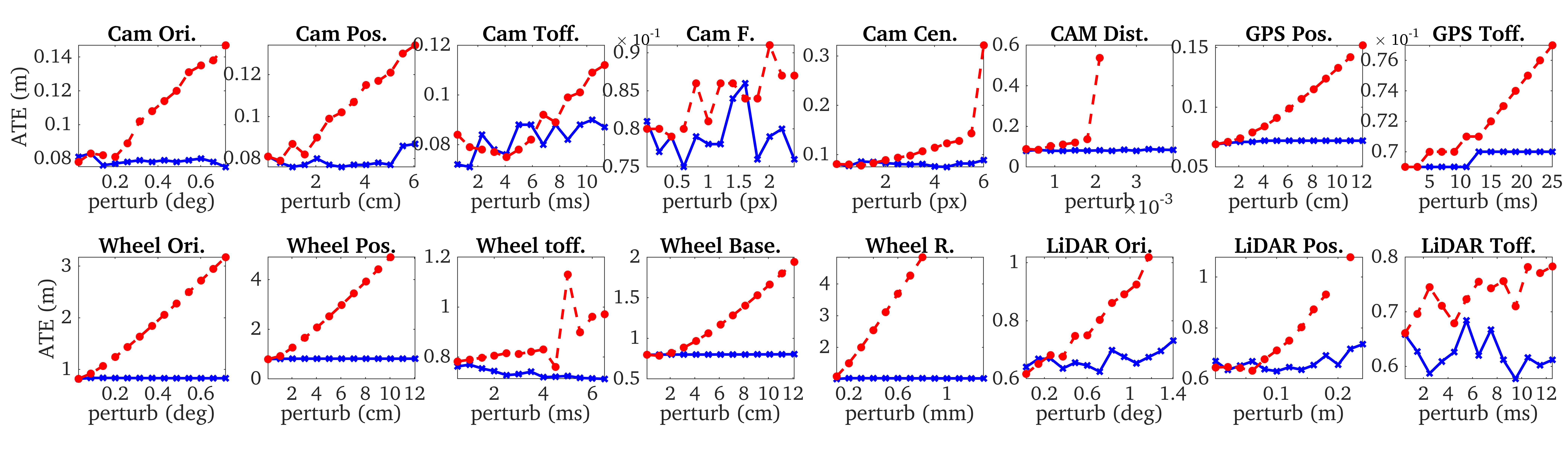}
\caption{
Position ATE under different levels of sensor calibration parameter perturbations. Results of with online calibration and without calibration are shown with red and blue lines.
}
\label{fig:sim_perturb_errors_ate}
\end{figure*}
\begin{figure*}[t]
\centering
\includegraphics[trim=0mm 0 0 0,clip,width=\textwidth]{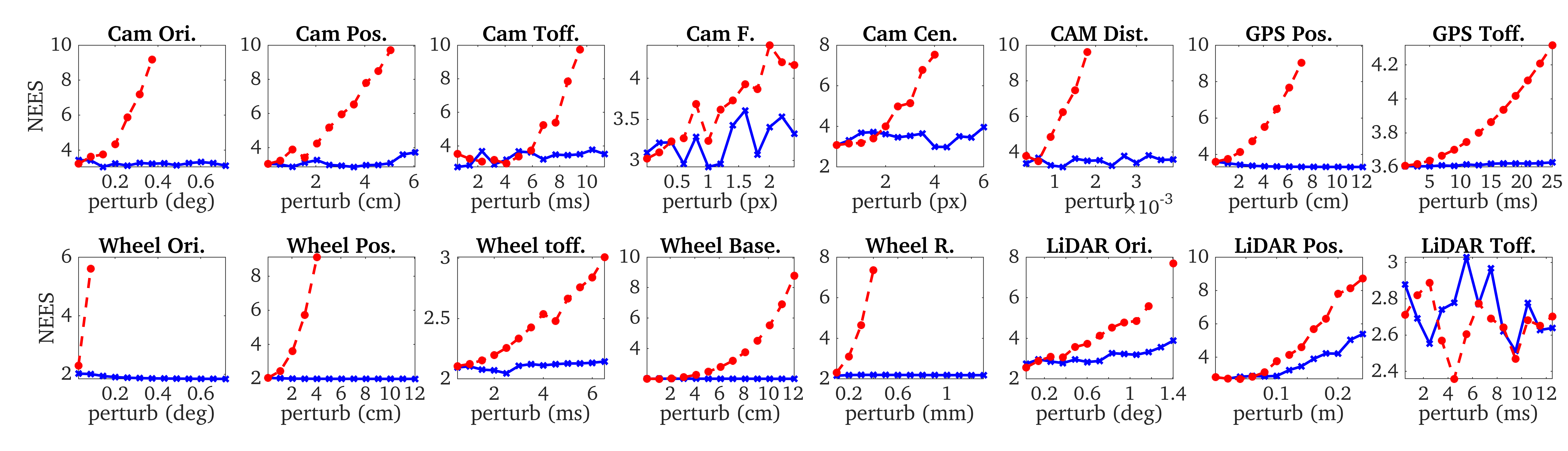}
\caption{
Position NEES under different levels of sensor calibration parameter perturbations. Results of with online calibration and without calibration are shown with red and blue lines.
}
\label{fig:sim_perturb_errors_nees}
\end{figure*} 
\subsection{Robust to Poor Calibration}

To evaluate the online calibration of the proposed MINS, we test each calibration giving 4 different initial errors.
Fig. \ref{fig:sim_calib_convergences} shows the calibration errors of spatiotemporal extrinsic and intrinsic of each sensor over time.
It is clear that each error converges quickly to near zero after the system is initialized and remains bounded by 3$\sigma$ envelopes showing the consistency of the online calibration.
Note we showed 1-dimensional errors by summing the error of each axis of the calibration parameter, e.g. the position error is the summation of x, y, z errors, for concise visualization while the results of the individual axis also shows the same consistent results.

We further validate the necessity of online calibration for accurate and consistent localization by enabling/disabling the calibration at different levels of initial calibration errors.
To be more specific, we investigated how robust the system is to the initial perturbations and whether the use of online sensor calibration enables improvements in accuracy and consistency.
As shown in Fig. \ref{fig:sim_perturb_errors_ate} and Fig. \ref{fig:sim_perturb_errors_nees}, for each of the different calibration parameters we perturb it with different levels of noise (note that we also change the initial covariance of the variable corresponds to the noise level used for perturbation).
We can see that the proposed estimator is relatively invariant to the initial inaccuracies of the parameters and is able to output a near-constant trajectory error and consistency.
An estimator, which does not perform this online estimation, has its trajectory estimation error and NEES values quickly increase to non-usable levels which validate the use of online calibration.

\section{Experimental Results} \label{ch:realworld}

We further evaluated the proposed MINS on both the public KAIST Urban Dataset \citep{jeong2019complex} and our own collected UD Husky Dataset, whose details are the following:
\begin{itemize}
    \item \textbf{KAIST Urban Dataset}: collected on ground vehicles in challenging scenarios -- highways (18-25, 35-37) and cities (26-34, 38-39) -- and have a 100 Hz Xsens MTi-300 IMU, a 10 Hz Pointgrey Flea3 stereo camera, two 10 Hz Velodyne 16 channel LiDARs, a 10Hz u-blox EVK-7P GNSS, and a 100 Hz RLS LM13 wheel encoders.
    \item \textbf{UD Husky Dataset}: collected on a Clearpath  Husky robot in structured environments (indoors and outdoors) and unstructured environments (trails). The dataset contains a T265 stereo camera (two 30 Hz cameras and a 200 Hz IMU), a 1 Hz Garmin GPS 18x,
    a 5 Hz Emlid reach m+ GPS (RTK), a 10 Hz Ouster 64 channel LiDAR, and a 10 Hz wheel encoders. We considered the OptiTrack information (indoor) or RTK-GPS measurements (outdoor, trail) as ground truth for evaluation (see Fig. \ref{fig:husky_dataset}).
\end{itemize}
We compared our MINS to the following state-of-the-art (SOTA) methods as  benchmarks:
\begin{figure*}[t]
\centering
\begin{subfigure}{.32\textwidth}
\includegraphics[trim=0mm 0mm 8mm 15mm,clip,width=\linewidth]{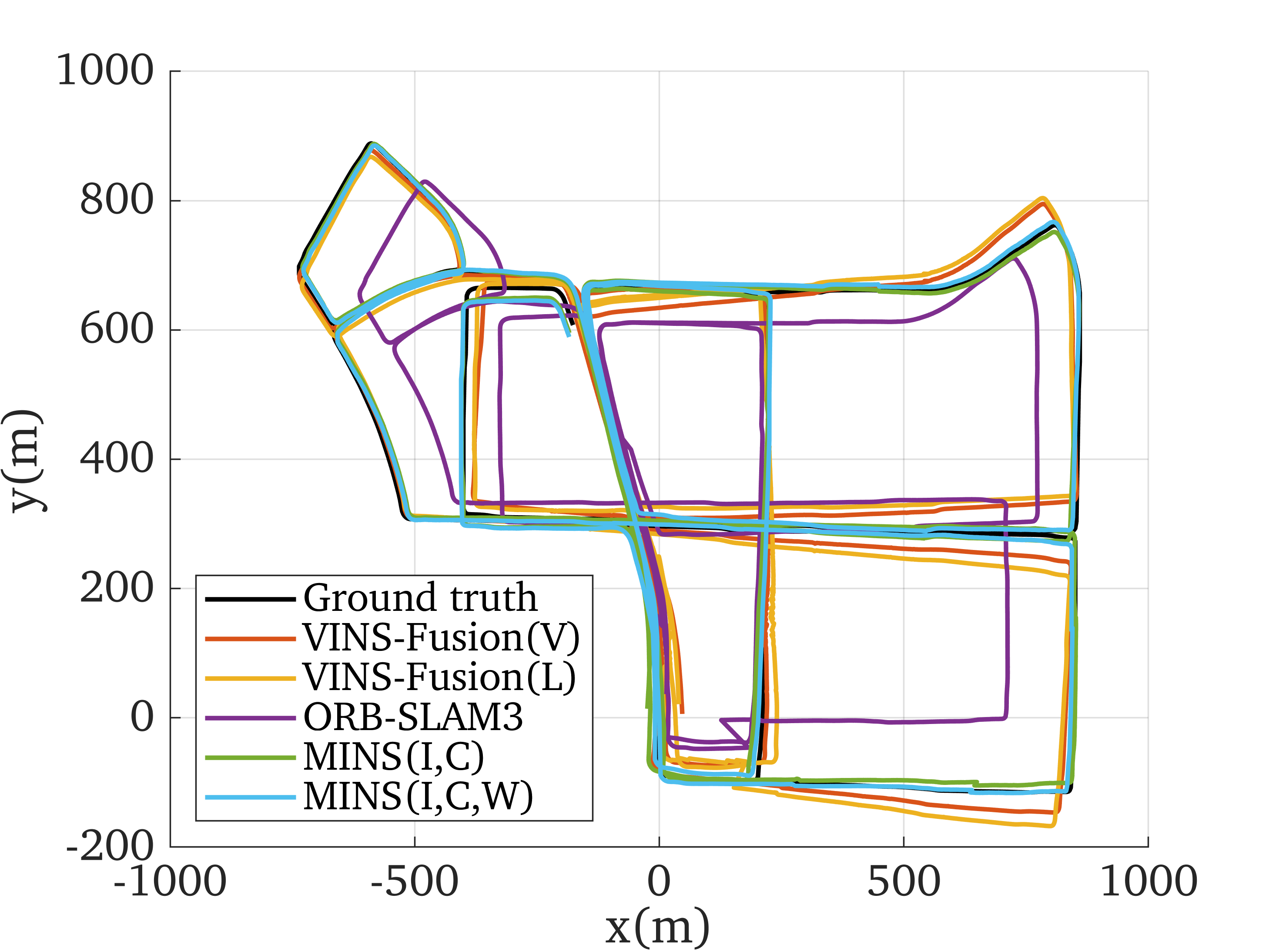}
\end{subfigure}
\begin{subfigure}{.32\textwidth}
\includegraphics[trim=0mm 0mm 8mm 15mm,clip,width=\linewidth]{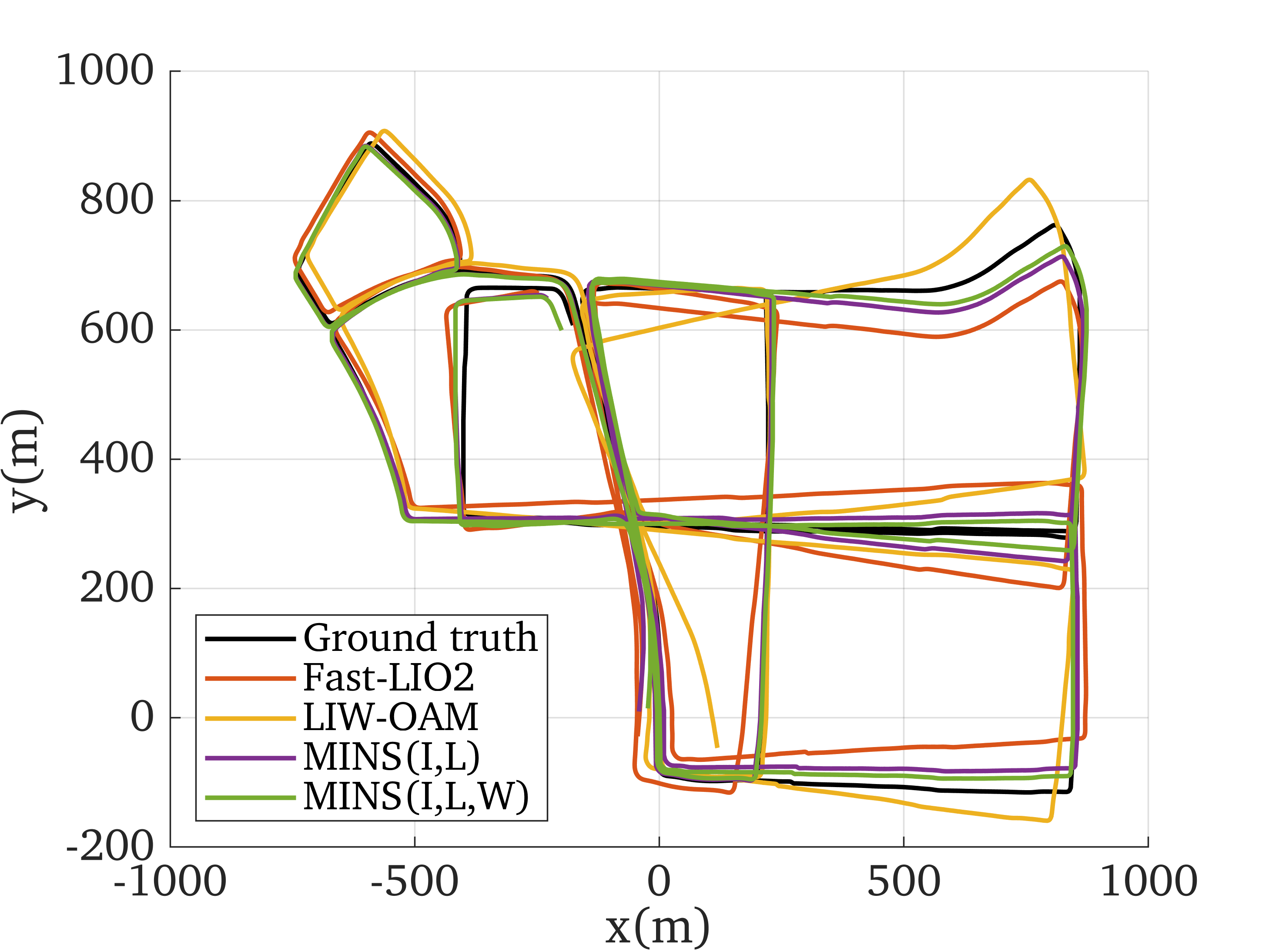}
\end{subfigure}
\begin{subfigure}{.32\textwidth}
\includegraphics[trim=0mm 0mm 8mm 15mm,clip,width=\linewidth]{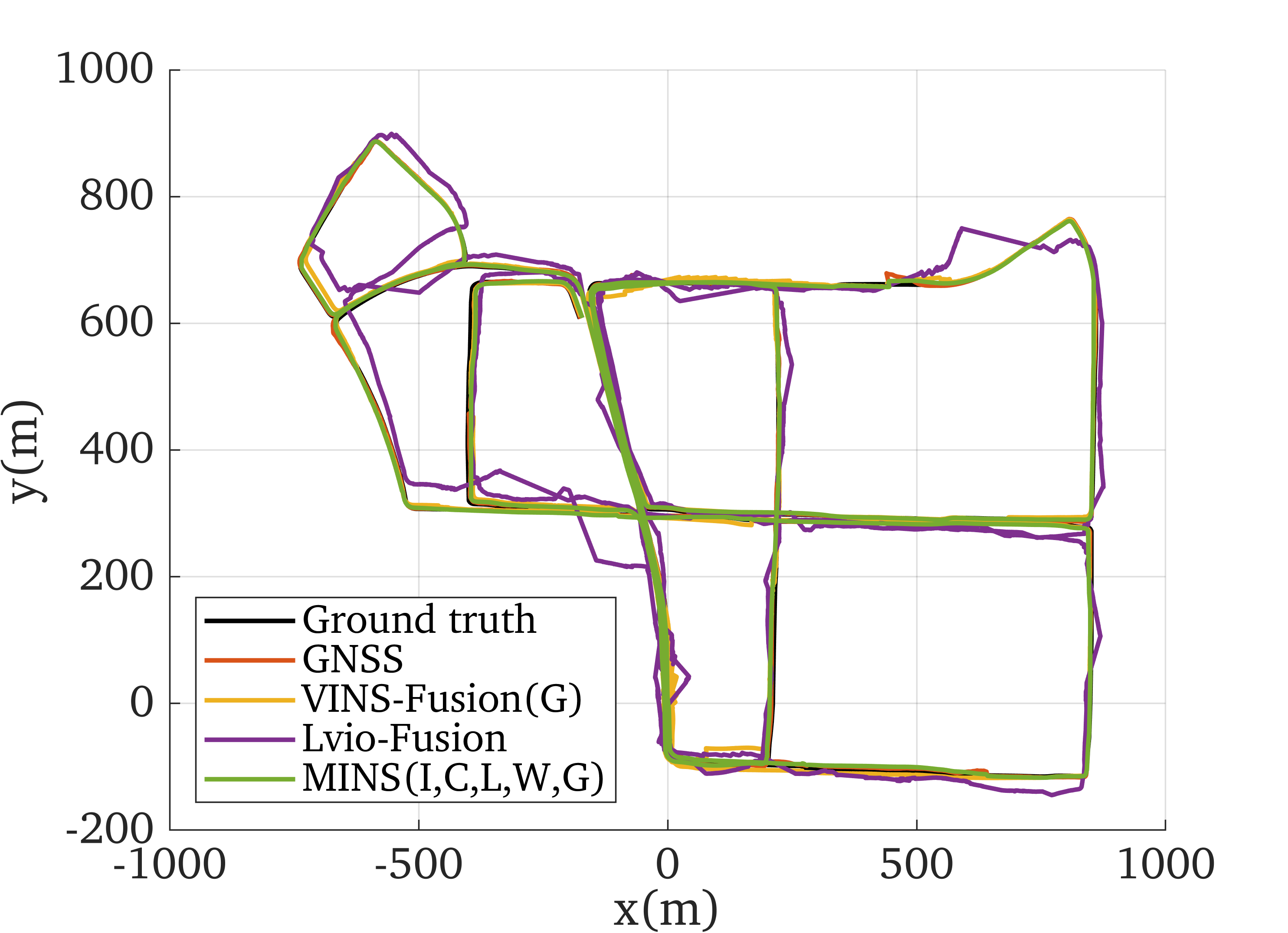}
\end{subfigure}
\caption{
Trajectories of each algorithm on the KAIST Urban 38 dataset. \textit{Left}: Camera-based algorithms; \textit{Middle}: LiDAR-based algorithms; \textit{Right}: GNSS-based algorithms.
}
\label{fig:kaist_traj}
\end{figure*}
\definecolor{Gray}{gray}{0.9}
\newcolumntype{g}{>{\columncolor{Gray}}c}
\begin{table*}[t]
\caption{Average (5 runs) position ATE (m) of each algorithm per 1 km on the KAIST Urban dataset (sequence 18 - 39). Above 30 m errors are not reported. Shaded sequences are highway scenarios and the others are city scenarios. The best results among the same sensor group are highlighted with bold text.}
\newcommand{\tmt}[2]{\multirow{#1}{*}{\rotatebox[origin=c]{90}{#2}}}
\begin{adjustbox}{width=1\textwidth,center}
\setlength{\tabcolsep}{3pt}
\begin{tabular}{ccggggggggcccccccccgggcc}
\toprule
\multicolumn{2}{c}{\multirow{2}{*}{\textbf{Algorithms}}}
&\textbf{18}&\textbf{19}&\textbf{20}&\textbf{21}&\textbf{22}&\textbf{23}&\textbf{24}&\textbf{25}&\textbf{26}&\textbf{27}&\textbf{28}&\textbf{29}&\textbf{30}&\textbf{31}&\textbf{32}&\textbf{33}&\textbf{34}&\textbf{35}&\textbf{36}&\textbf{37}&\textbf{38}&\textbf{39} \\
& & 4km & 3km & 3km & 4km & 3km & 3km & 4km & 3km & 4km & 5km & 11km  & 4km & 6km & 11km & 7km & 8km & 8km & 3km & 9km & 12km & 11km  & 11km  \\
\midrule
\tmt{5}{Camera}
& VINS-Fusion(V) & - & - & - & - & - & - & - & - & 8.05 & 10.34 & 1.54 & 10.03 & 24.83 & - & - & 11.81 & - & - & - & - & 2.23 & 2.11 \\
& VINS-Fusion(L) & - & - & - & - & - & - & - & - & 7.82 & 11.97 & 2.54 & 10.03 & 25.21 & - & - & 11.81 & - & - & - & - & 2.86 & 1.68 \\
& ORB-SLAM3 & 10.93 & - & 12.35 & - & - & 10.00 & - & - & 5.65 & 15.60 & 2.34 & 6.81 & 2.58 & \textbf{2.03} & \textbf{3.14} & 6.67 & - & - & - & 29.70 & 10.34 & 15.32 \\
& \textbf{MINS}(I,C) & - & - & - & - & - & - & - & - & 2.04 & \textbf{2.86} & 1.45 & 3.44 & 3.50 & 5.39 & 4.69 & 1.97 & \textbf{1.47} & - & - & - & 1.13 & \textbf{1.17}\\ 
& \textbf{MINS}(I,C,W) & \textbf{7.13} & \textbf{8.74} & \textbf{8.25} & \textbf{8.48} & \textbf{9.16} & \textbf{7.10} & \textbf{8.17} & \textbf{10.74} & \textbf{1.57} & 3.85 & \textbf{1.09} & \textbf{3.16} & \textbf{2.03} & 29.80 & 4.15 & 2.46 & 2.44 & \textbf{8.15} & \textbf{16.48} & \textbf{18.50} & \textbf{0.94} & 1.23\\  
\midrule 
\tmt{4}{LiDAR}
& FAST-LIO2 & - & - & - & - & - & - & - & - & 9.75 & 7.59 & - & \textbf{3.01} & \textbf{4.88} & - & 11.14 & 9.90 & - & - & - & - & 4.74 & \textbf{1.64}\\ 
& LIW-OAM & - & 13.47 & 24.60 & - & - & - & - & - & 3.59 & - & 6.42 & 5.84 & 20.49 & 15.60 & - & 20.39 & - & - & 18.70 & - & 6.86 & 25.99\\
& \textbf{MINS}(I,L) & - & - & - & - & - & - & - & - & 2.69 & \textbf{2.97} & \textbf{2.53} & 3.97 & 6.21 & 5.11 & 5.68 & 9.17 & \textbf{5.30} & - & - & - & 3.50 & 2.17\\ 
& \textbf{MINS}(I,L,W) & \textbf{6.96} & \textbf{8.11} & \textbf{4.21} & \textbf{2.99} & \textbf{6.91} & \textbf{4.25} & \textbf{6.85} & \textbf{3.76} & \textbf{2.24} & 4.15 & 2.71 & 4.40 & 5.47 & \textbf{5.01} & \textbf{4.72} & \textbf{7.92} & 7.86 & \textbf{6.13} & \textbf{12.01} & \textbf{7.96} & \textbf{3.24} & 2.50 \\ 
\midrule 
\tmt{4}{GNSS}
& GNSS & 2.16 & 4.13 & 1.53 & \textbf{0.64} & \textbf{0.53} & 2.61 & 2.61 & 4.10 & 1.74 & 1.90 & 0.87 & 3.07 & 2.59 & 0.70 & \textbf{0.69} & 1.21 & \textbf{0.47} & \textbf{2.46} & 2.98 & 0.67 & 0.68 & 0.67 \\ 
& VINS-Fusion(G) & - & - & - & - & - & - & - & - & 4.02 & - & 1.60 & 4.77 & 9.35 & 8.02 & - & 5.80 & - & - & - & - & 1.07 & 2.26 \\ 
& Lvio-Fusion & - & - & 27.43 & 20.77 & - & 24.93 & - & - & 2.70 & 5.20 & 1.32 & 13.27 & 6.64 & - & 2.55 & 2.37 & - & - & - & - & 5.57 & 5.70\\ 
& \textbf{MINS}(I,C,L,W,G) & \textbf{1.52} & \textbf{3.46} & \textbf{0.85} & 0.79 & 1.50 & \textbf{1.38} & \textbf{1.86} & \textbf{3.44} & \textbf{1.06} & \textbf{1.72} & \textbf{0.79} & \textbf{2.58} & \textbf{1.83} & \textbf{0.70} & 0.77 & \textbf{1.16} & 0.94 & 5.47 & \textbf{2.38} & \textbf{0.60} & \textbf{0.64} & \textbf{0.56} \\
\bottomrule 
\end{tabular}
\end{adjustbox}
\label{tab:kaist_ATE}
\end{table*}
\vspace{-0.3cm}
\begin{itemize}
    \item \textbf{VINS-Fusion} \citep{qin2019general}: Camera and IMU are fused based on the sliding window (VIO) and loop closure or GNSS information can be loosely coupled within pose graphs. The system supports dynamic initialization. We evaluated the VIO part VINS-Fusion(V), with loop closure VINS-Fusion(L), and with GPS VINS-Fusion(G). Note the algorithm assumes the GNSS and VIO are synchronized which is not the case in the dataset. We, therefore, allowed a maximum of 50 ms mismatch between VIO and GNSS.
\item \textbf{ORB-SLAM3} \citep{campos2021orb}: Camera and IMU are fused within a factor graph performing multimap-based local BA and loop closure. The system supports dynamic initialization.
    
    \item \textbf{FAST-LIO2} \citep{xu2022fast}: LiDAR and IMU are fused based on iterated Kalman filter. The system models LiDAR measurement as point-on-plane constraints building a global map using ikd-tree \citep{cai2021ikd}. The system supports static initialization.
    
\item \textbf{LIW-OAM} \citep{yuan2023liw}: LiDAR, IMU, and wheel are fused in local BA fashion using point-on-plane constraints as LiDAR measurement model. 
    The system builds a global voxel map and supports dynamic initialization using LiDAR and IMU. 

\item \textbf{Lvio-Fusion} \citep{jia2021lvio}: Camera, LiDAR, and IMU are tightly coupled within the factor graph while GNSS is coupled loosely within the pose graph. The initialization is conducted using the camera and IMU supporting dynamic initialization.
\end{itemize}
Note that all the above LiDAR-based methods can fuse only one LiDAR sensor which showed  poor performance on the KAIST dataset due to its limited overlapping points between the LiDAR scans. 
To make fair comparisons, we modified the KAIST dataset by transforming all the second LiDAR pointcloud to the first LiDAR's frame and created one synthetic 20 Hz LiDAR measurement while MINS used the original data.

\subsection{Localization Accuracy}
\begin{figure}[t]
\centering
\begin{subfigure}{.49\columnwidth}
\includegraphics[trim=0mm 0 0mm 0,clip,width=\linewidth]{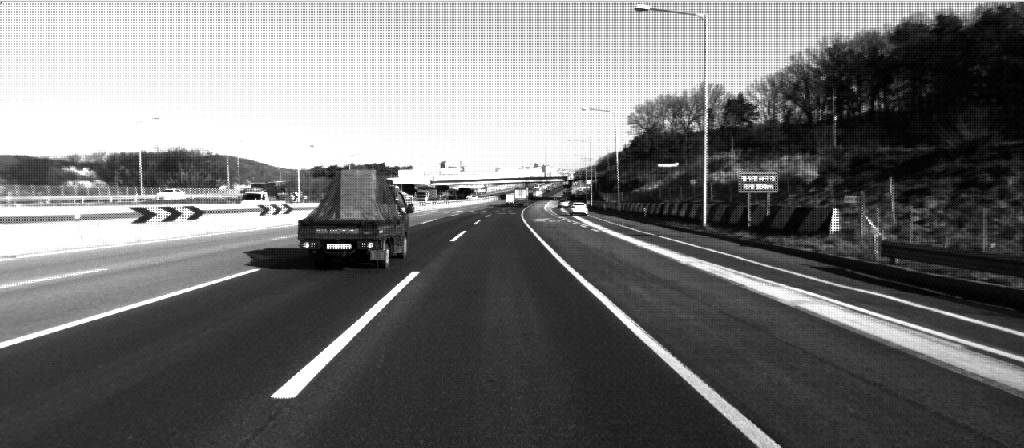}
\end{subfigure}
\begin{subfigure}{.49\columnwidth}
\includegraphics[trim=0mm 0 0mm 0,clip,width=\linewidth]{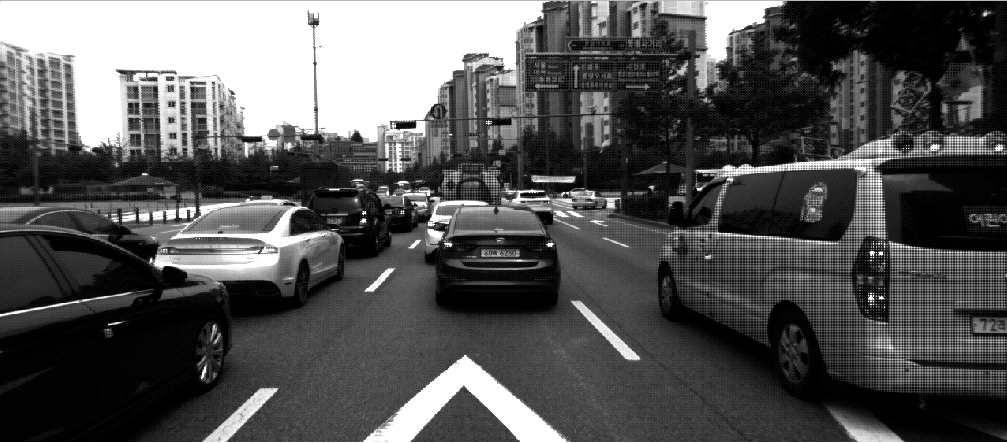}
\end{subfigure}
\caption{
Representative images from the datasets. The Left shows the beginning of sequence 18 which is on the highway and the right shows the city driving scene of sequence 38 where the cars stopped at the traffic light which will become dynamic objects of the camera and LiDAR when the light turns green.
}
\label{fig:kaist_camera}
\end{figure}

\subsubsection{KAIST Urban Dataset}

Table \ref{tab:kaist_ATE} shows the average position ATE of 5 runs of each algorithm on each sequence (above 30 m error not reported) and Fig. \ref{fig:kaist_traj} shows exemplary trajectories of all algorithms tested on the KAIST urban 38.
Note that we show the ATE values per 1 km to match the scale of different trajectory lengths.
The shaded sequences represent the datasets collected on highways while the vehicle was running at high speed and the other sequences are collected within the cities.

For the camera-based methods, the subsets of MINS using IMU and camera MINS(I,C) and IMU, camera, and wheel MINS(I,C,W) are evaluated along with SOTA. 
It is clear that all the methods that leverage only the camera and IMU performed poorly on highway datasets due to initialization failure (see Fig. \ref{fig:kaist_camera} left). 
As the datasets start with high speed and mostly straight-line motion, none of those methods was successfully initialized except ORB-SLAM3 on a few sequences with sill large errors.
On the other hand, MINS(I,C,W) leveraged wheel measurements to successfully initialize and perform accurate localization without failure.
For the city sequences, all VINS methods initialized successfully, but their performances were largely affected by the dynamic objects (see an exemplary case in Fig. \ref{fig:kaist_camera} right).
MINS handled the issue by performing all camera update with the MSCKF technique (see Eq.~\eqref{eq:cam_res_nullspace_prj}) which minimizes the effect of the dynamic objects due to shorter tracking period. 
To this end, the camera-based MINS modules are shown to outperform the other methods and are able to robustly handle hard scenarios.

For the LiDAR-based methods, the subsets of MINS using IMU and LiDAR MINS(I,L) and IMU, LiDAR, and wheel MINS(I,L,W) are evaluated along with SOTA. 
Similar to the camera-based methods, the initialization was the biggest issue on the highway sequences.
FAST-LIO2, MINS(I,L), and LIW-OAM mostly failed in initialization and quickly diverged while MINS(I,L,W) was able to initialize with wheel measurements and perform estimation consistently.
On city sequences, relatively large orientation drifts of both FAST-LIO2 and LIW-OAM (see Fig. \ref{fig:kaist_traj}) are observed when the vehicle is making turns.
The map density (0.3, 0.3, and 1.5 unit voxel size for FAST-LIO2, MINS, and LIW-OAM, respectively) and the lack of overlapping point clouds between the map and the new scan which creates an ill-constrained ICP problem are presumed to be the key reasons, while MINS was more robust to the scenarios even though all methods adopt the same point-on-plane measurement model.

For the GNSS-based methods, MINS using all the sensors MINS(I,C,L,W,G) is evaluated along with SOTA.
Note that we recorded the estimated poses of SOTA in real-time, not the final optimized trajectory at the end of the run.
We also showed the raw GNSS measurements for reference.
Overall, VINS-Fusion(G) and Lvio-Fusion were not able to show consistent estimation on both highway and city datasets even with GNSS global information due to their loosely coupling manner. 
This is because their sub-systems, VIO of VINS-Fusion and LIO of Lvio-Fusion suffered from initialization and dynamic objects that they kept injecting inconsistent odometry information to the pose graph making global estimation highly inconsistent and resulting in even worse estimation performance than just GNSS record.
On the other hand, MINS tightly coupled GNSS information with other sensors so that the state estimation was able to properly constrain its drift showing globally accurate and locally precise estimation performance.

\begin{figure}[t]
\centering
\begin{subfigure}{\columnwidth}
\includegraphics[trim=0mm 5mm 105mm 0,clip,width=\linewidth]{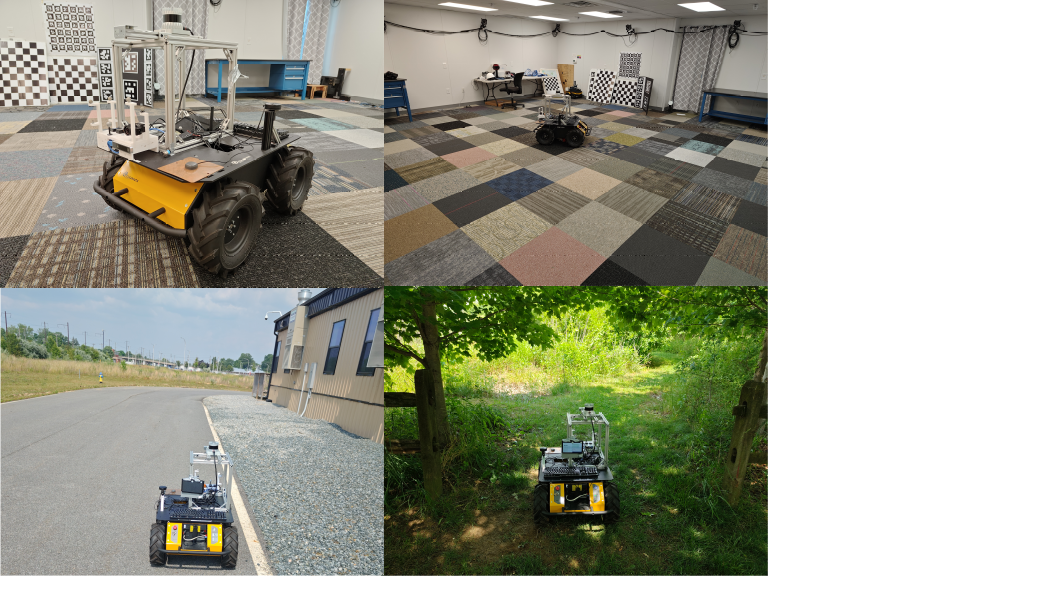}
\end{subfigure}
\caption{
Husky robot and exemplary scenes of UD Husky Dataset. \textit{Top-Left}: Husky robot with multiple sensors mounted. \textit{Top-Right}: Indoor dataset. \textit{Bottom-Left}: Outdoor dataset. \textit{Bottom-Right}: Trail dataset.
}
\label{fig:husky_dataset}
\end{figure}

\begin{figure}[t]
\centering
\begin{subfigure}{.67\columnwidth}
\includegraphics[trim=7mm 2mm 25mm 10mm,clip,width=\linewidth]{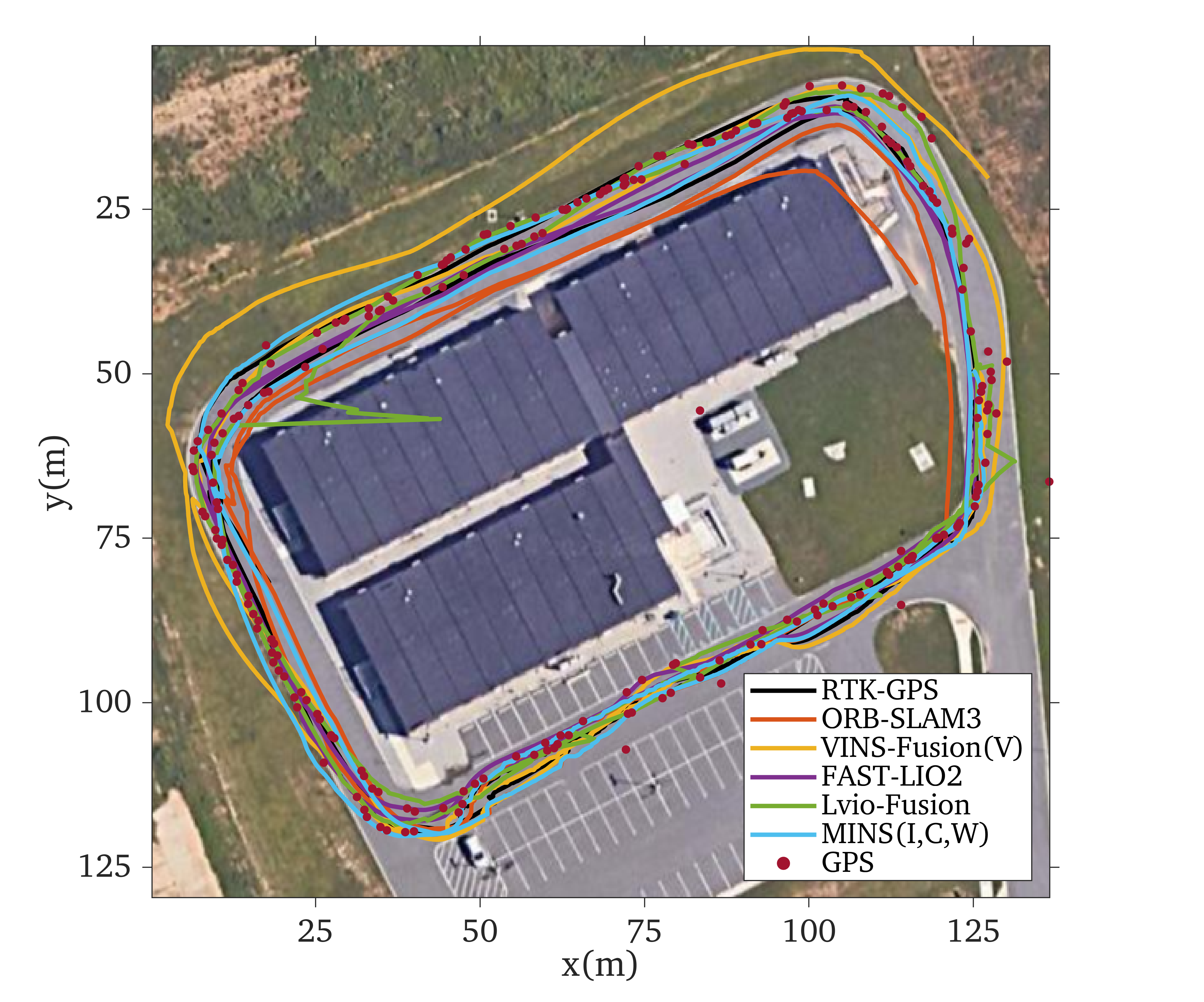}
\end{subfigure}
\begin{subfigure}{.32\columnwidth}
\includegraphics[trim=5mm 2mm 21mm 10mm,clip,width=\linewidth]{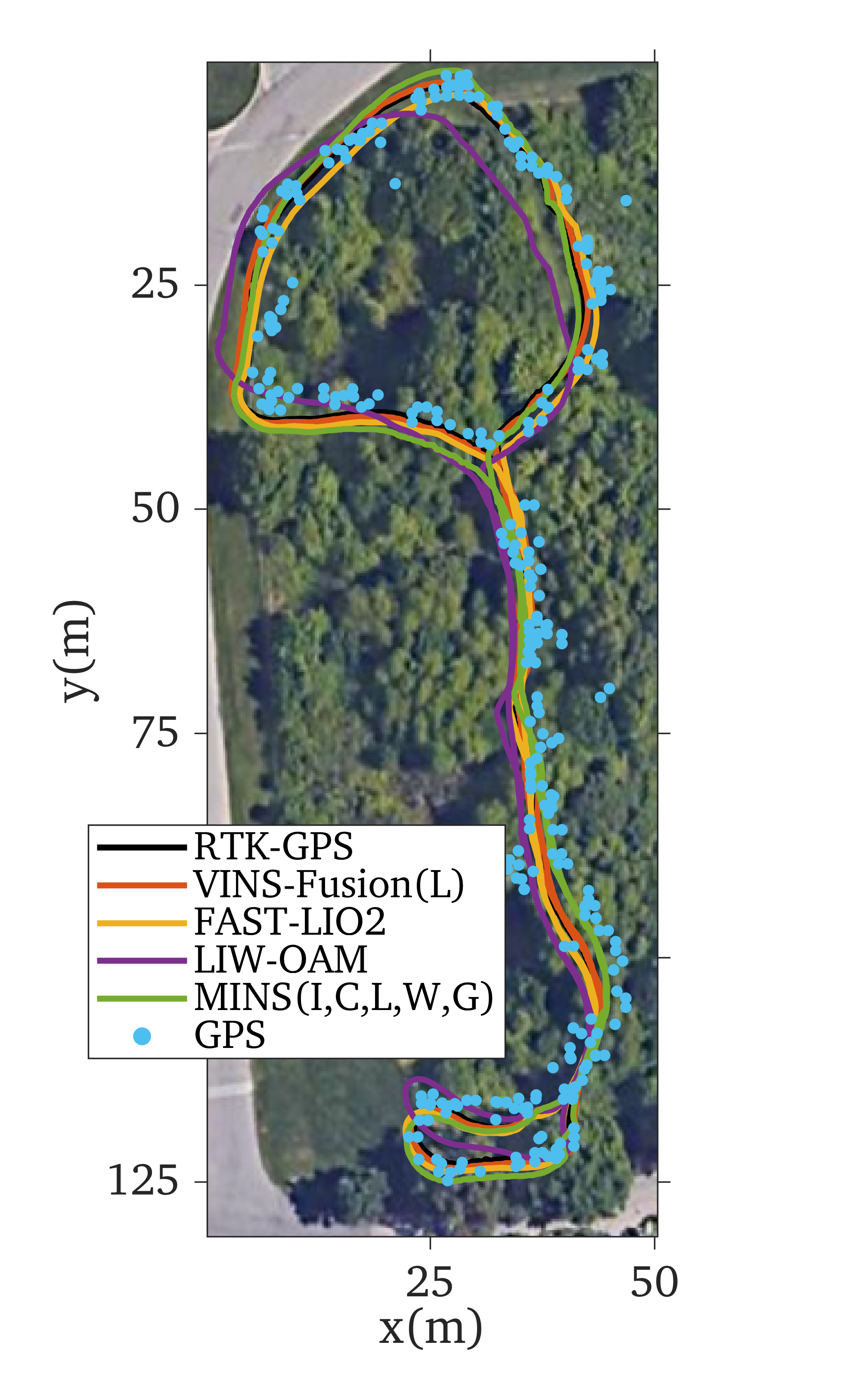}
\end{subfigure}
\caption{
Trajectories of each algorithm on UD Husky Dataset. \textit{left}: Outdoor 1; \textit{right}: Trail 3.
}
\label{fig:husky_example}
\end{figure}

\begin{table}[t]
\centering
\caption{Average (5 runs) position ATE (m) of each algorithm on the UD Husky dataset. Above 5 m errors are not reported (except GNSS for reference).}
\newcommand{\tmt}[2]{\multirow{#1}{*}{\rotatebox[origin=c]{90}{#2}}}
\begin{adjustbox}{width=1\columnwidth,center}
\begin{threeparttable}
\setlength{\tabcolsep}{3.3pt}
\begin{tabular}{cccccccccc}
\toprule
\multicolumn{2}{c}{\multirow{2}{*}{\textbf{Algorithms}}}
& \multicolumn{4}{c}{\textbf{Structured Env.}} 
& \multicolumn{4}{c}{\textbf{Unstructured Env.}}\\ 
& & \textbf{I1} & \textbf{I2} & \textbf{O1} & \textbf{O2} & \textbf{T1} & \textbf{T2} & \textbf{T3} & \textbf{T4} \\
\midrule
\tmt{5}{Camera}
& VINS-Fusion(V)        & 0.63  & 0.71  & -     & 4.39  & 4.82 & 4.92 & 1.60 & -\\
& VINS-Fusion(L)        & 0.55  & 0.49  & -  & 4.07  & 3.36 & 4.31 & \textbf{0.92} & 3.11\\
& ORB-SLAM3             & 0.50  & \textbf{0.48}  & 4.51  & -     & 2.50 & -    & 1.69 & \textbf{2.74} \\
& \textbf{MINS}(I,C)    & 0.33  & 0.73  & 2.93  & 3.92  & 3.16 & 3.19 & 2.64 & 4.66\\ 
& \textbf{MINS}(I,C,W)  & \textbf{0.22}  & 0.67  & \textbf{1.39}  & \textbf{2.57}  & \textbf{2.40} & \textbf{1.95} & 2.16 & 3.83\\  
\midrule 
\tmt{4}{LiDAR}
& FAST-LIO2             & 0.18 & 0.16 & 2.34 & 2.74 & \textbf{0.95} & 2.82    & 1.37 & 2.02 \\
& LIW-OAM               & 0.12 & 0.16 & \textbf{1.01} & 2.92 & 2.36 & 3.61 & 2.37 & 3.04\\
& \textbf{MINS}(I,L)    & \textbf{0.07} & \textbf{0.11} & 1.73 & 2.05 & 1.64 & 1.28 & 1.93 & 2.13\\
& \textbf{MINS}(I,L,W)  & 0.08 & 0.16 & 1.35 & \textbf{1.99} & 1.35 & \textbf{1.05} & \textbf{1.25} & \textbf{1.62}\\
\midrule 
\tmt{4}{GNSS}
& GNSS                      & & & 3.17  & 2.17 & 15.71  & 10.34 & 5.93 & 7.02 \\
& VINS-Fusion(G)            & & & -     & 3.25 & -      & -     & -    & -\\
& Lvio-Fusion               & & & 3.14  & 2.63 & - & - & - & 4.02\\
& \textbf{MINS}(I,C,L,W,G)  & & & \textbf{0.96}  & \textbf{1.07} & \textbf{3.03} & \textbf{2.89} & \textbf{1.33} & \textbf{1.30}\\
\bottomrule 
\end{tabular}
\begin{tablenotes} \footnotesize
\raggedleft \item[*] I: Indoor dataset, O: Outdoor dataset, T: Trail dataset
\end{tablenotes}
\end{threeparttable}
\end{adjustbox}
\label{tab:husky_ATE}
\end{table}

\subsubsection{UD Husky Dataset}

Table \ref{tab:husky_ATE} shows the average position ATE of 5 runs of the same set of algorithms on each UD Husky Dataset (above 5 m error not reported).
Note that all the sequences of the dataset start with the robot standing still enabling all the methods to successfully initialize.

For the camera-based methods, overall the algorithms that leverage only the camera and IMU reported poor localization performance showing large trajectory scale error (see Fig. \ref{fig:husky_example}).
This is because the ground robot mainly undergoes the degenerate motion \citep{wu2017vins} which makes the scale of VINS unobservable.
On the other hand, MINS(I,C,W) was able to constrain the scale from wheel information and showed higher accuracy.

The LiDAR-based methods were able to show smaller ATE under the same motion profiles as they directly gained the scale information from the 3D pointcloud, and MINS was able to outperform the others in most of the sequences.
An observation is that the wheel information did not improve the performance in the indoor dataset. 
The dataset contains many sharp-turning motions of the robot which our wheel model (see Eq. \eqref{eq:wheel_preint_function}) may not be able to accurately represent the actual robot motion, thus wheel information may harm the estimation performance especially when it is centimeter-level accuracy.

Compared to the GPS of the KAIST dataset, our GPS measurements were more intermittent and noisy as shown in the table which imposes a larger challenge in fusing the information.
The loosely-coupled methods, VINS-Fusion(G) and Lvio-Fusion, failed most of the sequences, due to their VIO drift or large GPS noise. 
On the other hand, MINS was able to show consistent estimation by tightly fusing all sensor information.
However, it is also shown that fusing all the sensors did not necessarily return the best localization performance. 
For example, in Trail 1 - 3 sequences, MINS(I,C,L,W,G) performed worse than both MINS(I,L,W) and MINS(I,C,W) which indicates a proper choice of sensors can lead to a better performance in certain scenarios.

\subsection{Computation Efficiency}

Table \ref{tab:time_benchmark} shows the timing breakdown (frontend, backend, and total) of each algorithm collected on ThinkPad P17 which has an Intel i7 CPU and 32 GB RAM.
The timings are reported as the average time taken per the function call on the KAIST Urban 38 dataset.

The frontend includes all the processes required before optimization or EKF update, such as IMU preintegration, image processing, or LiDAR pointcloud matching.
The camera is reported to be the one that took the most time to process. 
Two subsets of MINS, MINS(I,C) and MINS(I,C,L,W,G), reported different camera timing because we lowered the number of features extracted from images to balance the computation with LiDAR. 
Other frontend timings including IMU, GNSS, wheel, and LiDAR are shown to be very low for all the algorithms.

The backend includes the map management and the optimization (or EKF update) which shows large differences among the algorithms.
Those algorithms using cameras mostly use the map to find loop closures while those using LiDARs build the map to perform scan matching.
VINS-Fusion and ORB-SLAM3 perform the loop closure detection taking 70 ms and 10 ms on average, while MINS does not maintain a map of the camera for computational efficiency.
Both FAST-LIO2 and subsets of MINS utilized ikd-tree \citep{cai2021ikd} to efficiently manage the LiDAR map points, but MINS took more time because it keeps more pointclouds and transforms all the map points to a new anchor frame occasionally (see Sec.~\ref{ch:lidar_mapping}).
Other LiDAR methods recorded similar or larger map-management time.

The optimization (or EKF update) time also shows large variations among the algorithms. 
MINS and FAST-LIO2, which are the filters, showed the shortest time due to their small state size and the number of iterations.
Especially, MINS recorded 1 ms for the EKF update regardless of the number of sensors being used, due to the measurement compression and the dynamic cloning which reduced the state size and the computation.  
Other graph-based methods report one or two orders of magnitude larger optimization time due to their large state size (number of nodes). 
Note we reported the summation of the optimization time if an algorithm runs multiple threads that do optimization, e.g. VINS-Fusion(G) optimization time is computed by summing VINS-Fusion(V) optimization time and global graph optimization time.

Lastly, Table~\ref{tab:time_benchmark}  also shows the total time which can represent the amount of time expected if all processes run in serial. 
Clearly, MINS and FAST-LIO2 reported the smallest computation time due to the small optimization time.
While FAST-LIO2 was the fastest, MINS recorded the second fastest results and showed almost invariant computation time to the number of sensors being used.
Other graph-based methods show larger time records mostly determined by the optimization time.
Note that a large total time does not mean the algorithm cannot run in real-time (actually all the algorithms ran in real-time on the laptop) as the processes can be multi-threaded. 
However, their performance can be largely affected on embedded systems as their computation power is relatively limited.

\begin{table}
\centering
\caption{Timing records on the KAIST Urban 38 (\textit{ms}).}
\newcommand{\tmt}[2]{\multirow{#1}{*}{\rotatebox[origin=c]{90}{#2}}}
\begin{adjustbox}{width=1\columnwidth,center}
\setlength{\tabcolsep}{5.3pt}
\begin{tabular}{cccccccccc}
\toprule
\multicolumn{2}{c}{\multirow{2}{*}{\textbf{Algorithms}}}
& \multicolumn{5}{c}{\textbf{Frontend}} 
& \multicolumn{2}{c}{\textbf{Backend}}
& \multirow{2}{*}{\textbf{Total}}\\ 
& & \textbf{I} & \textbf{C} & \textbf{G} & \textbf{W} & \textbf{L} & \textbf{Map} & \textbf{Opt.} &  \\
\midrule
\tmt{5}{Camera}
& VINS-Fusion(V) & 0 & 27 & - & - & - & - & 68 & 95\\
& VINS-Fusion(L) & 0 & 27 & - & - & - & 70 & 125 & 222\\
& ORB-SLAM3 & 0 & 28 & - & - & - & 10 & 290 & 328 \\
& \textbf{MINS}(I,C) & 0 & 53 & - & - & - & - & 1 & 54\\ 
& \textbf{MINS}(I,C,W) & 0 & 55 & - & 0 & - & - & 1 & 56\\  
\midrule 
\tmt{4}{LiDAR}
& FAST-LIO2 & 0 & - & - & - & 1 & 5 & 14 & 20\\
& LIW-OAM & 0 & - & - & 0 & 7 & 40 & 48 & 95\\
& \textbf{MINS}(I,L) & 0 & - & - & - & 6 & 22 & 1 & 28\\
& \textbf{MINS}(I,L,W) & 0 & - & - & 0 & 7 & 23 & 1 & 31\\
\midrule 
\tmt{3}{GNSS}
& VINS-Fusion(G) & 0 & 27 & - & - & - & - & 457 & 484\\
& Lvio-Fusion & 0 & 18 & 0 & - & 2 & 20 & 106 & 146\\
& \textbf{MINS}(I,C,L,W,G) & 0 & 34 & 0 & 0 & 2 & 10 & 1 & 47\\
\bottomrule 
\end{tabular}
\end{adjustbox}
\label{tab:time_benchmark}
\end{table}

\section{Conclusions and Future Work}

In this paper, we have developed a robust Multi-sensor-aided Inertial Navigation System (MINS) that integrates an IMU, a pair of wheel encoders, and arbitrary numbers of cameras, LiDARs, and GNSS for robust and accurate state estimation. 
The proposed MINS addresses some of the key challenges of multisensor fusion by introducing consistent high-order on manifold state interpolation, dynamic cloning for managing state size and computation, online calibration of sensor parameters, and IMU-wheel combined initialization. 
The proposed approach has been validated through extensive evaluations conducted in realistic simulations and challenging real-world datasets,
showing  superior accuracy and consistency while maintaining lower computational complexity compared to the SOTA methods.
Our future work will explore the optimal weighting of multi-modal measurements, 
which plays a crucial role in multisensor fusion and even can harm the estimator performance if not set properly as shown in our experimental results.

\begin{acks}
This work was partially supported by the University of Delaware (UD) College of Engineering, 
the NSF (IIS-1924897), and the ARL (W911NF-19-2-0226).
\end{acks}

\bibliographystyle{libraries/SageH}
\bibliography{libraries/library, libraries/rpng}

\appendix
\section{Jacobians of Camera Measurements} \label{ch:apdx_cam}

Given the nested functions (see, Eq.~\eqref{eq:cam_meas_nested}), we can leverage the chain rule to find the full Jacobian matrix of the camera measurement model in respect to the state:
\begin{align*}
    \H{C} := \frac{\partial \mathbf{z}_{C}}{\partial \x{k}}
    =
    \underbrace{\frac{\partial \mathbf h_d  }{\partial \mathbf{z}_{n}}}_{\H{d1}}
    \underbrace{\frac{\partial \mathbf h_\rho  }{\partial \p{F}{C_k}}}_{\H{\rho}}
    \underbrace{\frac{\partial \mathbf h_t  }{\partial \x{k}}}_{\H{t}}
    +
    \underbrace{\frac{\partial \mathbf h_d  }{\partial \x{CI}}}_{\H{d2}}
\end{align*}
Note the second term of the above equation actually should be $\frac{\partial \mathbf h_d  }{\partial \x{CI}} \frac{\partial \x{CI}  }{\partial \x{k}}$ which the additional term $\frac{\partial \x{CI}}{\partial \x{k}}$ maps $\frac{\partial \mathbf h_d}{\partial \x{CI}}$ into a proper position in the full Jacobian.
However, we will omit the trivial mapping terms  for  brevity.
\subsection {Distortion} \label{ch:apdx_cam_intrinsic}
The Jacobian with respect to the normalized coordinates can be obtained as:
\begin{align*}
\H{d1} =&
\begin{bmatrix}
h_{d11} & h_{d12} \\
h_{d21} & h_{d22}
\end{bmatrix} \\
h_{d11} =& f_x ((1+k_1r^2+k_2r^4)+(2k_1x_n^2+4k_2x_n^2(x_n^2+y_n^2)) \\ &+2p_1y_n+(2p_2x_n+4p_2x_n))\\
h_{d12} =& f_x (2k_1x_ny_n+4k_2x_ny_n(x_n^2+y_n^2)+2p_1x_n+2p_2y_n)\\
h_{d21} =& f_y (2k_1x_ny_n+4k_2x_ny_n(x_n^2+y_n^2)+2p_1x_n+2p_2y_n)\\
h_{d22} =& f_y ((1+k_1r^2+k_2r^4)+(2k_1y_n^2+4k_2y_n^2(x_n^2+y_n^2))\\ &+(2p_1y_n+4p_1y_n)+2p_2x_n)
\end{align*}
and the Jacobian in respect to the intrinsic parameters:
\begin{align*}
    \H{d2} = 
\setlength\arraycolsep{1.2pt}
    \begin{bmatrix}
    x & 0 & 1 & 0 & f_x x_n r^2 & f_x x_n r^4 & 2f_x x_n y_n) & f_x (r^2+2x_n^2)  \\
    0 & y & 0 & 1 & f_y y_n r^2 & f_y y_n r^4 & f_y (r^2+2y_n^2) & 2f_y x_n y_n
    \end{bmatrix}
\end{align*}
where definition of each variable $x$, $y$, $ x_n$, $y_n$, $f_x$, $f_y$, $c_x$, $c_y$, $k_1$, $k_2$, $p_1$, $p_2$, and $r$ can be found in Sec.~\ref{ch:cam_distortion}.

\subsection {Perspective Projection}
The Jacobian matrix is defined as follows:
\begin{align*}
    \H\rho = 
\begin{bmatrix}
        \frac{1}{{}^{C_k}z} & 0 & \frac{-{}^{C_k}x}{({}^{C_k}z)^2} \\
        0 & \frac{1}{{}^{C_k}z} & \frac{-{}^{C_k}y}{({}^{C_k}z)^2}
    \end{bmatrix}
\end{align*}
where $\p{F}{C_k} = [{}^{C_k}x ~ {}^{C_k}y ~ {}^{C_k}z]^\top$.
\subsection{Euclidean Transformation}
The Jacobian matrix of Euclidean transformation can be represented using the chain rule to the camera pose (interpolated) and the camera feature:
\begin{align*}
    \H{t}
    =& 
    \begin{bmatrix}
        \frac{\partial \mathbf h_t }{\partial \ang{E}{C_k}} ~
        \frac{\partial \mathbf h_t }{\partial \p{C_k}{E}}
    \end{bmatrix}\HA{C,k} + \frac{\partial \mathbf h_t }{\partial \p{F}{E}}
    :=
    \H{e}\HA{C,k} + \H{F} \hspace{-85mm}\\
\frac{\partial \mathbf h_t }{\partial \ang{E}{C_k}} &= \skw{\p{F}{C_k}}, 
&&\frac{\partial \mathbf h_t }{\partial \p{C_k}{E}} = -\R{E}{C_k}, 
&&&\frac{\partial \mathbf h_t }{\partial \p{F}{E}} = \R{E}{C_k}
\end{align*}
where the definition of remaining chain $\HA{C,k}$, the camera pose to the interpolating IMU poses, and the spatiotemporal extrinsic calibration parameters, can be found in Appendix~\ref{ch:apdx_interpolation} (linear interpolation) or Appendix~\ref{ch:apdx_interpolation_high} (high-order interpolation).
The final form of the camera Jacobian matrix is:
\begin{align*}
    \H{C} = \H{d1}\H{\rho}\H{e}\HA{C,k} + \H{d1}\H{\rho}\H{F} + \H{d2}
\end{align*} \section{Wheel Odometry Measurements} \label{ch:apdx_wheel}

\setlength{\textfloatsep}{2pt}\begin{algorithm}
\caption{Wheel Odometry Measurement Update}
\begin{algorithmic}[1] 
    \Procedure{Wheel\_Update}{$\mathbf{x}_{k+1}$, $\{\omega_{ml}, \omega_{mr}\}_{k:k+1}$} 
        \CustomComment{Preintegrate measurement, Jacobian, and noise}
        \State $\mathbf{g}_O = \O{3\times1},~~ \mathbf{G}_O =  \O{3},~~ \mathbf{R}_O = \O{3}$
        \For{$\scalemath{1}{\omega_{ml,\tau},\omega_{mr,\tau} \in \{\omega_{ml}, \omega_{mr}\}_{k:k+1}}$} 
            \State $\mathbf{g}_O \xleftarrow[]{} \mathbf{g}_O + \Delta\mathbf{g}$
            \State $\mathbf{R}_O \xleftarrow[]{} \boldsymbol{\Phi}_{tr,\tau}\mathbf{R}_O \boldsymbol{\Phi}_{tr,\tau}^\top + \boldsymbol{\Phi}_{n,\tau} \mathbf{Q}_\tau\boldsymbol{\Phi}_{n,\tau}^\top$ 
            \State $\mathbf{G}_O \xleftarrow[]{} \boldsymbol{\Phi}_{tr,\tau} \mathbf{G}_O + \boldsymbol{\Phi}_{OI,\tau}$
        \EndFor
        \CustomComment{Compute residual and Jacobian}
        \State $\tilde{\boldsymbol{z}}_O = \mathbf{g}_O(\xhat{OI}) - \mathbf{h}_O(\xhat{k})$
        \State $\mathbf{H}_O = \frac{\partial \mathbf{h}_O}{\partial \xtilde{k}} - \mathbf{G}_O \frac{\partial \xtilde{OI}}{\partial \xtilde{k}}$
        \CustomComment{Perform $\scalemath{.8}{\chi^2}$ test \& update}
        \If{$\scalemath{.8}{\chi^2}(\tilde{\boldsymbol{z}}, \mathbf{H}_O, \mathbf{R}_O) == Pass$}
        \State $\mathrm{EKF\_Update}(\hat{\mathbf{x}}_{k+1}, \tilde{\boldsymbol{z}}_O, \mathbf{H}_O, \mathbf{R}_O)$
        \EndIf
    \EndProcedure
\end{algorithmic}\label{alg:whee_update}
\end{algorithm}

Here, we introduce the essential derivations of how the wheel measurement model (and its linearization) is formulated.
For the full derivations, please refer to our previous tech report \citep{Woosik2020wheelTR}.
The overall process is outlined in Algorithm \ref{alg:whee_update}.

\subsection{Wheel Odometry Preintegration}
Assume we are in the process of wheel odometry preintegration (see Eq.~\eqref{eq:wheel_preintegration}) and it is the turn to integrate measurements at $\t{\tau}{}$.
Based on the kinematic model (see Eq.~\eqref{eq:wheel_kinematic_model}) we perform following integration of measurements:
\begin{align}
\hspace{-4mm}
{}^{O_{\tau+1}}_{O_{k}}\theta 
&\approx
{}^{O_{\tau}}_{O_{k}}\theta \shortminus {}^{O_\tau}{\omega}\Delta t 
\hspace{-2mm}\label{eq:preint_ori}\\
\hspace{-4mm}
{}^{O_{k}}x_{O_{\tau+1}}
&\approx
{}^{O_{k}}x_{O_{\tau}} \shortminus {}^{O_\tau}v(\text{sin}({}^{O_{\tau+1}}_{O_{k}}\theta) \shortminus \text{sin}({}^{O_{\tau}}_{O_{k}}\theta)) / {}^{O_\tau}{\omega} 
\hspace{-2mm}\label{eq:preint_px}\\
\hspace{-4mm}
{}^{O_{k}}y_{O_{\tau+1}}
&\approx
{}^{O_{k}}y_{O_{\tau}} \shortminus {}^{O_\tau}v(\text{cos}({}^{O_{\tau+1}}_{O_{k}}\theta) \shortminus \text{cos}({}^{O_{\tau}}_{O_{k}}\theta)) / {}^{O_\tau}\omega 
\hspace{-2mm}\label{eq:preint_py}
\end{align}
where $\Delta t = t_{\tau+1} - t_{\tau}$. 
Note that we assume constant ${}^{O_\tau}\omega$ and ${}^{O_\tau}v$ (discrete sensor model) but considered the change of heading angle between $t_\tau$ and $t_{\tau+1}$ so that we have a more accurate model than assuming it constant. 
Repeating the process until $t_{k+1}$ finishes wheel odometry preintegration. 

\subsection{Jacobian of Wheel-Encoder Intrinsics}
As evident from Eq.~\eqref{eq:wheel_preint_function}, the wheel odometry integration entangles the intrinsic $\mathbf{x}_{OI}$, therefore the linearization would yield the following form:
\begin{align*}
\hspace{-2mm}
\mathbf z_{O} \simeq \mathbf{g}_O (\{\omega_{ml}, \omega_{mr}\}_{k:k+1}, \xhat{OI})
+ \scalemath{0.85}{\underbrace{\frac{\partial \mathbf{g}_O}{\partial \xtilde{OI}}}_{\mathbf{G}_O}} \xtilde{OI} + \scalemath{0.85}{\frac{\partial \mathbf{g}_O}{\partial \mathbf n_w}} \mathbf n_w 
\hspace{-2mm}
\end{align*}
where $\mathbf{n}_{\omega}$ is the stacked noise vector whose $\tau$-th block is corresponding to 
the encoder measurement noise at $t_\tau\in [t_k, t_{k+1}]$ (i.e., $[n_{\omega_{l,\tau}} ~ n_{\omega_{r,\tau}}]^\top$, see \eqref{eq:wheel_meas_raw}).

Clearly, performing EKF update with this measurement requires the Jacobians with respect to both the intrinsics and the noise.
It is important to note that as the preintegration of $\mathbf{g}_O(\cdot)$ is computed incrementally using the encoders' measurements in the interval $[t_k,t_{k+1}]$, we accordingly calculate the measurement Jacobians incrementally one step at a time.
Note also that since the noise Jacobian and $\mathbf{n}_{\omega}$ are often of high dimensions and may be computationally expensive when computing the stacked noise covariance during the update, 
we instead compute the noise covariance $\mathbf{R}_O$ by performing small matrix operations at each step.

Now we get the Jacobian of $t_\tau$ step integration from Eq.~\eqref{eq:preint_ori}, \eqref{eq:preint_px}, and \eqref{eq:preint_py} (again, the full derivations, which this margin is too narrow to contain, can be found in our previous tech report \citep{Woosik2020wheelTR}, thus here we only show the structure of them):
\begin{align*}
{}^{O_{\tau+1}}_{O_{k}}\tilde{\theta}  
&=
{}^{O_{\tau}}_{O_{k}}\tilde{\theta} + \boldsymbol{\Phi}_{1}\xtilde{OI} + \boldsymbol{\Phi}_{2}\mathbf{n}_{\omega,\tau} \\
{}^{O_{k}}\tilde{x}_{O_{\tau+1}}
&=
{}^{O_{k}}\tilde{x}_{O_{\tau}} + \boldsymbol{\Phi}_{3}{}^{O_{\tau}}_{O_{k}}\tilde{\theta} + \boldsymbol{\Phi}_{4}\xtilde{OI} + \boldsymbol{\Phi}_{5} \mathbf{n}_{\omega,\tau}\\
{}^{O_{k}}\tilde{y}_{O_{\tau+1}}
&=
{}^{O_{k}}\tilde{y}_{O_{\tau}} + \boldsymbol{\Phi}_{6}{}^{O_{\tau}}_{O_{k}}\tilde{\theta} + \boldsymbol{\Phi}_{7}\xtilde{OI} +\boldsymbol{\Phi}_{8}\mathbf{n}_{\omega,\tau} 
\end{align*}
It can be found that the error of $\tau+1$ step preintegration is the linear combination of $\tau$ step preintegration and measurement errors.
With the above equations, we can recursively compute the  noise covariance $\mathbf{R}_{O}$ and the Jacobian $\mathbf{G}_O$ as follows:
\begin{align*}
    \boldsymbol{\Phi}_{tr,\tau} &= \begin{bmatrix} 1 & 0 & 0 \\ \boldsymbol{\Phi}_{3} & 1 & 0 \\ \boldsymbol{\Phi}_{6} & 0 & 1\end{bmatrix},~
    \boldsymbol{\Phi}_{OI,\tau} = \begin{bmatrix} \boldsymbol{\Phi}_{1} \\ \boldsymbol{\Phi}_{4} \\ \boldsymbol{\Phi}_{7}\end{bmatrix},~
    \boldsymbol{\Phi}_{n,\tau} = \begin{bmatrix} \boldsymbol{\Phi}_{2} \\ \boldsymbol{\Phi}_{5} \\ \boldsymbol{\Phi}_{8}\end{bmatrix}\\
\mathbf{R}_{O} &\xleftarrow[]{} \boldsymbol{\Phi}_{tr,\tau}\mathbf{R}_{O} \boldsymbol{\Phi}_{tr,\tau}^\top + \boldsymbol{\Phi}_{n,\tau} \mathbf{Q}_\tau\boldsymbol{\Phi}_{n,\tau}^\top\\
\mathbf{G}_O &\xleftarrow[]{} \boldsymbol{\Phi}_{tr,\tau} \mathbf{G}_O + \boldsymbol{\Phi}_{OI,\tau}
\end{align*}
where $\mathbf{Q}_\tau$ is the noise covariance of wheel encoder measurement at $t_\tau$.
We can recursively compute the measurement noise covariance $\mathbf{R}_{O}$ and the Jacobian matrix $\frac{\partial \mathbf{g}_O}{\partial \xtilde{OI}}$ at the end of preintegration $t_{k+1}$,
based on the zero initial condition (i.e., $\mathbf{R}_{O} = \mathbf{G}_O = \mathbf{0}_3$).
The final structure of $\mathbf{G}_O$ can be shown as:
\begin{align*}
    \mathbf{G}_O
    &= 
    \begin{bmatrix}
    \mathbf{\Gamma}_{\boldsymbol{\theta}1} & \mathbf{\Gamma}_{\boldsymbol{\theta}2} & \mathbf{\Gamma}_{\boldsymbol{\theta}3}\\
    \mathbf{\Gamma}_{\mathbf{x}1} & \mathbf{\Gamma}_{\mathbf{x}2} & \mathbf{\Gamma}_{\mathbf{x}3}\\
    \mathbf{\Gamma}_{\mathbf{y}1} & \mathbf{\Gamma}_{\mathbf{y}2} & \mathbf{\Gamma}_{\mathbf{y}3}
    \end{bmatrix} 
\end{align*}
Assuming $n$ number of measurements are integrated, the structure of the element can be shown as:
\begin{align*}
    \mathbf{\Gamma}_{\boldsymbol{\theta}1}&= \sum^{n}_{i=1}-\Delta t_i\frac{\omega_{l,i}}{b}\\
\mathbf{\Gamma}_{\boldsymbol{\theta}2}&= \sum^{n}_{i=1}\Delta t_i\frac{\omega_{r,i}}{b}\\
\mathbf{\Gamma}_{\boldsymbol{\theta}3}&= \sum^{n}_{i=1}- \Delta t_i\frac{{}^{O_i}\omega}{b}\\
\mathbf{\Gamma}_{\mathbf{x}1}&=\sum_{i = 1}^{n} \bigg \{\omega_{l,i} \bigg (\frac{h_{xv,i}}{2} - \frac{h_{x\omega,i}}{b} \bigg) - h_{x\theta,i}\sum^{i-1}_{j=1}\Delta t_j\frac{\omega_{l,j}}{b} \bigg \}\\
\mathbf{\Gamma}_{\mathbf{x}2}&=\sum_{i = 1}^{n} \bigg \{\omega_{r,i} \bigg (\frac{h_{xv,i}}{2} + \frac{h_{x\omega,i}}{b} \bigg) +h_{x\theta,i}\sum^{i-1}_{j=1}\Delta t_j \frac{\omega_{r,j}}{b} \bigg \}\\
\mathbf{\Gamma}_{\mathbf{x}3}&=\sum_{i = 1}^{n} \bigg \{-h_{x\omega,i}\frac{{}^{O_i}\omega}{b}-h_{x\theta,i}\sum^{i-1}_{j=1}\Delta t_j\frac{{}^{O_j}\omega}{b}\bigg \}\\
\mathbf{\Gamma}_{\mathbf{y}1}&=\sum_{i = 1}^{n} \bigg \{\omega_{l,i} \bigg (\frac{h_{yv,i}}{2} - \frac{h_{y\omega,i}}{b} \bigg) - h_{y\theta,i}\sum^{i-1}_{j=1}\Delta t_j\frac{\omega_{l,j}}{b} \bigg \}\\
\mathbf{\Gamma}_{\mathbf{y}2}&=\sum_{i = 1}^{n} \bigg \{\omega_{r,i} \bigg (\frac{h_{yv,i}}{2} + \frac{h_{y\omega,i}}{b} \bigg) +h_{y\theta,i}\sum^{i-1}_{j=1}\Delta t_j \frac{\omega_{r,j}}{b} \bigg \}\\
\mathbf{\Gamma}_{\mathbf{y}3}&=\sum_{i = 1}^{n} \bigg \{-h_{y\omega,i}\frac{{}^{O_i}\omega}{b}-h_{y\theta,i}\sum^{i-1}_{j=1}\Delta t_j\frac{{}^{O_j}\omega}{b}\bigg \}
\end{align*}
Note this derivation cannot be applied in computing covariance matrix $\mathbf{R}_O$, because the noise $\mathbf{n}_{\omega,\tau}$ has different values for every iteration unlike $\x{OI}$.

\subsection{Full Jacobian of Wheel Odometry} \label{ch:apdx_wheel_intrinsic}
First, we show the Jacobian of wheel measurement model (see Eq.~\eqref{eq:wheel_update_model}) in respect to the state:
\begin{align*}
    \frac{\partial \mathbf h_O }{\partial \xtilde{k}} &= (\H{O1} \HS{O,k} + \H{O2} \HS{O,k \shortminus 1})\xtilde{k} \hspace{-35mm}\\
\H{O1} &= 
    \setlength\arraycolsep{2pt}
    \begin{bmatrix}
        \mathbf{e}_3^\top &  \O{1 \times 3} \\
\O{2 \times 3} & \Lambda\R{E}{O_{k \shortminus 1}}
    \end{bmatrix}, 
\H{O2} = 
    \begin{bmatrix}
        \shortminus \mathbf{e}_3^\top \R{O_{k \shortminus 1}}{O_{k}} & \O{1 \times 3} \\
\Lambda\skw{\p{O_{k}}{O_{k \shortminus 1}}} & \shortminus \Lambda\R{E}{O_{k \shortminus 1}}
    \end{bmatrix}
\end{align*}
where $\H{O1}$ and $\H{O2}$ are the Jacobian matrices of $\mathbf{h}(\cdot)$ in respect to $\{O_k\}$ and $\{O_{k \shortminus 1}\}$; $\HS{O,k}$ is the Jacobian matrix of $\{O_k\}$ in respect to the $\{I_k\}$ and the wheel spatiotemporal extrinsic calibration parameters (see Appendix~\ref{ch:apdx_sync_jacobian} for the definition).

Finally, we construct the full Jacobian matrix of wheel:
\begin{align*}
\hspace{-2mm}
\scalemath{0.95}{
    \mathbf{H}_O = \frac{\partial \mathbf{h}_O}{\partial \xtilde{k}} \shortminus \frac{\partial \mathbf{g}_O}{\partial \xtilde{OI}}
= \H{O1} \HS{O,k} + \H{O2} \HS{O,k \shortminus 1} \shortminus \mathbf{G}_O \frac{\partial \xtilde{OI}}{\partial \xtilde{k}}
\hspace{-2mm}
}
\end{align*} 
Note that $\mathbf{G}_O $ has the minus sign in front of it because $\mathbf{g}_O$ is on the left hand side of the equation while $\mathbf{h}_O$ is on the right. \section{Jacobians of LiDAR Measurements} \label{ch:apdx_lidar}
Here we drive the Jacobian matrix of the LiDAR measurement model (see Eq.~\eqref{eq:lidar_measurement_model}).
First, we linearize the model in respect to each map point $\p{n}{M}$, the new point $\p{F}{M}$ (transformed from $\{L_k\}$ to $\{M\}$), and the plane $\cp{}{M}$:
\begin{align*}
    \tilde{\mathbf{z}}_L
&\approx
    \begin{bmatrix}
        \O{1\times3}\\
        \vdots\\
        \O{1\times3}\\
        \cp{}{M}^\top
    \end{bmatrix} \ptilde{F}{M}
    +
    \begin{bmatrix}
        \p{n_1}{M}^\top\\
        \vdots\\
        \p{n_m}{M}^\top\\
        \p{F}{M}^\top
    \end{bmatrix}
     {}^{M}\tilde{\boldsymbol{\Pi}}
    +
    \begin{bmatrix}
        \cp{}{M}^\top\mathbf{n}_{n_1}\\
        \vdots\\
        \cp{}{M}^\top\mathbf{n}_{n_m}\\
        0
    \end{bmatrix}
\end{align*}
where $\mathbf{n}_{n}$ is zero mean Gaussian noise model of the map points.
To complete the chain of the Jacobian matrix, we compute the Jacobian of Eq.~\eqref{eq:lidar_tr} (here we show the case where the map $\{M\}$ is anchored at $\{I_{k \shortminus 1}\}$, thus, $\{M\} = \{L_{k \shortminus 1}\}$ and $\p{F}{M} = \p{F}{L_{k \shortminus 1}}$):
\begin{align*}
    \ptilde{F}{M} &= 
\underbrace{
    \begin{bmatrix}
        \skw{\p{F}{L_{k - 1}}} & - \R{E}{L_{k - 1}}
    \end{bmatrix}
}_{\mathbf{H}_{L1}}
    \begin{bmatrix}
        \angtilde{E}{L_{k \shortminus 1}} \\ \ptilde{L_{k \shortminus 1}}{E}
    \end{bmatrix}
    \\
    &+
\underbrace{
    \begin{bmatrix}
        \R{L_k}{L_{k \shortminus 1}}\skw{\p{F}{L_{k}}} & - \R{E}{L_{k}}
    \end{bmatrix}
}_{\mathbf{H}_{L2}}
    \begin{bmatrix}
        \angtilde{E}{L_{k}} \\ \ptilde{L_{k}}{E}
    \end{bmatrix}
    +
    \R{L_{k}}{L_{k \shortminus 1}} \mathbf{n}_F \\
&= (\mathbf{H}_{L1} \HA{L,k \shortminus 1} + \mathbf{H}_{L2} \HA{L,k}) \xtilde{k} + \R{L_{k}}{L_{k \shortminus 1}} \mathbf{n}_F
\end{align*}
where $\mathbf{n}_F$ is the zero mean Gaussian noise of the measurement.
Note $\HA{L,k \shortminus 1}$ and $\HA{L,k}$ are the Jacobians of LiDAR poses at $t_{k-1}$ (actually the map) and $t_{k}$ in respect to the state $\xtilde{k}$ where the definitions can be found in Appendix~\ref{ch:apdx_interpolation} (linear interpolation) or Appendix~\ref{ch:apdx_interpolation_high} (high-order interpolation).

Finally, we get the following linear system formulation that is a function of the state, the plane, and the noises:
\begin{align*}
\tilde{\mathbf{z}}_L
    =&
    \underbrace{
    \begin{bmatrix}
        \O{1\times3}\\
        \vdots\\
        \O{1\times3}\\
        \scalemath{0.9}{{}^M\boldsymbol{\Pi}^\top (\mathbf{H}_{L1} \HA{L,k \shortminus 1} + \mathbf{H}_{L2} \HA{L,k})}
    \end{bmatrix}}_{\H{L}} \xtilde{k}
    \\&+
    \underbrace{
    \begin{bmatrix}
        \p{n_1}{M}^\top\\
        \vdots\\
        \p{n_k}{M}^\top\\
        \p{F}{M}^\top
    \end{bmatrix}}_{\H{\boldsymbol{\Pi}}} {}^{M}\tilde{\boldsymbol{\Pi}}
    +
    \underbrace{
    \begin{bmatrix}
        {}^{M}\boldsymbol{\Pi}^\top\mathbf{n}_{n_1}\\
        \vdots\\
        {}^{M}\boldsymbol{\Pi}^\top\mathbf{n}_{n_k}\\
        {}^{M}\boldsymbol{\Pi}^\top \R{L_k}{L_{k-1}}\mathbf{n}_{F}
    \end{bmatrix}}_{\mathbf{n}_{L}}
\end{align*}
Though the above linear system can be directly used to update the state, we additionally perform Cholesky decomposition of the last noise term (in the covariance matrix form $\mathbf{R}_L = \mathbf{L}\mathbf{L}^\top$) and multiply the inverse of lower triangular matrix ($\mathbf{L}^{-1}$) to the above equation
(a.k.a whitening)
to make the following null space projection (see Eq.~\eqref{eq:lidar_res_nullspace_prj}) simple and construct the noise covariance matrix efficiently without tracking the map and measurement noises.
To be more specific, we get the following after the operation (corresponds to the Eq.~\eqref{eq:lidar_linsys}):
\begin{align*}
    \tilde{\mathbf{z}}'_L = \H{L}'\xtilde{k} + \H{\boldsymbol{\Pi}}'{}^M\tilde{\boldsymbol{\Pi}} + \mathbf{n}_{L}'
\end{align*}
Note the noise $\mathbf{n}_{L}'$ is the standard Gaussian ($\mathcal{N}(\mathbf{0}, \mathbf{I})$). \section{GNSS Measurements}

\subsection{Jacobians} \label{ch:apdx_gps}

The Jacobian matrix of GNSS measurement model (see Eq.~\eqref{eq:gps_measurement}) can be simply represented with two matrices (after initialization):
\begin{align}
    \H{G} := \frac{\partial \mathbf{z}_{G}}{\partial \x{k}} =  [\O{3} ~~ \I{3}] \HA{G,k} =\H{g} \HA{G,k} \label{eq:jacob_gnss}
\end{align}
where $\H{g}$ is the derivative of the measurement with respect to the GNSS sensor pose, and the definition of remaining chain $\HA{G,k}$, the GNSS sensor pose to the interpolating IMU poses and the spatiotemporal extrinsic calibration parameters, can be found in Appendix~\ref{ch:apdx_interpolation} (linear interpolation) or Appendix~\ref{ch:apdx_interpolation_high} (high-order interpolation).
\subsection[Observability Analysis: The State in Local]{Observability Analysis: State in $\{W\}$} \label{ch:apdx_gps_obs_local}
For concise presentation, here we consider a simplified case where the state in the local world ${}^{W}\x{k}$ only contains one IMU pose, linear velocity, and $\{W\}$ to $\{E\}$ transformation with perfectly synchronized and calibrated sensors (identity transformation from IMU to GNSS), while the results can be extended to general cases:
\begin{align} 
        {}^{W}\x{k} = 
        (
            \R{W}{I_k}, ~
            \p{I_k}{W}, ~
            \vel{I_k}{W}, ~
            \R{W}{E}, ~
            \p{W}{E}
        ) \label{eq:state_in_local}
\end{align}
The corresponding error state transition matrix can be shown as ($\Delta t = t_{k}-t_0$):
\begin{align*}
    &\mathbf{\Phi}^{W}{(t_{k},t_{0})} =\\
    &
\setlength\arraycolsep{2pt}
    \begin{bmatrix}
        \R{W}{I_k} \R{I_0}{W} & \O{} & \O{} & \O{}\\
-\skw{ \p{I_k}{W} - \p{I_0}{W} - \vel{I_0}{W} \Delta t + \frac{1}{2}\mathbf{g}\Delta t^2} \R{I_0}{W}  & \I{3} & \Delta t \I{3} & \O{} \\
-\skw{\vel{I_k}{W} - \vel{I_0}{W} + \mathbf{g}\Delta t}\R{I_0}{W}  & \O{} & \I{3} & \O{} \\
\O{} & \O{} & \O{} & \I{6}
    \end{bmatrix} 
\end{align*}
The measurement model and the corresponding Jacobian matrix in respect to the local state (see Eq.~\eqref{eq:state_in_local}) are:
\begin{align*}
\mathbf{z}_{G} &= \p{W}{E} + \R{W}{E}\p{I_k}{W} + \mathbf{n}_{G} \\
    \H{G}^W &=
\begin{bmatrix}
        \O{3} & \R{W}{E} & \O{3} & \O{3} & \skw{\R{W}{E}\p{I_k}{W}} & \I{3}
    \end{bmatrix}
\end{align*}
Now we can construct the observability matrix $\mathcal{O}^W$:
\begin{align*}
    \mathcal{O}^W = \begin{bmatrix} 
    \H{G_0}^W \\ 
    \H{G_1}^W\mathbf{\Phi}^{W}{(t_{1},t_{0})} \\
    \vdots \\ 
    \H{G}^W\mathbf{\Phi}^{W}{(t_{k},t_{0})} \\ 
    \vdots \end{bmatrix}
\end{align*}
The $(k+1)$th row matrix of $\mathcal{O}^W$ can be shown as:
\begin{align*}
\mathcal{O}^W_{k+1} &= \H{G}^W\mathbf{\Phi}^{W}{(t_{k},t_{0})}
     = 
     \begin{bmatrix}
     \boldsymbol{\Gamma}_1 &
     \boldsymbol{\Gamma}_2 &
     \boldsymbol{\Gamma}_3 & 
     \boldsymbol{\Gamma}_4  & 
     \I{3}
     \end{bmatrix}\\
\boldsymbol{\Gamma}_1 &= - \R{W}{E} \skw{ \p{I_k}{W} - \p{I_0}{W} - \vel{I_0}{W} \Delta t + \frac{1}{2}\mathbf{g}\Delta t^2 } \R{I_0}{W}\\
\boldsymbol{\Gamma}_2 &= \R{W}{E}\\
\boldsymbol{\Gamma}_3 &= \R{W}{E} \Delta t \\
\boldsymbol{\Gamma}_4 &=  \skw{ \R{W}{E} \p{I_k}{W} }
\end{align*}
It is easily can be shown that the matrix shown below is the null space of $\mathcal{O}^W_{k+1}$ by calculating $\mathcal{O}^W_{k+1} \times \mathbf{N}^W = \mathbf{0}$.
\begin{align*}
    \mathbf{N}^W
    =
    \begin{bmatrix}
        \O{3} & - \R{W}{I_0}\mathbf{g}\\
\I{3} & \skw{\p{I_0}{W}} \mathbf{g}\\
\O{3} & \skw{\vel{I_0}{W}} \mathbf{g}\\
\O{3}  & -\R{W}{E} \mathbf{g}\\
- \R{W}{E} & \mathbf{0}_{3\times1}
    \end{bmatrix}
\end{align*}
As $\mathbf{N}^W$ is the null space of any row matrices of $\mathcal{O}^W$, $\mathbf{N}^W$ is the null space of the local state (see Eq.~\eqref{eq:state_in_local}).
By inspection, the first column of $\mathbf{N}^W$ corresponds to the translation of $\{W\}$ to $\{E\}$ and the second column to the rotation of $\{W\}$ with respect to $\{E\}$ along the axis of gravity.

\subsection[Observability Analysis: The State in Global]{Observability Analysis: State in $\{E\}$}  \label{ch:apdx_gps_obs_global}
In analogy to Appendix~\ref{ch:apdx_gps_obs_local}, here we also investigate the minimal state that has the frame of reference in $\{E\}$ as:
\begin{align} 
        {}^{E}\x{k} = 
        (
            \R{E}{I_k}, ~
            \p{I_k}{E}, ~
            \vel{I_k}{E}, ~
            \R{W}{E}, ~
            \p{W}{E}
        ) \label{eq:state_in_global}
\end{align}
The corresponding error state transition matrix is:
\begin{align*}
    &\mathbf{\Phi}^{E}{(t_{k},t_{0})} =\\
    &
\setlength\arraycolsep{2pt}
    \begin{bmatrix}
        \R{E}{I_k} \R{I_0}{E} & \O{} & \O{} & \O{}\\
-\skw{ \p{I_k}{E} - \p{I_0}{E} - \vel{I_0}{E} \Delta t + \frac{1}{2}\mathbf{g}\Delta t^2} \R{I_0}{E}  & \I{3} & \Delta t \I{3} & \O{} \\
-\skw{\vel{I_k}{E} - \vel{I_0}{E} + \mathbf{g}\Delta t}\R{I_0}{E}  & \O{} & \I{3} & \O{} \\
\O{} & \O{} & \O{} & \I{6}
    \end{bmatrix} 
\end{align*}
The measurement model and the corresponding Jacobian matrix in respect to the local state (see Eq.~\eqref{eq:state_in_global}) are:
\begin{align*}
\mathbf{z}_{G} &= \p{I_k}{E} + \mathbf{n}_{G} \\
    \H{G}^E &=
\begin{bmatrix}
        \O{3} & \I{3} & \mathbf{0}_{3 \times 9}
    \end{bmatrix}
\end{align*}
Clearly, the multiplication of $\mathcal{O}^E_{k+1} := \H{G}^E\mathbf{\Phi}^E{(t_{k},t_{0})}$ with $\mathbf{N}^W$ does not yield a zero matrix which means the 4 unobservable directions of INS \citep{kelly2011visual} are now observable.

\subsection[State Transformation from Local to Global]{State Transformation from $\{W\}$ to $\{E\}$} \label{ch:apdx_gps_state_transform}
As the IMU biases and the calibration parameters remain the same (their frame of the references are fixed to, such as, IMU and sensor frame), thus have identity matrix in this Jacobian matrix of transformation, here we show the minimal relevant state case:
\begin{align*}
    {}^W\x{k} = (\R{W}{I_k}, \p{I_k}{W}, \vel{I_k}{W}, \R{W}{E}, \p{W}{E})
\end{align*}
We linearize the state transform function (see Eq.~\eqref{eq:trans_state_func}) at current estimate to achieve the Jacobian matrix $\bm\Psi$ and propagate the error state with it as:
\begin{align*}
    {}^E\tilde{\mathbf{x}}_k \xleftarrow[]{} \bm\Psi{}^W\tilde{\mathbf{x}}_k
\end{align*}
where
\begin{align*}
    \bm\Psi =
    \begin{bmatrix}
    \I{3} & \O{3} & \O{3} &  {}^{I_k}_W\mathbf{R}\R{W}{E}^\top  & \O{3}\\
    \O{3} & \R{W}{E} & \O{3} & \skw{\R{W}{E}\p{I_k}{W}} & \I{3}\\
    \O{3} & \O{3} & \R{W}{E} & \skw{\R{W}{E}\vel{I_k}{W}}  & \O{3}\\
    \O{3} & \O{3} & \O{3} & \I{3} & \O{3}\\
    \O{3} & \O{3} & \O{3} & \O{3} & \I{3}
    \end{bmatrix}
\end{align*}
Please refer to our previous tech report \citep{Woosik2019TR} for the full derivations. \section{Jacobians of Synchronous Sensors} \label{ch:apdx_sync_jacobian}

In general, there is no reason a sensor measurement is synchronous to other sensors or to the state.
However, in some cases, we can chose the sampling time of the measurements to get the synchronized sensor measurements.
The preintegrated wheel measurement (see Eq.~\eqref{eq:wheel_preint_function}), for example, is one of the case where we decide the measurement integration period and get the synchronized pose measurement.
Therefore, here we introduce how the Jacobian matrices of the pose measurement of the synchronous sensors is formulated.

Assume we have a global pose measurement of sensor $\{X\}$ at time $t_k$.
The measurement can be modeled with a IMU pose $\{I_{k}\}$ and spatial extrinsic calibration parameters $(\R{I}{X}, \p{I}{X})$ between the IMU and the sensor as:
\begin{align}
    \mathbf{z}_{X_k}
    := 
    \begin{bmatrix}
        \ang{E}{X_{k}} \\
        \p{X_{k}}{E}
    \end{bmatrix}
    =
    \begin{bmatrix}
        \Log{\R{I}{X}\R{E}{I_{k}}}\\
        \p{I_{k}}{E} + \R{I_{k}}{E}\p{X}{I}
    \end{bmatrix} + \mathbf{n}_{X_k} \label{eq:meas_synchronized_pose_meas}
\end{align}
We first drive the derivative of the measurement model in respect to the involved parameters:
\begin{align}
    \frac{\partial \ang{E}{X_{k}}}{\partial \ang{E}{I_{k}}} &= \R{I}{X} &
    \frac{\partial \ang{E}{X_{k}}}{\partial \ang{I}{X}} &= \mathbf{I}_3 \label{eq:sync_jacobi1}\\
\frac{\partial \p{X_{k}}{E}}{\partial \ang{E}{I_{k}}} &= -\R{I_{k}}{E} \skw{\p{X}{I}} &
    \frac{\partial \p{X_{k}}{E}}{\partial \p{I_{k}}{E}} &= \mathbf{I}_3 \label{eq:sync_jacobi2}\\
\frac{\partial \p{X_{k}}{E}}{\partial \ang{I}{X}} &= \R{X_k}{E} \skw{\p{I}{X}} & 
    \frac{\partial \p{X_{k}}{E}}{\partial \p{I}{X}} &= - \R{X_{k}}{E} \label{eq:sync_jacobi3}
\end{align}

To account for the difference between sensor clocks and measurement delay, we model an unknown constant time offset between the IMU clock and the sensor clock: ${}^It_k = {}^Xt_k + {}^Xt_I$, where ${}^I t_k$ and ${}^X t_k$ are the times when measurement $\mathbf{z}_{S_k}$ was collected in the IMU and the sensor's clocks, and ${}^Xt_I$ is the time offset between the two time references.
To get the synchronized pose measurement at IMU times ${}^It_{k}$, we use the current best estimate of the time offset ${}^X\hat{t}_I$ and get the measurement at ${}^X t_k = {}^It_k - {}^X\hat{t}_I$, whose corresponding time in the IMU clock is:
\begin{align*}
    {}^I \bar{t}_{k}  := {}^It_{k} - {}^X\hat{t}_I + {}^Xt_I = {}^It_{k} + {}^X\tilde{t}_I
\end{align*}
Note that the above equation revels the error of the time offset ${}^X\tilde{t}_I$ within the chosen measurement time ${}^I \bar{t}_{k}$.
We employ the following first-order approximation to account for this small time-offset error \citep{li2014online}:
\begin{align*}
    {}^{I({}^I \bar{t}_{k})}_E\mathbf{R} &= {}^{I({}^It_{k} + {}^X\tilde{t}_I)}_E\mathbf{R} \approx  (\mathbf{I} - \lfloor {}^{I_k}\boldsymbol{\omega} {}^X\tilde{t}_I \rfloor) {}^{I_k}_E\mathbf{R}\\
{}^E\mathbf{p}_{I({}^I \bar{t}_{k})} &= {}^E\mathbf{p}_{I({}^It_{k} + {}^X\tilde{t}_I)} \approx {}^E\mathbf{p}_{I_k} + {}^E\mathbf{v}_{I_k} {}^X\tilde{t}_I
\end{align*}
By replacing the above approximation to Eq.~\eqref{eq:meas_synchronized_pose_meas}, we get the following additional Jacobians which are related to the time offset:
\begin{align*}
    \frac{\partial \ang{E}{X_{k}}}{\partial {}^Xt_I} &= \frac{\partial \ang{E}{X_{k}}}{\partial \ang{E}{I_{k}}}\frac{\partial \ang{E}{I_{k}}}{\partial {}^Xt_I} = \R{I}{X} {}^{I_k}\boldsymbol{\omega}\\
\frac{\partial \p{X_{k}}{E}}{\partial {}^Xt_I} 
    &= 
    \frac{\partial \p{X_{k}}{E}}{\partial \p{I_{k}}{E}}\frac{\partial \p{I_{k}}{E}}{\partial {}^Xt_I} + 
    \frac{\partial \p{X_{k}}{E}}{\partial \ang{E}{I_{k}}}\frac{\partial \ang{E}{I_{k}}}{\partial {}^Xt_I} \\
    &= \vel{I_k}{E} -\R{I_{k}}{E} \skw{\p{X}{I}} {}^{I_k}\boldsymbol{\omega} \notag
\end{align*}
Finally, we get the following full Jacobian matrix of the synchronized measurement:
\begin{align*}
    \HS{X,k} &= \frac{\partial \mathbf{z}_{X_k}}{\partial \x{k}} \\
&= 
\setlength\arraycolsep{2.5pt}
    \begin{bmatrix}
        \HS{00} & \O{} & \O{} & \cdots & \HS{07} & \O{} & \HS{09} & \O{} & \cdots\\
        \undermat{\{I_k\}}{\HS{10} & \HS{11}} & \O{}  & \cdots & \undermat{\R{I}{X}}{\HS{17}} & \undermat{\p{I}{X}}{\HS{18}} & \undermat{\t{I}{X}}{\HS{19}} & \O{} & \cdots
    \end{bmatrix}\\\\
\HS{00} &= \frac{\partial \ang{E}{X_{k}}}{\partial \ang{E}{I_{k}}} = \R{I}{X} \\
    \HS{07} &= \frac{\partial \ang{E}{X_{k}}}{\partial \ang{I}{X}} = \mathbf{I}_3 \\
    \HS{09} &= \frac{\partial \ang{E}{X_{k}}}{\partial {}^Xt_I} = \R{I}{X} {}^{I_k}\boldsymbol{\omega} \\
\HS{10} &= \frac{\partial \p{X_{k}}{E}}{\partial \ang{E}{I_{k}}} = -\R{I_{k}}{E} \skw{\p{X}{I}} \\
    \HS{11} &= \frac{\partial \p{X_{k}}{E}}{\partial \p{I_{k}}{E}} = \mathbf{I}_3 \\
    \HS{17} &= \frac{\partial \p{X_{k}}{E}}{\partial \ang{I}{X}} = \R{X_k}{E} \skw{\p{I}{X}} \\
    \HS{18} &= \frac{\partial \p{X_{k}}{E}}{\partial \p{X}{I}} = - \R{X_{k}}{E} \\
    \HS{19} &= \frac{\partial \p{X_{k}}{E}}{\partial {}^Xt_I} = \vel{I_k}{E} -\R{I_{k}}{E} \skw{\p{X}{I}} {}^{I_k}\boldsymbol{\omega}
\end{align*}

\section{Jacobians of Asynchronous Sensors: Linear Interpolation} \label{ch:apdx_interpolation}

Here we assume an asynchronous global pose measurement (see Eq.~\eqref{eq:X_pose_measurement}) of a sensor $\{X\}$ is given at time $t_k'$ ($t_{k \shortminus 1} \leq t_k' + \t{I}{X} \leq t_{k}$) and drive the full Jacobian of the measurement model in respect to the two bounding IMU poses ($\{I_{k \shortminus 1}\}$ and $\{I_{k}\}$) and the spatiotemporal extrinsic calibration parameters ($\R{I}{X}$, $\p{I}{X}$, and ${}^X t_I$) using the linear interpolation.
The measurement model and the interpolation can be written as:
\begin{align*}
    \mathbf{z}_{X_k}
    &=
    \begin{bmatrix}
        \ang{E}{X_{k'}} \\
        \p{X_{k'}}{E}
    \end{bmatrix}
    =
    \begin{bmatrix}
        \Log{\R{I}{X}\R{E}{I_{k'}}}\\
        \p{I_{k'}}{E} + \R{I_{k'}}{E}\p{X}{I}
    \end{bmatrix} + \mathbf{n}_{X_k}\\
\R{E}{I_{k'}} &= \textrm{Exp}(\lambda \textrm{Log}(\R{E}{I_{k}}\R{I_{k \shortminus 1}}{E}))\R{E}{I_{k \shortminus 1}}\\
\p{I_{k'}}{E} &= (1 - \lambda)\p{I_{k \shortminus 1}}{E} + \lambda\p{I_{k}}{E}\\
\lambda &= (t_{k}' + {}^Xt_{I} - t_{k \shortminus 1})/(t_{k} - t_{k \shortminus 1}) 
\end{align*}
The derivative of the measurement model in respect to the involved parameters are the same as synchronized measurement (see Eq.~\eqref{eq:sync_jacobi1}, \eqref{eq:sync_jacobi2}, and \eqref{eq:sync_jacobi3}).
Then the derivative of the interpolated IMU pose in respect to the bounding IMU poses and the temporal extrinsic calibration parameter are derived as:
\begin{align*}
    \frac{\partial \ang{E}{I_{k'}}}{\partial \ang{E}{I_{k \shortminus 1}}} &= \boldsymbol{\Upsilon}_1 =  \Jr{\lambda \ang{I_{k}}{I_{k \shortminus 1}}} (\Jlinv{\lambda \ang{I_{k}}{I_{k \shortminus 1}}} - \lambda \Jlinv{\ang{I_{k}}{I_{k \shortminus 1}}}) \\
\frac{\partial \ang{E}{I_{k'}}}{\partial \ang{E}{I_{k}}} &= \boldsymbol{\Upsilon}_2 = \lambda \Jr{\lambda \ang{I_{k}}{I_{k \shortminus 1}}} \Jrinv{\ang{I_{k}}{I_{k \shortminus 1}}} \\
\frac{\partial \p{I_{k'}}{E}}{\partial \p{I_{k \shortminus 1}}{E}} &= (1 - \lambda) \mathbf{I}_3 
~~~~~~~~~~ \frac{\partial \p{I_{k'}}{E}}{\partial \p{I_{k}}{E}} = \lambda \mathbf{I}_3\\
\frac{\partial \ang{E}{I_{k'}}}{\partial {}^Xt_I} &= \frac{\ang{I_{k}}{I_{k \shortminus 1}}}{t_{k} - t_{k \shortminus 1}}   
~~~~~~~~~~ \frac{\partial \p{I_{k'}}{E}}{\partial {}^Xt_I} = \frac{\p{I_{k}}{E} - \p{I_{k \shortminus 1}}{E}}{t_{k} - t_{k \shortminus 1}}
\end{align*}
where $\mathbf{J_l}$ and $\mathbf{J_r}$ are the left and right Jacobian matrices of SO(3) \citep{chirikjian2011stochastic}. 
Finally, we apply the chain rule to get the full Jacobian matrix $\HA{X,k}$:
\begin{align*}
    \HA{X,k} &= \frac{\partial \mathbf{z}_{X_k}}{\partial \x{k}} \\
&= 
\setlength\arraycolsep{1.5pt}
    \begin{bmatrix}
        \HA{1} & \O{} & \O{} & \HA{2} & \O{} & \O{} & \cdots & \HA{3} & \O{} & \HA{4} & \cdots\\
        \undermat{\{I_k\}}{\HA{5} & \HA{6}} & \O{} & \undermat{\{I_{k \shortminus 1}\}}{\HA{7} & \HA{8}} & \O{} & \cdots & \undermat{\R{I}{X}}{\HA{9}} & \undermat{\p{I}{X}}{\HA{10}} & \undermat{\t{I}{X}}{\HA{11}} &  \cdots
    \end{bmatrix}\\\\
\HA{1} &= \frac{\partial \ang{E}{X_{k'}}}{\partial \ang{E}{I_{k'}}} \frac{\partial \ang{E}{I_{k'}}}{\partial \ang{E}{I_{k}}} = \R{I}{X} \boldsymbol{\Upsilon}_2 \\ 
    \HA{2} &= \frac{\partial \ang{E}{X_{k'}}}{\partial \ang{E}{I_{k'}}} \frac{\partial \ang{E}{I_{k'}}}{\partial \ang{E}{I_{k \shortminus 1}}} = \R{I}{X} \boldsymbol{\Upsilon}_1 \\ 
    \HA{3} &= \frac{\partial \ang{E}{X_{k'}}}{\partial \ang{I}{X}} = \mathbf{I}_3 \\ 
    \HA{4} &= \frac{\partial \ang{E}{X_{k'}}}{\partial \ang{E}{I_{k'}}} \frac{\partial \ang{E}{I_{k'}}}{\partial {}^At_I} = \R{I}{X} \frac{1}{t_{k} - t_{k \shortminus 1}} \ang{I_{k}}{I_{k \shortminus 1}} \\
\HA{5} &= \frac{\partial \p{X_{k'}}{E}}{\partial \ang{E}{I_{k'}}} \frac{\partial \ang{E}{I_{k'}}}{\partial \ang{E}{I_{k}}} = -\R{I_{k'}}{E} \skw{\p{X}{I}} \boldsymbol{\Upsilon}_2 \\ 
    \HA{6} &= \frac{\partial \p{X_{k'}}{E}}{\partial \p{I_{k'}}{E}} \frac{\partial \p{I_{k'}}{E}}{\partial \p{I_{k}}{E}} = \lambda \mathbf{I}_3\\ 
    \HA{7} &= \frac{\partial \p{X_{k'}}{E}}{\partial \ang{E}{I_{k'}}} \frac{\partial \ang{E}{I_{k'}}}{\partial \ang{E}{I_{k \shortminus 1}}} = -\R{I_{k'}}{E} \skw{\p{X}{I}} \boldsymbol{\Upsilon}_1 \\ 
    \HA{8} &= \frac{\partial \p{X_{k'}}{E}}{\partial \p{I_{k'}}{E}} \frac{\partial \p{I_{k'}}{E}}{\partial \p{I_{k \shortminus 1}}{E}} = (1 - \lambda) \mathbf{I}_3\\ 
    \HA{9} &= \frac{\partial \p{X_{k'}}{E}}{\partial \ang{I}{X}} = \R{X_{k'}}{E} \skw{\p{I}{X}} \\ 
    \HA{10} &= \frac{\partial \p{X_{k'}}{E}}{\partial \p{I}{X}} = - \R{X_{k'}}{E}\\ 
    \HA{11} &= \frac{\partial \p{X_{k'}}{E}}{\partial \ang{E}{I_{k'}}}\frac{\partial \ang{E}{I_{k'}}}{\partial {}^At_I} + \frac{\partial \p{X_{k'}}{E}}{\partial \p{I_{k'}}{E}}\frac{\partial \p{I_{k'}}{E}}{\partial {}^At_I} \\
    &= \frac{\p{I_{k}}{E} - \p{I_{k \shortminus 1}}{E} -\R{I_{k'}}{E} \skw{\p{X}{I}}  \ang{I_{k}}{I_{k \shortminus 1}}}{t_{k} - t_{k \shortminus 1}}
\end{align*} 
\section{Jacobians of Asynchronous Sensors: High-order Interpolation} \label{ch:apdx_interpolation_high}

Here we assume an asynchronous global pose measurement (see Eq.~\eqref{eq:X_pose_measurement}) of a sensor $\{X\}$ is given at time $t_k$ and drive the full Jacobian of the measurement model in respect to the IMU poses (say $\{I_0\}, \hdots, \{I_n\}$) used in $n+1$ order interpolation and the spatiotemporal extrinsic calibration parameters ($\R{I}{X}$, $\p{I}{X}$, and ${}^X t_I$).
The measurement model and the interpolation can be written as \citep{Eckenhoff2021TRO}:
\begin{align*}
    \mathbf{z}_{X_k}
    &=
    \begin{bmatrix}
        \ang{E}{X_{k}} \\
        \p{X_{k}}{E}
    \end{bmatrix}
    =
    \begin{bmatrix}
        \Log{\R{I}{X}\R{E}{I_{k}}}\\
        \p{I_{k}}{E} + \R{I_{k}}{E}\p{X}{I}
    \end{bmatrix} + \mathbf{n}_{X_k}\\
\R{E}{I_k} &= \Exp{\sum_{i = 1}^{n} \mathbf{a}_i \Delta t_k^i} \R{E}{I_0} \\
    \p{I_k}{E} &= \p{I_0}{E} + \sum_{i = 1}^{n} \mathbf{b}_i\Delta t_k^i
\end{align*}
where $\Delta t_k = t_k + {}^{X}t_{I} - t_0$,
Finally, we apply the chain rule to get the full Jacobian matrix $\HA{X,k}$:
\begin{align*}
    \HA{X,k} &= \frac{\partial \mathbf{z}_{X_k}}{\partial \x{k}} \\
&= 
\setlength\arraycolsep{1.5pt}
    \begin{bmatrix}
       \cdots & \HA{1} & \O{} & \cdots & \HA{2} & \O{} &\cdots & \HA{3} & \O{} & \HA{4} & \cdots\\
       \cdots & \undermat{\{I_i\}}{\HA{5} & \HA{6}} &  \cdots & \undermat{\{I_{0}\}}{\HA{7} & \HA{8}} & \cdots & \undermat{\R{I}{X}}{\HA{9}} & \undermat{\p{I}{X}}{\HA{10}} & \undermat{\t{I}{X}}{\HA{11}} & \cdots
    \end{bmatrix}\\\\
\HA{1} &= \frac{\partial \ang{E}{X_{k}}}{\partial \ang{E}{I_{k}}} \frac{\partial \ang{E}{I_{k}}}{\partial \ang{E}{I_{i}}} = -\R{I}{X}\Jl{\mathbf{M}_{t}\mathbf{a}}\mathbf{M}_{t}\mathbf{V}^{-1}\mathbf{J_{A,0}} \\ 
    \HA{2} &= \frac{\partial \ang{E}{X_{k}}}{\partial \ang{E}{I_{k}}} \frac{\partial \ang{E}{I_{k}}}{\partial \ang{E}{I_{0}}} \\ 
           &= -\R{I}{X}(\Jl{\mathbf{M}_{t}\mathbf{a}}\mathbf{M}_{t}\mathbf{V}^{-1}\mathbf{J_{A,i}} + \Exp{\mathbf{M}_{t}\mathbf{a}})\\ 
    \HA{3} &= \frac{\partial \ang{E}{X_{k}}}{\partial \ang{I}{X}} = \mathbf{I}_3 \\ 
    \HA{4} &= \frac{\partial \ang{E}{X_{k}}}{\partial \ang{E}{I_{k}}} \frac{\partial \ang{E}{I_{k}}}{\partial {}^At_I} = -\R{I}{X} \Jl{\mathbf{M}_{t} \mathbf{a}} \mathbf{M}_{dt} \mathbf{a}  \\
\HA{5} &= \frac{\partial \p{X_{k}}{E}}{\partial \ang{E}{I_{k}}} \frac{\partial \ang{E}{I_{k}}}{\partial \ang{E}{I_{k}}} = \R{I_{k}}{E} \skw{\p{X}{I}} \Jl{\mathbf{M}_{t}\mathbf{a}}\mathbf{M}_{t}\mathbf{V}^{-1}\mathbf{J_{A,0}} \\ 
    \HA{6} &= \frac{\partial \p{X_{k}}{E}}{\partial \p{I_{k}}{E}} \frac{\partial \p{I_{k}}{E}}{\partial \p{I_{k}}{E}} = \I{3} + \mathbf{M}_t \mathbf{V}^{-1} \mathbf{J_{B,0}}\\ 
    \HA{7} &= \frac{\partial \p{X_{k}}{E}}{\partial \ang{E}{I_{k}}} \frac{\partial \ang{E}{I_{k}}}{\partial \ang{E}{I_{k \shortminus 1}}} \\
           &= \R{I_{k}}{E} \skw{\p{X}{I}} (\Jl{\mathbf{M}_{t}\mathbf{a}}\mathbf{M}_{t}\mathbf{V}^{-1}\mathbf{J_{A,i}} + \Exp{\mathbf{M}_{t}\mathbf{a}}) \\ 
    \HA{8} &= \frac{\partial \p{X_{k}}{E}}{\partial \p{I_{k}}{E}} \frac{\partial \p{I_{k}}{E}}{\partial \p{I_{k \shortminus 1}}{E}} = \mathbf{M}_t \mathbf{V}^{-1} \mathbf{J_{B,i}}\\ 
    \HA{9} &= \frac{\partial \p{X_{k}}{E}}{\partial \ang{I}{X}} = \R{X_{k}}{E} \skw{\p{I}{X}} \\ 
    \HA{10} &= \frac{\partial \p{X_{k}}{E}}{\partial \p{I}{X}} = - \R{X_{k}}{E}\\ 
    \HA{11} &= \frac{\partial \p{X_{k}}{E}}{\partial \ang{E}{I_{k}}}\frac{\partial \ang{E}{I_{k}}}{\partial {}^At_I} + \frac{\partial \p{X_{k}}{E}}{\partial \p{I_{k}}{E}}\frac{\partial \p{I_{k}}{E}}{\partial {}^At_I} \\
    &=\R{I_{k}}{E} \skw{\p{X}{I}} \Jl{\mathbf{M}_{t} \mathbf{a}} \mathbf{M}_{dt} \mathbf{a} + \mathbf{M}_{dt} \mathbf{b} \\
\text{where} \\
    \mathbf{M}_{t} &= \begin{bmatrix} \Delta t_k \I{3} & \Delta t_k^2 \I{3} & \cdots & \Delta t_k^n \I{3} \end{bmatrix} \\
\mathbf{M}_{dt} &= \begin{bmatrix} \I{3} & 2\Delta t_k \I{3} & \cdots & n\Delta t_k^{n-1} \I{3} \end{bmatrix} \\
\mathbf{V} &= \begin{bmatrix}
         \Delta t_1  & \cdots & \Delta t_1^n \\
         \vdots & \ddots & \vdots \\
         \Delta t_n  & \cdots & \Delta t_n^n
    \end{bmatrix} \\
\mathbf{a} &= \begin{bmatrix} \mathbf{a}_1^\top & \mathbf{a}_2^\top & \cdots & \mathbf{a}_n^\top \end{bmatrix}^\top \\
\mathbf{b} &= \begin{bmatrix} \mathbf{b}_1^\top & \mathbf{b}_2^\top & \cdots & \mathbf{b}_n^\top \end{bmatrix}^\top \\
\mathbf{J_{A,0}} &= \begin{bmatrix}
         -\Jlinv{\Log{\R{E}{I_1}\R{I_0}{E}}} \\ \vdots  \\ -\Jlinv{\Log{\R{E}{I_n}\R{I_0}{E}}}
    \end{bmatrix} \\
\mathbf{J_{A,i}} &= \begin{bmatrix}
        \O{3} \\ \vdots \\ \Jlinv{\Log{\R{E}{I_i}\R{I_0}{E}}}\R{E}{I_i}\R{I_0}{E} \\ \vdots \\ \O{3}
    \end{bmatrix} \\
\mathbf{J_{B,0}} &= \begin{bmatrix}
         -\I{3} & \cdots & -\I{3}
    \end{bmatrix}^\top \\
\mathbf{J_{B,i}} &= \begin{bmatrix}
         \O{3} & \cdots & \I{3} & \cdots & \O{3}
    \end{bmatrix}^\top \\
\end{align*}

\end{document}